\documentclass{article} 

\usepackage{arxiv/arxiv}

\usepackage[dvipsnames]{xcolor}
\usepackage[utf8]{inputenc} 
\usepackage[T1]{fontenc}    
\usepackage{microtype}      
\usepackage{url}            
\usepackage{graphicx}
\usepackage{subcaption}
\usepackage{booktabs}       
\usepackage{multirow}
\usepackage{makecell}

\usepackage{floatrow}
\usepackage{algorithm}
\usepackage{algpseudocode}
\usepackage{hyperref}       
\usepackage{pifont}
\usepackage{enumitem}
\usepackage{nicefrac}       
\usepackage{caption}
\usepackage{subcaption}
\usepackage{tablefootnote}
\usepackage{soul}
\usepackage{wrapfig}
\usepackage{listings}
\usepackage{natbib}
\definecolor{codebg}{gray}{0.95}
\usepackage{tikz}
\usetikzlibrary{fit}
\usetikzlibrary{arrows.meta, positioning}
\tikzset{
  vtx/.style   = {circle, draw, inner sep=0pt, minimum size=5.4mm, font=\footnotesize},
  bnode/.style = {circle, draw, inner sep=0pt, minimum size=6.6mm, font=\footnotesize},
  term/.style  = {rectangle, draw, inner sep=0pt, minimum size=5.6mm, font=\footnotesize},
  lo/.style    = {dashed, -{Latex[length=1.5mm]}},
  hi/.style    = {-{Latex[length=1.5mm]}},
}

\usepackage{lib/researchpack}
\usepackage{lib/macros}

\hypersetup{
   colorlinks=true,
   linkcolor=MidnightBlue,
   citecolor=MidnightBlue,
   urlcolor=MidnightBlue
}

\usepackage[capitalize, nameinlink]{cleveref}
\usepackage{thmtools, thm-restate}
\declaretheorem[style=plain]{definition, theorem, proposition, lemma}

\theoremstyle{plain}

\newtheorem{corollary}{Corollary}

\newtheorem{example}{Example}

\newtheorem{assumption}{Assumption}

\crefname{algorithm}{Algorithm}{Alg.}
\crefname{definition}{Def.}{Defs.}
\Crefname{definition}{Def.}{Defs.}

\newif\ifshowcomments
\showcommentsfalse

\title{Beyond Trained Models: Compiling GNNs for a Sound Explainer Benchmark}

\author{
  Steve Azzolin\thanks{Equal contribution} \\ University of Trento \\ Trento, Italy \And
  Francesco Paolo Nerini\footnotemark[1]\\ Sapienza University \\ Rome, Italy \And
  Stefano Teso \\ University of Trento \\ Trento, Italy \And 
  Francesco Bonchi \\ Intesa Sanpaolo AI Research \\ Turin, Italy \And
  Bruno Lepri \\ Fondazione Bruno Kessler \\ Trento, Italy \And
  Andr\'e Panisson \\ Intesa Sanpaolo AI Research \\ Turin, Italy \And
  Andrea Passerini \\ University of Trento \\ Trento, Italy
}

\begin{document}

\maketitle

\begin{abstract}
    Explainers for Graph Neural Networks (GNNs) are commonly evaluated by their \textit{plausibility}, i.e., how well their explanations recover a predefined ground truth, such as a motif planted in the data.
    This protocol implicitly assumes that a GNN trained on such data relies on the intended motif.
    Although prior work has questioned this assumption, plausibility remains widespread.
    First, we show that the assumption is violated on several widely used benchmarks, where, e.g., degree statistics alone suffice to solve the task.
    Then, we remove this confounder by replacing training with \textit{compilation}.
    We achieve this by introducing $\mathsf{Gracr}$, the first compiler translating graded modal logic formulas into  GNN weights, yielding models that replicate the behaviour of the corresponding formulas.
    Since the behaviour of the model is now known by construction, we can define its ground truth explanation formally and compute it exactly.
    Building on this, we introduce $\mathsf{Gracr}\mathsf{Bench}$, a benchmark of compiled GNNs for the evaluation of explainers against this exact ground truth.
    Experiments on eleven explainers across six tasks show its effectiveness for fine-grained diagnostic evaluation: notably, we discover that most explainers are not robust to indirect influences or alternative implementations of the same formula.
    These results position $\mathsf{Gracr}\mathsf{Bench}$ as a novel, rigorous evaluation setting for graph post-hoc explainability.
\end{abstract}
\section{Introduction}
\label{sec:intro}

Explaining why a Machine Learning (ML) model produces a particular output is a core challenge for deploying these models in the real world.
Graph Neural Networks (GNNs) \citep{scarselli2008graph, bronstein2021gdl} are no exception, and a rich literature has emerged to develop methods for explaining such models \citep{yuan2022explainability, amara2022graphframex, longa2024explaining}.
One of the most widely used techniques for evaluating the \textit{quality} of GNN explainers is \textbf{\textit{plausibility}}\footnote{A.k.a. explanation accuracy \citep{agarwal2023evaluating}}, i.e., how well the explanation recovers a ``ground-truth'' pattern. It follows this blueprint: \textit{First}, identify or create a dataset with a known discriminative pattern, e.g., graphs containing a house-like motif versus graphs containing a cycle-like motif; \textit{then}, train a GNN on this dataset; \textit{last}, evaluate how well the explanations produced by the explainer under evaluation recover the known discriminative pattern.

Previous work has already identified shortcomings of this blueprint \citep{faber2021comparing, himmelhuber2021demystifying, agarwal2023evaluating, fontanesi2024xai}.
Yet, despite these findings, our extensive literature review (see \cref{fig:eval-papers}) shows that plausibility remains widely used: strikingly, roughly $25\%$ of all $155$ papers we surveyed adopt plausibility as the \textit{only} criterion for evaluating explanations. 
The fundamental issue is that the trained GNN may learn other, potentially spurious patterns present in the data that differ from the intended discriminative pattern, as illustrated by the example in \cref{fig:counterpoint-example}. 
Consequently, we scrutinize the definition and usage of plausibility closely.

\begin{wrapfigure}[25]{r}{0.4\textwidth}
    \centering
    \includegraphics[width=\linewidth]{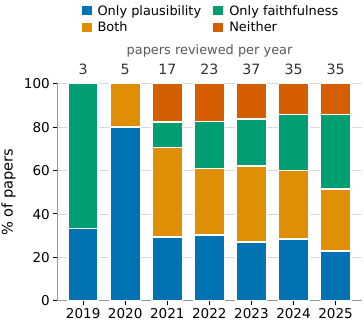}
    \caption{
        We manually reviewed $155$ published GNN explainability papers ($2019$--$2025$), recording how they evaluate explanation quality. \textbf{Plausibility remains a core criterion}, used alone in at least $20\%$ of papers each year. Numbers above bars indicate the number of papers analysed; details in \cref{app:survey}.
    }
    \label{fig:eval-papers}
\end{wrapfigure}

\textbf{In our work}, we first extend previous findings by identifying specific, widely used datasets where the above shortcoming is likely to occur, and then propose a new strategy for evaluating plausibility that replaces GNN training with a \textit{compilation step}.
%
%
We begin by identifying a pattern of interest, represented as a Boolean logic formula. 
Then, we introduce a compiler to transform this logic formula into a GNN that, by construction, shares the same behaviour and explanations as the formula.
%
This raises the question of what the ``ground truth'' explanation of a formula actually is.
Common practice answers with a designated motif, but already for simple patterns -- such as \textit{a node having more red than blue neighbours} -- several subgraphs justify the prediction equally well. 
%
We therefore need a notion of ground truth that generalizes beyond simple motifs and accounts for alternative explanations.
To do so, we borrow from formal XAI the definition of \textit{prime implicant} \citep{darwiche2020reasons, marques2022delivering} -- \ie a minimal subgraph that alone suffices to determine the prediction -- and take the union of all prime implicants as the ground truth explanation.
Finally, the explainer is evaluated based on how well its explanations recover such ground truth.
%
%
Crucially, the compilation step allows us to control \textit{exactly} the behaviour of the GNN, as it is equivalent to the logic formula from which it was derived:
effects that are usually inextricably confounded on a trained model can be studied in controlled experiments, allowing specific hypotheses to be tested rigorously.
%

Summarizing, our contributions are as follows:
\begin{itemize}[leftmargin=1.25em]
    \item We show that popular benchmarks are not a reliable test bed for evaluating plausibility, as the GNN over which explainers are evaluated may not behave as expected.

    \item We propose a compiler capable of translating a logic formula into the weights of a GNN. This yields GNNs where the exact behaviour is known and controllable, addressing the issue above.

    \item To derive a principled notion of what characterizes a ground truth explanation, we build on formal XAI and adopt \textit{prime implicants}, and develop an efficient procedure to extract them.
    %

    \item We exploit this compiler to propose a new, sound benchmark for evaluating plausibility. Our experiments recover previously known findings while uncovering new ones, exposing limitations of explainers in understudied scenarios beyond motif-based tasks.
\end{itemize}

\begin{figure}
    \centering
    \includegraphics[width=1.\linewidth]{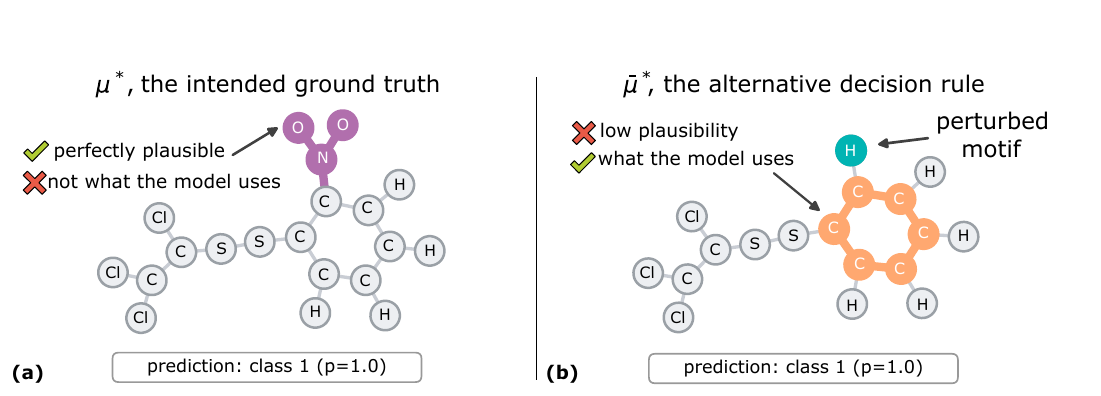}
    \caption{
    \textbf{Evaluating the plausibility of explanations is misleading when the model is not behaving as expected.} 
    Example of a model trained on \Mutagenicity, where plausibility penalizes well-behaved explainers. 
    a) A correctly predicted mutagenic molecule, with the intended ground truth $\mathrm{NO_2}$ pattern highlighted as the explanation $\mu^*$.
    b) The same molecule, where we substituted the original $\mu^*$ with a single hydrogen atom. Crucially, the model prediction does not change, suggesting it has learned an alternative decision rule $\bar{\mu}^*$ (e.g., the carbon ring).
    As a result, a good explainer correctly providing $\bar{\mu}^*$as the explanation would be penalized in favour of a bad one providing $\mu^*$.
    In this work, we fix this issue by building GNNs whose behaviour is known, as described in \cref{sec:compiled-gnn}.
    }
    \label{fig:counterpoint-example}
\end{figure}

\section{Background}

%
%
%
\textbf{Notation.}
A \textit{\textbf{graph}} is a tuple $G=(\calV,\calE)$ with nodes $\calV$ and directed edges $\calE \subseteq \calV \times \calV$, where an edge $e$ is an ordered pair of nodes $e=(u,v)$.
%
%
Throughout, a \textit{\textbf{subgraph}} $R\subseteq G$ is a \textit{spanning} subgraph obtained from $G$ by retaining all nodes in $\calV$ and a subset of its edges. 
We write $E(u,v)$ for the predicate holding iff $(u,v) \in \calE$, and $\calN(v)$ for the neighbours of node $v$.
We focus on binary classification tasks: a \textit{\textbf{graph classifier}} is a function $g$ which maps an input graph into $\{0,1\}$; a \textit{\textbf{node classifier}}, instead, maps each node of a graph into $\{0,1\}$.

\textbf{Problem statement.}
A \textbf{\textit{graph explainer}} is a function $F$ mapping a classifier and an input graph to an \textbf{\textit{explanation}}, i.e., a vector of edge importance scores $F(g, G) = \mu \in \bbR^{|\calE|}$, with $\mu_e$ the score of edge $e$. For node classification, $F$ also takes the node to explain.
%
%
The \textbf{\textit{ground truth explanation}} is a binary mask $\mu^* \in \{0,1\}^{|\calE|}$.
%
Finally, a \textbf{\textit{plausibility}} score is any scoring function -- like accuracy or variants thereof -- between the predicted explanation $\mu$ and the ground truth $\mu^*$, i.e., $\mathrm{sim}(\mu, \mu^*) \in [0,1]$ \citep{yuan2022explainability, agarwal2023evaluating}.
%
%
The choice of $\mu^*$ is crucial but often heuristic.
Next, we describe cases in which an inappropriate choice can make plausibility misleading.

\section{Issues with previous evaluations of plausibility}
\label{sec:issues}



Evaluating the plausibility of an explainer typically boils down to the following blueprint \citep{ agarwal2023evaluating}. First, identify or create a dataset $\calD$ of graphs containing a known discriminative pattern, which acts as the ground truth explanation $\mu^*$.
For example, in the \Mutagenicity dataset \citep{kazius2005derivation}, the presence of a $\mathrm{NO_2}$ group is usually associated with the mutagenicity that distinguishes the two classes and therefore serves as their ground-truth explanation \citep{ying2019gnnexplainer, luo2020parameterized}.
%
Then, the plausibility score between the post-hoc explanation $\mu$ and $\mu^*$ is computed.

As pointed out by previous work \citep{faber2021comparing, yuan2022explainability, agarwal2023evaluating, monti2024a}, this blueprint makes the following \textit{implicit} assumption, which may not hold in practice.

\begin{assumption}[informal]
\label{ass:shortcut}
    A GNN trained to achieve high accuracy on a dataset with a known ground-truth explanation $\mu^*$ will learn only the discriminative pattern represented by $\mu^*$.
\end{assumption}

In fact, ML models are known to be susceptible to learning shortcuts in the data \citep{geirhos2020shortcut}, which invalidates \cref{ass:shortcut}.
Arguably, explainers are most useful exactly when they can reveal that the model is not behaving as expected \citep{lapuschkin2019unmasking, teso2023leveraging}.
Nonetheless, \textbf{\textit{evaluating plausibility when \cref{ass:shortcut} does not hold can be misleading}}, as illustrated in \cref{fig:counterpoint-example}:
If \cref{ass:shortcut} held and the model had learned to distinguish molecule mutagenicity based solely on the ground truth $\mathrm{NO_2}$ pattern, we would expect its prediction to change when this pattern is removed, provided that the resulting molecule remains sufficiently close to the data distribution.
However, as exemplified in \cref{fig:counterpoint-example} and analysed in \cref{appx:chemical-datasets}, removing the $\mathrm{NO_2}$ pattern often leaves the prediction of the model unchanged, suggesting that \cref{ass:shortcut} does not hold, and that the model learned an alternative pattern.
%
In this case, a well-behaved explainer outputting the alternative pattern would be penalized in favour of an explainer outputting ${\mu^*}$. 

Crucially, the shortcoming highlighted in \cref{fig:counterpoint-example} is likely to appear in both synthetic and real datasets. In \cref{appx:plaus-data-analyses}, we identify several of these cases on some of the most popular graph explainability benchmarks, extending \citet{fontanesi2024xai}'s analysis to previously unexamined datasets.
Specifically, we discover novel shortcuts in \BASHAPES \citep{ying2019gnnexplainer} and \BAMULTISHAPES \citep{azzolin2022global}, where the task can be solved by averaging node degrees rather than using the intended motif. 
Additionally, on molecular benchmarks with explicit motif rules like \Benzene, \AlkaneCarbonyl, and \FluorideCarbonyl \citep{sanchez2020evaluating}, we observe that GNNs consistently fail to learn the correct patterns exactly, resorting instead to spurious correlations.
Notably, we argue that this issue cannot be solved by dataset design alone.
\citet{faber2021comparing} and \citet{agarwal2023evaluating} propose handcrafted datasets meant to be more reliable, e.g., by generating unambiguous ground truth and by requiring perfect model accuracy.
Yet, they do not offer guarantees that a GNN trained on them implements a uniquely identified behaviour, 
and verifying a posteriori that the model relies on the intended pattern may itself be as hard as explaining the model.
Hence, rather than devising better datasets, we provide guarantees on the model itself in \cref{sec:method}.

\textbf{Evaluating the faithfulness of explanations.}
An orthogonal and widely used (c.f. \cref{fig:eval-papers}) approach to evaluating explanation quality is faithfulness, assessing how well an explanation reflects the model behaviour \citep{agarwal2024faithfulness}. 
Nonetheless, faithfulness comes with its own limitations:
%
there is no universal definition, available metrics often conflict \citep{azzolin2025reconsidering, miro2025comprehensive},
%
%
and these metrics are frequently validated by their correlation with human-defined explanations \citep{fang2023evaluating, zheng2023robust} -- the same issue that plausibility suffers from. 
%
Despite these limitations, faithfulness remains the only viable option to assess a given explanation in downstream applications, and the evaluation setting we will propose in \cref{sec:experiments} offers a reliable testbed to validate faithfulness metrics as well, which we leave to future work. 

%
In the remainder of the paper, we address the issue of plausibility by building GNNs whose behaviour is known by construction. This is achieved by replacing GNN training with \textit{compilation}.
\section{From logic formulas to ground truth explanations}
\label{sec:method}

\subsection{\METHODNAME: Compiling a logic formula into a GNN}
\label{sec:compiled-gnn}

Rather than hoping that training has induced the intended behaviour, we construct GNNs whose behaviour can be controlled exactly.
%
%
We obtain it by \textit{compiling}, in the spirit of \citet{lindner2023tracr}, a target classifier expressed in logic directly into the weights of a standard message-passing architecture, so that the two agree on every graph.
Our construction follows the constructive expressiveness results of \citet[Prop. 4.1]{barcelo2020logical}, which we extend to cover also graph-level tasks.

\textbf{Preliminaries.}
We will consider node features drawn from a finite list $\calC$ of colours, which act as atomic predicates over nodes: a node $u$ is described by the indicator vector $\vc_u\in\{0,1\}^{|\calC|}$, and $\mathrm{Col}(u)=1$ iff $(\vc_u)_{\mathrm{Col}}=1$, with $\mathrm{Col}$ any colour drawn from $\calC$.
We compile classifiers expressible in \textit{graded modal logic} \citep{de2000note}, the fragment of first-order logic generated by
\[
\label{eq:gml}
    \varphi(u) \;::=\; \top \;\mid\; \bot \;\mid\; \mathrm{Col}(u) \;\mid\; \neg\varphi(u) \;\mid\; \varphi(u)\wedge\psi(u) \;\mid\; \exists^{\ge N} v. \big(E(u,v)\wedge\varphi(v)\big), 
\]
where $\varphi$ and $\psi$ are themselves graded modal logic formulas defined over a free variable.
%
The last construct holds at $u$ when at least $N$ neighbours satisfy $\varphi$, and will be shortened to $\Diamond_{\ge N}\,\varphi$. 
We reserve $\Phi$ for the classifier we wish to compile, keeping $\varphi$ and $\psi$ for arbitrary formulas and, later, $\varphi_\ell$ for the subformulas of $\Phi$.
Throughout, we write $(G,u) \models \varphi$ to say that a formula $\varphi$ is satisfied at the node $u$ of $G$.
Other connectives, like $\vee$ or $\Diamond_{=N}$, are obtained by manipulations of \cref{eq:gml} as described in \cref{appx:desugaring}.
%
A node classifier is then a formula $\Phi(u)$ with one free variable, predicting $g(G,u)\defeq\Ind{(G,u) \models \Phi}$ at each node $u$ of $G$.
%
A graph classifier, instead, closes a node predicate $\varphi$ with a counting quantifier, $\Phi = \exists^{[\mathrm{lo}, \mathrm{hi}]}u.\,\varphi(u)$, and thus asks how many nodes of $G$ satisfy $\varphi$, predicting $g(G)\defeq\Ind{G\models\Phi}$. 
For instance, $\exists^{[N, \inf]}$ holds when at least $N$ of them do.

\textbf{The GNN architecture.}
We compile into an aggregate-combine GNN \citep{barcelo2020logical} with sum aggregation, per-layer weights, and truncated \textsc{ReLU} activation $\sigma(z)\defeq\min(\max(0,z),1)$,
\[
\label{eq:acgnn}
    \vh^{(\ell)}_u \;=\; \sigma\Big(\vh^{(\ell-1)}_u C^{(\ell)} \;+\; \sum_{v\in\calN(u)}\vh^{(\ell-1)}_v A^{(\ell)} \;+\; b^{(\ell)}\Big),
    \qquad \ell=1,\dots,L
\]
with $C^{(\ell)},A^{(\ell)}\in\bbR^{d\times d}$ and $b^{(\ell)}\in\bbR^{d}$.
Compilation is possible for this fragment precisely because every graded modal logic classifier is captured by a GNN \citep{barcelo2020logical}.

\textbf{GRAph Neural Network CompileR (\METHODNAME).}
Let $\mathrm{sub}(\Phi)=[\varphi_1,\dots,\varphi_d]$ be the distinct subformulas of $\Phi$, ordered so that each one precedes those containing it and $\varphi_d=\Phi$, and let $\mathrm{depth}(\varphi)$ denote the number of nested connectives in $\varphi$, so that atoms have depth $0$ and $\mathrm{depth}(\Phi)=L$. 
Our construction reserves one coordinate $i \in \{1,\dots,d\}$ of the hidden space to encode the truth value of each subformula $\varphi_i$, maintaining the invariant for successive layers, as follows:
\[
\label{eq:invariant}
    \big(\vh^{(\ell)}_u\big)_i = 1 \iff (G,u) \models \varphi_i ,
    \qquad\text{for every } \mathrm{depth}(\varphi_i) \le \ell \le L.
\]

This is achieved via three components.
\textit{First,} we initialise $\vh^{(0)}_u$ from the node features $\vc_u$, setting coordinate $i$ to $1$ exactly when $\varphi_i$ is an atom satisfied at $u$.
\textit{Second,} layer $\ell$ carries forward the subformulas already settled by setting $C^{(\ell)}_{ii}=1$ whenever $\mathrm{depth}(\varphi_i)<\ell$, and evaluates each subformula $\varphi_i$ with $\mathrm{depth}(\varphi_i)=\ell$ according to \cref{tab:weights}, all other entries being zero, with $j,k < i$.
\textit{Last,} after $L$ layers a node classifier reads off coordinate $d$, while a graph classifier $\exists^{[\mathrm{lo}, \mathrm{hi}]}u.\,\varphi(u)$ first sum-pools that coordinate into the count $m=|\{u \in \calV: (G,u) \models \varphi\}|$, and then thresholds it \eg $\sigma(m-N+1)$ for $\exists^{[\mathrm{N}, \infty]}$.
%
%
The correctness of this construction follows from \citet{barcelo2020logical}.

\begin{wraptable}[8]{r}{0.45\linewidth}
    \centering
    \caption{Weights realising each connective.}
    \label{tab:weights}
    \footnotesize
    \scalebox{0.9}{
    \begin{tabular}{lll}
        \toprule
        Subformula & \multicolumn{2}{c}{Compiled weights} \\
        \midrule
        $\varphi_i=\neg\varphi_k$             & $C^{(\ell)}_{ki}=-1$                & $b^{(\ell)}_i = 1$ \\
        $\varphi_i = \varphi_j \wedge \varphi_k$  & $C^{(\ell)}_{ji}=C^{(\ell)}_{ki}=1$ & $b^{(\ell)}_i = -1$ \\
        $\varphi_i = \Diamond_{\ge N}\varphi_k$ & $A^{(\ell)}_{ki}=1$                 & $b^{(\ell)}_i = 1-N$ \\
        \bottomrule
    \end{tabular}
    }
\end{wraptable}

Further implementation details are provided in \cref{appx:compiler-details}, along with a practical example of a compiled GNN and its instantiation using our Python codebase in \cref{appx:example-compiled}. 
Taken together, the construction above allows us to build a GNN that encodes a graded modal logic statement.
This allows practitioners to have full knowledge of the inner mechanisms of the GNN to be explained, constituting a powerful laboratory for research in graph explainability.

\subsection{Defining ground truth explanations formally}
\label{sec:gt}

Having obtained a GNN reliably implementing a target formula, the next stage is to define a suitable notion of ground truth explanations, \ie what we consider as the \textit{desired} explanation.
In previous studies, this usually corresponds to the implanted target motif of interest, e.g., the house motif in \BATWOMOTIF \citep{luo2020parameterized}.
Nonetheless, the case where ground truth explanations are not \textit{unique}\footnote{
E.g., counting tasks like \textit{a node has more red than blue neighbours}: with $n$ red and $m < n$ blue neighbours, any $m+1$ red edges suffice and no fewer do, giving $\binom{n}{m+1}$ valid explanations.} is often overlooked, and a precise definition of what a ground truth explanation looks like is often left to task-specific heuristics \citep{schlichtkrull2021interpreting}.
%
In this work, we will build on the formal notion of graph explanation introduced in \citet{azzolin2025formal}, described next:
\begin{definition}(Prime Implicant explanation)
\label{def:piexpl}
   Let $g$ be a classifier and $G$ be an instance with predicted label $g(G)$, then $R\subseteq G$ is a Prime Implicant explanation (PI) for $g(G)$, written as $PI(g, G)$, if:
    \begin{enumerate}            
        \item (\textit{sufficiency}) \label{PI2} For all $R'$, such that $R \subseteq R' \subseteq G$, we have that ${g(G) = g(R')}$.
        
        \item (\textit{minimality}) \label{PI3} No other $R' \subset R$ satisfies \eqref{PI2}.
    \end{enumerate}
    %
    PIs are not unique,  and we denote the set containing the Union of all PIs (\UPI) for $g(G)$ as $\UPI(g, G) \defeq \cup_i PI(g, G)_i$.
    We will shorten this to $\UPI$ when referring to a generic set of all PIs.
    For node-level tasks, we will write $PI(g, G, u)$ to indicate the PI explanation for $g(G, u)$. 
\end{definition}

PIs are a reasonable target for the following reasons:
\textit{First,} they are a crisp, rigorous solution for explaining a wide range of models, like logic classifiers and tabular models \citep{shih2018symbolicapproach, marques2022delivering, marques2023logic};
\textit{Second,} they were shown to encode a more informative and trustworthy notion of explanation for graph tasks \citep{weinbauer2025prime, azzolin2026degenerate};
\textit{Third,} they naturally handle alternative explanations by collecting the set of all possible PIs \citep{darwiche2020reasons, darwiche2023complete};
\textit{Last,} they were shown to recover the implanted target motif for motif-based graph tasks \citep{azzolin2025formal}.
%
Then, our definition of ground truth explanation is:
\begin{definition}(Ground truth explanation)
\label{def:gt}
    Let $g$ be a classifier and $G$ be an instance with predicted label $g(G)$, then the ground truth explanation $\mu^*$ for $g(G)$ is $\UPI(g, G)$, such that $\mu^*_e = \Ind{e \in \UPI}$.
\end{definition}

The definition above indicates the edges we consider to be relevant for a given task, and reconciles with existing definitions of feature relevancy from the formal explainability literature for tabular data \citep{huang2023feature}.
Having established a precise notion of ground truth explanation, we describe in the next section how $\UPI$ can be computed in practice.



\subsection{Computing $\UPI$ for a compiled GNN}
\label{sec:upi}

Computing PIs is a known computationally hard problem, with a few exceptions \citep{marquessilva20naivebayes, bassan2026unifying}.
This complexity arises because for every candidate explanation $R$, one has to query the black-box model with \textit{every} extension $R \subseteq R' \subseteq G$, and then iterate to find a subset-minimal candidate (cf. \cref{def:piexpl}).
Thanks to our compilation step, however, we can avoid the cost of invoking the black-box model by relying on the symbolic representation from which the GNN was derived.
%
Next, we will describe a knowledge compilation algorithm relying on Reduced Ordered Binary Decision Diagram (\ROBDD) \citep{lee1959representation, bryant1992symbolic} that allows an efficient computation of $\UPI$.
Although this is not a novel technique in itself \citep{shih2018symbolicapproach, darwiche2023complete}, it is to the best of our knowledge the first time it is applied in the context of graph explainability.
%


At a high level, an \ROBDD is a tractable representation of a Boolean function $f:\{0,1\}^{n} \rightarrow \{0,1\}$ over a set of Boolean variables $X = \{x_1, \ldots ,x_n\}$.
This is achieved by building a directed acyclic graph where nodes represent the Boolean variables, and edges represent Boolean assignments.
This graph-based representation of a Boolean function supports efficient operations and queries on the function, including conjoining and disjoining \ROBDDs, and this tractability will be critical for efficiently extracting \UPI (see \citet{meinel1998obdds} for more on \ROBDDs).
Due to space constraints, we defer a more detailed introduction to \ROBDDs to \cref{appx:robdd-example}.

\textbf{Symbolic encoding of Boolean formulas over graphs.}
In our setting, given a target graph $G$ and a Boolean function $f$ representing a logic formula defined over $G$, we instantiate an \ROBDD with one node for each directed edge of $G$, such that $X=\{x_{e}\}_{e \in \calE}$.
Variables are ordered by edge index, and node and edge features are kept constant.
A subgraph $R \subseteq G$ is therefore identified with the assignment assigning $x_e=1$ when $e\in R$ and $x_e=0$ otherwise, representing a bijection between the $2^{|\calE|}$ boolean assignments of $X$ and the $2^{|\calE|}$ subgraphs present in $G$.
%
Thanks to this bijection, \textbf{\textit{an \ROBDD compactly encodes a Boolean function over all possible subgraphs of $G$}}.
Hence, we will write the \ROBDD as a function over Boolean variables $X$ or subgraphs $R$ interchangeably.
We report an example of such a construction in \cref{appx:robdd-example}.

Having established how input graphs are encoded in Boolean variables, the construction of the \ROBDD follows standard construction and reduction algorithms, for which we refer the reader to \citet{bryant1986graph} and \citet{wegener2000branching}. We refer to this procedure as $\mathrm{BUILD}$, described in \cref{appx:build-alg}.

So far, $f$ has been a generic function mapping subgraphs to $\{0,1\}$, hence a binary classifier.
In our experiments, $f$ is given by the modal logic sentence from which the GNN to be explained is derived, as discussed in \cref{sec:compiled-gnn}.
Consequently, the procedure described next for extracting \UPI from $f$ yields a valid ground truth explanation (cf. \cref{def:gt}) of the GNN we will explain.


\begin{figure}[t]
    \centering
    \begin{minipage}[t]{0.315\textwidth}
    \begin{algorithm}[H]\footnotesize
    \caption{\proc{UnionPI}} 
    \label{alg:main}
    \begin{algorithmic}[1]
    \Require classifier $\Phi$; graph $G$
    \Ensure $\mu^*_e=\Ind{e\in\UPI}$
    \State $f \gets \proc{Build}(\Phi,G)$
    \State $\hat y \gets f(G)$
    \State $f \gets f$ \textbf{if} $\hat y=1$ \textbf{else} $\neg f$ \label{line:ok}
    \State $\fext \gets \proc{Extend}(f)$ \label{line:call2}
    \State $\mu^* \gets \proc{Extract}(\fext)$ \label{line:call3}
    \State \Return $\mu^*$
    \end{algorithmic}
    \end{algorithm}
    \end{minipage}\hfill
    \begin{minipage}[t]{0.315\textwidth}
    \begin{algorithm}[H]\footnotesize
    \caption{\proc{Extend}$(f)$}
    \label{alg:Extend}
    \begin{algorithmic}[1]
    \Require $f$ is an \ROBDD
    \Ensure $\fext$ maps $R$ to $1$ iff $R$ is sufficient (Def. \ref{def:piexpl})
      \State $\fext \gets f$
      \For{$e = 1,\dots,|\calE|$}
        \State $\fext \gets \fext \wedge \cof{\fext}{e}{1}$ \label{line:step}
      \EndFor
    \State \Return $\fext$
    \end{algorithmic}
    \end{algorithm}
    \end{minipage}\hfill
    \begin{minipage}[t]{0.315\textwidth}
    \begin{algorithm}[H]\footnotesize
    \caption{\proc{Extract}$(\fext)$}
    \label{alg:extract}
    \begin{algorithmic}[1]
    \Require $\fext$ as given by Alg. \ref{alg:Extend}
    \Ensure mask $\mu^*\in\{0,1\}^{|\calE|}$
    \For{$e = 1,\dots,|\calE|$}
      \State $\kappa \gets \fext \wedge x_e \wedge \neg\cof{\fext}{e}{0}$ \label{line:crit}
      \State $\mu^*_e \gets \Ind{\kappa \neq 0}$ \label{line:sat}
    \EndFor
    \State \Return $\mu^*$
    \end{algorithmic}
    \end{algorithm}
    \end{minipage}
\end{figure}

\textbf{Extracting \UPI from an \ROBDD.}
\proc{UnionPI}, described in \cref{alg:main}, presents the procedure for extracting \UPI from an input graph $G$ for the classifier encoded by an \ROBDD.
Once the \ROBDD has been constructed, \proc{UnionPI} invokes two routines:

\proc{Extend} (\cref{alg:Extend}) is responsible for finding those subgraphs that are sufficient (cf. \cref{def:piexpl}), albeit possibly not minimal.
It does so by modifying the function $f$ that the \ROBDD encodes from
\[
\textstyle
f: R \mapsto f(R)
\qquad\text{to}\qquad
\fext: R \mapsto f(R) \land \bigwedge_{R \subset R' \subseteq G} f(R')
\]
%
by recursively conjoining $\fext$ with $\cof{\fext}{e}{1}$, which corresponds to a copy of $\fext$ whose value at $R$ is the value of $\fext$ at $R \cup \{e\}$.
After unrolling the recursion, each subgraph $R$ will be conjoined to every possible extension of itself inside $G$, and the truth value at $R$ will be true iff $R$ satisfies the sufficiency property.
We show an example of how this recursion unrolls in Appendix \ref{appx:example-unrolling}.

\proc{Extract} (\cref{alg:extract}), instead, takes as input the \ROBDD as given by \proc{Extend} and returns a mask $\mu^*$ over the edges of $G$ such that $\mu^*_e = \Ind{e \in \UPI(f,G)}$.
For each edge, it asks whether there is a sufficient edge set that contains it ($\fext \land x_e$) and that stops being sufficient the moment the edge is removed ($\dots \land \neg\cof{\fext}{e}{0}$). If the resulting formula admits a witness, i.e., $\kappa \ne 0$, then that edge is indispensable to at least one prime implicant, and therefore belongs in \UPI.
This is formalized next:

\begin{restatable}{theorem}{algupi}
\label{thm:extract}
    Let $\Phi$ be a classifier and $G$ a graph such that $G \models \Phi$, let $f=\proc{Build}(\Phi,G)$ be its \ROBDD encoding, and let $\fext$ be the output of $\proc{Extend}(f)$ so that $\fext$ maps a subgraph $R$ to $1$ iff $R$ is sufficient.
    Then, for any edge $e$ in $G$, it holds that
    $  ( \fext \wedge x_e \wedge \neg\cof{\fext}{e}{0} \; \ne 0 ) \iff e \in \UPI(f, G). $
\end{restatable}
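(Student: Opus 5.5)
The plan is to reduce both sides of the equivalence to statements about the Boolean function $\fext$ and then use that sufficiency is upward-closed. First, unfold the left-hand side. The cofactor $\cof{\fext}{e}{0}$ evaluated at an assignment $R$ returns $\fext(R\setminus\{e\})$. So $\fext\wedge x_e\wedge\neg\cof{\fext}{e}{0}\neq 0$ holds iff there is a subgraph $R\subseteq G$ with $e\in R$, $R$ sufficient, and $R\setminus\{e\}$ not sufficient. Here we use the hypothesis that $\fext$ characterises sufficiency, as guaranteed by \proc{Extend} in the statement. Call such an $R$ a \emph{witness} for $e$.

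Next, record the key structural fact: sufficiency is monotone. If $R$ is sufficient and $R\subseteq S\subseteq G$, then every $R'$ with $S\subseteq R'\subseteq G$ also satisfies $R\subseteq R'$, so $f(R')=g(G)$. Hence $S$ is sufficient. In addition, $G$ itself is sufficient, since $G\models\Phi$ and $f(G)=1$. By \cref{def:piexpl}, a PI is therefore exactly a subset-minimal sufficient set, and by monotonicity this is equivalent to the following: $P$ is sufficient and $P\setminus\{e'\}$ is insufficient for every $e'\in P$.

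($\Leftarrow$) Suppose $e\in P$ for some PI $P$. Then $P$ is sufficient, $e\in P$, and minimality gives that $P\setminus\{e\}\subset P$ is not sufficient. So $R=P$ is a witness, and $\kappa\neq 0$.

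($\Rightarrow$) This is the direction I expect to be the main obstacle. A witness $R$ need not itself be minimal, and naively shrinking it could discard $e$. The approach is to shrink $R$ greedily while protecting $e$: repeatedly remove any edge $e'\neq e$ from the current set $S$ whenever $S\setminus\{e'\}$ is still sufficient, and stop at a set $P$ where no such removal is possible. Throughout this process we keep the invariant $e\in S$ and the invariant that $S\setminus\{e\}$ is insufficient. The second invariant holds because $S\setminus\{e\}\subseteq R\setminus\{e\}$, and by monotonicity (contrapositive) any subset of an insufficient set is insufficient. At termination $P$ is sufficient and contains $e$. Removing $e$ breaks sufficiency by the invariant, and removing any other $e'$ breaks sufficiency by the stopping rule. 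By the monotonicity characterisation, $P$ is therefore a PI, so $e\in\UPI(f,G)$. Finiteness of $\calE$ guarantees termination, which completes the equivalence.
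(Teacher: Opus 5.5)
Your proof is correct and follows essentially the same route as the paper. For ($\Leftarrow$) the PI itself is the witness, and for ($\Rightarrow$) the witness $R$ is shrunk to a minimal sufficient subset while upward-closure of sufficiency and the insufficiency of $R\setminus\{e\}$ keep $e$ in it. The paper instead takes a minimum-cardinality sufficient subset of $R$ and shows by contradiction that it contains $e$; this makes explicit that your protection of $e$ is automatic, since every sufficient subset of $R$ must contain $e$.
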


This result also applies to negative instances, as detailed in the proof; see \cref{appx:proof-extract}.
%
Note that ad hoc, logic-specific knowledge compilation approaches are available \citep{ignatiev2019abduction, marquessilva20naivebayes, huang2021efficient}, but we wanted a general approach that generalizes easily beyond the logic fragment we are currently considering.
For completeness, \cref{appx:complexity} provides a computational complexity analysis, and \cref{appx:cost-vs-explainers} compares the runtime of \cref{alg:main} with that of the explainers evaluated next, showing that the runtime of our algorithm is negligible.

\section{\BENCHNAME: Benchmarking explainers on compiled GNNs}
\label{sec:experiments}

Having obtained a GNN implementing a target behaviour, we now use compiled models as a testbed for graph explainers.
%
Our pipeline is as follows: we write a formula, compile it into the weights of a GNN using \METHODNAME, generate a dataset of graphs whose label is consistent with the formula, compute the ground truth explanation \UPI with \cref{alg:main}, and compute the plausibility score of explainers against the ground truth.
%
No training is involved, so we are guaranteed the GNN reflects the ground truth we evaluate the explainer against, thus avoiding the confounder identified in \cref{sec:issues}.
Formulas and datasets are chosen to highlight specific shortcomings of explainers, as will be detailed later.

\textbf{Explainers.}
We benchmark eleven explainers spanning several methodological categories. These include
four \textit{gradient-based} attributions: \saliency \citep{pope2019explainability}, deconvolution \citep[\deconv,][]{zeiler2014visualizing}, guided backpropagation \citep[\guidedbp,][] {springenberg2014striving}, and integrated gradients \citep[\intgrad,][]{sundararajan2017axiomatic}; three \textit{perturbation-based} methods: \gnnexplainer \citep{ying2019gnnexplainer}, \graphmask \citep{schlichtkrull2021interpreting} and \subgraphx \citep{yuan2021explainability}; two \textit{surrogate-based} methods: \pgmexplainer \citep{vu2020pgm} and \graphsvx \citep{duval2021graphsvx}; and two \textit{decomposition-based} methods: \flowx \citep{gui2023flowx} and \lrp \citep{baldassarre2019explainability, bach2015pixel}. We additionally include a random baseline (\randombaseline) that assigns each edge in the computational field a score drawn uniformly at random from $\{0,1\}$.

\textbf{Computing the plausibility score.}
We score an explainer by separately reporting the precision and recall of the predicted $\mu$ with respect to $\mu^*$. Given that \UPI is a union of alternative and individually valid explanations, this choice allows us to distinguish two cases: 
\textit{i)} an explainer may precisely highlight truly relevant edges but fail to recover alternative explanations, resulting in high precision but low recall; and \textit{ii)} an explainer may recover many relevant edges across alternative explanations while also including irrelevant edges, resulting in high recall but low precision.
We provide additional motivation for this choice over alternatives in \cref{appx:metrics}. Operationally, for each target, we sweep over all thresholds induced by the scores, keep the one maximising $F_1$ between $\mu$ and $\mu^*$, and report the corresponding precision and recall.
To avoid empty explanations, we only explain positive targets.

\subsection{Tasks and Datasets}
\label{sec:experiments-gc}

We consider three graph-classification and three node-classification tasks.
For each task, we generate graphs with random backgrounds and implant the target pattern in positive ones, while expressing the task as a logical formula that \METHODNAME converts into a GNN. Details are in \cref{appx:datasets}.

\textbf{Graph classification.}
\textit{i)} In positive \REDBLUELEAF instances, the graph contains a chain of four red nodes, each with exactly two blue leaves. This structure mirrors classical motif-based benchmarks in which the implanted motif constitutes the ground truth.
\textit{ii)} In positive \REDUNDANCY instances, the graph contains a black node with six red neighbours, any one of which is sufficient to satisfy the property. The task therefore admits six alternative PI explanations, while \UPI comprises all six corresponding edges.
\textit{iii)} In positive \COMPREDUNDANCY instances, the graph contains a black node satisfying at least one of three conditions: having one red, two blue, or three green neighbours.
The task therefore admits three PI explanations, while \UPI again comprises six edges.

\textbf{Node classification.}
\textit{i)} \REDBLUELEAFNC is a node-level variant of \REDBLUELEAF. A target node is positive if it is red, has exactly two blue leaves, and is adjacent to another node with the same properties. Consequently, target nodes belonging to the same motif may have different explanations.
\textit{ii)} In \MAJONEHOP, a target node is positive if it has strictly more red neighbours than blue neighbours.
\textit{iii)} \MAJTWOHOP applies this comparison recursively. A target node is positive if it has strictly more red-ish neighbours than blue-ish neighbours, where a node is red-ish according to the \MAJONEHOP rule. This corresponds to two rounds of label propagation \citep{nerini2025true}.

\begin{table}[t]
    \centering
    \small
    \setlength{\tabcolsep}{4pt}
    \scalebox{0.80}{
    \begin{tabular}{l ccc ccc ccc}
        \toprule
         & \multicolumn{3}{c}{\REDBLUELEAF} & \multicolumn{3}{c}{\REDUNDANCY} & \multicolumn{3}{c}{\COMPREDUNDANCY} \\
        \cmidrule(lr){2-4}\cmidrule(lr){5-7}\cmidrule(lr){8-10}
        Explainer & precision & recall & $F_1$ & precision & recall & $F_1$ & precision & recall & $F_1$ \\
        \midrule
        \saliency                & $\nentry{1.00}{0.00}$ & $\nentry{1.00}{0.00}$ & $\nentry{1.00}{0.00}$ & $\nentry{1.00}{0.00}$ & $\nentry{1.00}{0.00}$ & $\nentry{1.00}{0.00}$ & $\nentry{0.04}{0.00}$ & $\nentry{1.00}{0.00}$ & $\nentry{0.09}{0.00}$ \\
        \intgrad                 & $\nentry{1.00}{0.00}$ & $\nentry{1.00}{0.00}$ & $\nentry{1.00}{0.00}$ & $\nentry{1.00}{0.00}$ & $\nentry{1.00}{0.00}$ & $\nentry{1.00}{0.00}$ & $\nentry{1.00}{0.00}$ & $\nentry{0.50}{0.00}$ & $\nentry{0.67}{0.00}$ \\
        \deconv                  & $\nentry{0.22}{0.00}$ & $\nentry{1.00}{0.00}$ & $\nentry{0.37}{0.00}$ & $\nentry{1.00}{0.00}$ & $\nentry{1.00}{0.00}$ & $\nentry{1.00}{0.00}$ & $\nentry{0.04}{0.00}$ & $\nentry{1.00}{0.00}$ & $\nentry{0.09}{0.00}$ \\
        \guidedbp                & $\nentry{1.00}{0.00}$ & $\nentry{1.00}{0.00}$ & $\nentry{1.00}{0.00}$ & $\nentry{1.00}{0.00}$ & $\nentry{1.00}{0.00}$ & $\nentry{1.00}{0.00}$ & $\nentry{0.04}{0.00}$ & $\nentry{1.00}{0.00}$ & $\nentry{0.09}{0.00}$ \\
        \gnnexplainer            & $\nentry{1.00}{0.00}$ & $\nentry{1.00}{0.00}$ & $\nentry{1.00}{0.00}$ & $\nentry{1.00}{0.00}$ & $\nentry{1.00}{0.00}$ & $\nentry{1.00}{0.00}$ & $\nentry{0.04}{0.00}$ & $\nentry{1.00}{0.00}$ & $\nentry{0.09}{0.00}$ \\
        \graphmask               & $\nentry{0.25}{0.03}$ & $\nentry{0.93}{0.15}$ & $\nentry{0.38}{0.02}$ & $\nentry{0.04}{0.00}$ & $\nentry{1.00}{0.00}$ & $\nentry{0.09}{0.00}$ & $\nentry{0.04}{0.00}$ & $\nentry{1.00}{0.00}$ & $\nentry{0.09}{0.00}$ \\
        \subgraphx               & $\nentry{0.40}{0.01}$ & $\nentry{0.84}{0.01}$ & $\nentry{0.38}{0.00}$ & $\nentry{0.19}{0.01}$ & $\nentry{0.69}{0.01}$ & $\nentry{0.15}{0.00}$ & $\nentry{0.19}{0.01}$ & $\nentry{0.69}{0.01}$ & $\nentry{0.15}{0.00}$ \\
        \pgmexplainer            & $\nentry{0.33}{0.16}$ & $\nentry{0.89}{0.19}$ & $\nentry{0.45}{0.15}$ & $\nentry{0.15}{0.17}$ & $\nentry{0.49}{0.16}$ & $\nentry{0.19}{0.16}$ & $\nentry{0.35}{0.16}$ & $\nentry{0.56}{0.08}$ & $\nentry{0.39}{0.15}$ \\
        \graphsvx                & $\nentry{1.00}{0.00}$ & $\nentry{1.00}{0.00}$ & $\nentry{1.00}{0.00}$ & $\nentry{0.45}{0.05}$ & $\nentry{0.89}{0.07}$ & $\nentry{0.59}{0.06}$ & $\nentry{0.47}{0.03}$ & $\nentry{0.74}{0.16}$ & $\nentry{0.56}{0.06}$ \\
        \flowx                   & $\nentry{0.90}{0.03}$ & $\nentry{0.99}{0.01}$ & $\nentry{0.94}{0.01}$ & $\nentry{1.00}{0.00}$ & $\nentry{1.00}{0.00}$ & $\nentry{1.00}{0.00}$ & $\nentry{1.00}{0.00}$ & $\nentry{0.97}{0.07}$ & $\nentry{0.98}{0.04}$ \\
        \lrp                     & $\nentry{1.00}{0.00}$ & $\nentry{1.00}{0.00}$ & $\nentry{1.00}{0.00}$ & $\nentry{1.00}{0.00}$ & $\nentry{1.00}{0.00}$ & $\nentry{1.00}{0.00}$ & $\nentry{0.04}{0.00}$ & $\nentry{1.00}{0.00}$ & $\nentry{0.09}{0.00}$ \\
        \midrule
        \randombaseline          & $\nentry{0.23}{0.02}$ & $\nentry{0.93}{0.16}$ & $\nentry{0.37}{0.00}$ & $\nentry{0.06}{0.01}$ & $\nentry{0.73}{0.15}$ & $\nentry{0.10}{0.01}$ & $\nentry{0.06}{0.01}$ & $\nentry{0.73}{0.15}$ & $\nentry{0.10}{0.01}$ \\
        \bottomrule
    \end{tabular}}
    \caption{
        \textbf{Plausibility on graph classification}, averaged over $5$ seeds.
    }
    \label{tab:explainers}
\end{table}

\subsection{Results \& Key takeaways}
\label{sec:takeaway}

The results are reported in \cref{tab:explainers} and \cref{tab:explainers-nc-majority} for graph and node classification, respectively.
Results highlight that we can recover previous insights while
uncovering new ones, described next.

\subsubsection{Recovering existing insights}
\label{sec:existing-insights}

\textbf{\textit{\gnnexplainer does not sparsify redundant edges.}}
Among the well-performing explainers, \gnnexplainer is the only one specifically designed to provide sparse explanations, due to its $\mathrm{L}_1$ penalty on explanation scores \citep{ying2019gnnexplainer}. 
Our results on \REDUNDANCY, however, reveal the opposite: \gnnexplainer achieves perfect recall across a wide range of $\mathrm{L}_1$ strengths (see \cref{fig:gnnexplainer-sweep} in the Appendix), highlighting \textit{all} alternative explanations rather than a sparse subset, a contradiction already observed empirically \citep{funke2022zorro} and explained theoretically \citep{he2025explaining}.

\textbf{\textit{\deconv can highlight irrelevant-by-design edges.}}
\deconv is the only gradient-based explainer faring poorly on the motif-based \REDBLUELEAF task. 
After a careful investigation, we found that \deconv can pathologically assign spurious relevance to irrelevant-by-design edges.
%
In fact, we can vary its precision by recolouring background nodes with a colour that does not constitute a valid explanation, as shown in \cref{fig:deconv-recolour} in the Appendix.
This recovers the finding of \citet{nie2018theoretical} that \deconv can be insensitive to what the model actually computes.

\begin{table}[t]
    \centering
    \small
    \setlength{\tabcolsep}{4pt}
    \scalebox{0.80}{
    \begin{tabular}{l ccc ccc ccc}
        \toprule
         & \multicolumn{3}{c}{\REDBLUELEAFNC} & \multicolumn{3}{c}{\MAJONEHOP} & \multicolumn{3}{c}{\MAJTWOHOP} \\
        \cmidrule(lr){2-4}\cmidrule(lr){5-7}\cmidrule(lr){8-10}
        Explainer & precision & recall & $F_1$ & precision & recall & $F_1$ & precision & recall & $F_1$ \\
        \midrule
        \saliency                & $\nentry{1.00}{0.00}$ & $\nentry{1.00}{0.00}$ & $\nentry{1.00}{0.00}$ & $\nentry{1.00}{0.00}$ & $\nentry{1.00}{0.00}$ & $\nentry{1.00}{0.00}$ & $\nentry{1.00}{0.00}$ & $\nentry{0.87}{0.02}$ & $\nentry{0.92}{0.02}$ \\
        \intgrad                 & $\nentry{1.00}{0.00}$ & $\nentry{1.00}{0.00}$ & $\nentry{1.00}{0.00}$ & $\nentry{1.00}{0.00}$ & $\nentry{1.00}{0.00}$ & $\nentry{1.00}{0.00}$ & $\nentry{0.96}{0.01}$ & $\nentry{1.00}{0.00}$ & $\nentry{0.97}{0.01}$ \\
        \deconv                  & $\nentry{0.87}{0.00}$ & $\nentry{0.58}{0.00}$ & $\nentry{0.69}{0.00}$ & $\nentry{1.00}{0.00}$ & $\nentry{1.00}{0.00}$ & $\nentry{1.00}{0.00}$ & $\nentry{0.94}{0.02}$ & $\nentry{1.00}{0.00}$ & $\nentry{0.96}{0.01}$ \\
        \guidedbp                & $\nentry{1.00}{0.00}$ & $\nentry{1.00}{0.00}$ & $\nentry{1.00}{0.00}$ & $\nentry{1.00}{0.00}$ & $\nentry{1.00}{0.00}$ & $\nentry{1.00}{0.00}$ & $\nentry{1.00}{0.00}$ & $\nentry{0.87}{0.02}$ & $\nentry{0.92}{0.02}$ \\
        \gnnexplainer            & $\nentry{1.00}{0.00}$ & $\nentry{1.00}{0.00}$ & $\nentry{1.00}{0.00}$ & $\nentry{1.00}{0.00}$ & $\nentry{1.00}{0.00}$ & $\nentry{1.00}{0.00}$ & $\nentry{0.96}{0.01}$ & $\nentry{0.99}{0.00}$ & $\nentry{0.97}{0.01}$ \\
        \graphmask               & $\nentry{0.01}{0.00}$ & $\nentry{0.92}{0.16}$ & $\nentry{0.01}{0.00}$ & $\nentry{0.00}{0.00}$ & $\nentry{1.00}{0.00}$ & $\nentry{0.00}{0.00}$ & $\nentry{0.02}{0.01}$ & $\nentry{0.78}{0.12}$ & $\nentry{0.03}{0.01}$ \\
        \subgraphx               & $\nentry{0.19}{0.00}^\dagger$ & $\nentry{0.74}{0.01}^\dagger$ & $\nentry{0.28}{0.00}^\dagger$ & $\nentry{0.36}{0.00}$ & $\nentry{0.88}{0.01}$ & $\nentry{0.49}{0.01}$ & \multicolumn{3}{c}{\textit{OOT}} \\
        \pgmexplainer            & $\nentry{0.82}{0.02}$ & $\nentry{0.97}{0.01}$ & $\nentry{0.88}{0.01}$ & $\nentry{0.43}{0.01}$ & $\nentry{0.99}{0.00}$ & $\nentry{0.59}{0.01}$ & $\nentry{0.30}{0.02}$ & $\nentry{0.83}{0.01}$ & $\nentry{0.41}{0.02}$ \\
        \graphsvx                & $\nentry{0.89}{0.00}$ & $\nentry{1.00}{0.00}$ & $\nentry{0.94}{0.00}$ & $\nentry{0.50}{0.00}$ & $\nentry{1.00}{0.00}$ & $\nentry{0.66}{0.00}$ & $\nentry{0.56}{0.02}$ & $\nentry{0.74}{0.02}$ & $\nentry{0.61}{0.02}$ \\
        \flowx                   & $\nentry{0.99}{0.00}$ & $\nentry{1.00}{0.00}$ & $\nentry{0.99}{0.00}$ & $\nentry{1.00}{0.00}$ & $\nentry{1.00}{0.00}$ & $\nentry{1.00}{0.00}$ & $\nentry{0.91}{0.01}$ & $\nentry{0.95}{0.01}$ & $\nentry{0.92}{0.01}$ \\
        \lrp                     & $\nentry{1.00}{0.00}$ & $\nentry{1.00}{0.00}$ & $\nentry{1.00}{0.00}$ & $\nentry{1.00}{0.00}$ & $\nentry{1.00}{0.00}$ & $\nentry{1.00}{0.00}$ & $\nentry{1.00}{0.01}$ & $\nentry{0.89}{0.02}$ & $\nentry{0.93}{0.01}$ \\ 
        \midrule
        \randombaseline          & $\nentry{0.19}{0.01}$ & $\nentry{0.49}{0.03}$ & $\nentry{0.24}{0.01}$ & $\nentry{0.33}{0.17}$ & $\nentry{0.81}{0.06}$ & $\nentry{0.41}{0.18}$ & $\nentry{0.22}{0.01}$ & $\nentry{0.51}{0.04}$ & $\nentry{0.29}{0.01}$ \\
        \bottomrule
 
     \end{tabular}}
    \caption{
        \textbf{Plausibility on node classification,} averaged over $5$ seeds.
        A `\textit{OOT}' marks an explainer that exceeded the per-explanation time budget (see \cref{appx:impl-details}) on every seed; a $\dagger$ marks one whose mean uses only seeds completed within the budget.
    }
    \label{tab:explainers-nc-majority}
\end{table}

\subsubsection{Uncovering new limitations of explainers}

\textbf{\textit{Only \flowx, \graphsvx, and \intgrad are reliable across tasks, while others fluctuate.}}
Reading \cref{tab:explainers,tab:explainers-nc-majority} jointly, \flowx, \graphsvx, and \intgrad are the only explainers that never fall to chance level; e.g., \flowx keeps an F1 of at least $0.92$ on all six tasks, and \intgrad never scores below $0.96$ in precision.
Every other explainer is either uniformly poor or strongly task-dependent. 
%
We stress that this distinction is invisible on the motif-based tasks alone -- where \saliency, \lrp, and \gnnexplainer perform on par with \flowx and \intgrad\;-- and that it is only on settings departing from well-known motif-based tasks that the difference emerges, such as the two discussed next.

\textbf{\textit{Most explainers are not robust to the implementation of formulas.}}
A desirable property of explainers is implementation invariance \citep{sundararajan2017axiomatic, nauta2023anecdotal}, i.e., explainers should yield the same -- or similar thereof -- explanations 
for equivalent models.
%
Since \METHODNAME enables us to compile GNNs with a controllable behaviour, it provides a controlled setting in which to systematically evaluate this property.
As shown in \cref{tab:explainers}, several explainers that achieve perfect scores on the other tasks perform at chance level on \COMPREDUNDANCY.
To investigate this discrepancy, we evaluate an alternative formulation of the same task; see \cref{appx:compilation-choices}.
%
Only \intgrad, \flowx, and \graphsvx maintain consistent performance across the two formulations, suggesting greater robustness to these implementation changes. The discrepancy is associated with the original implementation of the ``$\lor$'' operator, which introduces double negations through De Morgan's laws.
%
Similar effects arise from other formula rewritings, as illustrated by \REDBLUELEAF in \cref{appx:compilation-choices}.

\textbf{\textit{Explainers struggle to recover indirect influences.}}
While most explainers achieve perfect \UPI recovery on \MAJONEHOP, none attain perfect results on \MAJTWOHOP.
We first observe that gradient-based explainers, with the exception of \deconv and \intgrad, maintain high precision ($>0.9$) at the expense of recall.
This is because they consistently miss edges whose removal would increase the number of bluish neighbours, indicating that they fail to capture indirect, multi-hop effects even in simple settings: although removing such edges does not directly change the target node’s number of red neighbours, they can change the state of an intermediate grey node, which indirectly affects the target’s prediction.
%
%
In contrast, on \MAJTWOHOP \gnnexplainer, \deconv, and \flowx attain a recall $\ge0.95$ but at the expense of precision, meaning they cover most indirect interactions whilst producing noisier explanations. See \cref{fig:ind-influence} in Appendix for visual examples of these problems.
This insight is robust across implementation choices for most explainers; see \cref{appx:ablation-comparative}.


\section{Conclusion}
\label{sec:conclusions}

We formalized a common failure mode when evaluating the plausibility of GNN explainers: the model being explained may not behave as expected. 
First, we showed that several popular datasets used in the literature suffer from this (\cref{sec:issues}).
Then, we proposed \METHODNAME, the first compiler translating graded modal logic sentences into the weights of a GNN (\cref{sec:method}). 
\METHODNAME yields compiled GNNs whose behaviour is known exactly, constituting a rigorous testbed for evaluating explainers.
To define a reliable ground truth, we adopt prime implicants and define our ground truth explanation to be the union of all prime implicants, for which we developed an exact, efficient extraction algorithm.
Finally, we proposed \BENCHNAME, which evaluates graph explainers over compiled GNNs based on how well they recover the respective ground truth explanation (\cref{sec:experiments}). 
Results on graph and node classification recover previously known findings while uncovering new ones, demonstrating how \BENCHNAME enables a sound, fine-grained analysis of the behaviour of post-hoc GNN explainers.

\textbf{Limitations.}
%
Compiled GNNs produced by \METHODNAME rely on binary embeddings and noiseless aggregations, and therefore have limitations in terms of realism compared to GNNs trained from data.
This is, however, a deliberate design choice allowing us to reason \textit{exactly} about the internal workings of both the model and the explainer, at a level of detail previously not possible.
Nonetheless, experiments conducted on compiled models should be interpreted carefully, and evaluations based on them should be intended as a minimum bar rather than a proxy of in-production evaluations.
Furthermore, just like for trained models, our results are to be interpreted in conjunction with the model, as different model configurations can yield different results. Nevertheless, \BENCHNAME allows us to rigorously control these configurations -- which are otherwise inextricably confounded on trained models -- and assess the robustness of explainers across these.
Last, our current implementation is restricted to binary classification and categorical node features.


\textbf{Future work.}
Since evaluating all available explainers was infeasible due to their large number, we exclusively tested popular instance-level explainers. Nonetheless, our analysis can be extended to include counterfactual and model-level explainers, especially those aiming to explain GNNs by extracting a logic formula; see \cref{appx:extended-rel-work} for an overview.
%
Also, we could make compiled GNNs more realistic, e.g., by fine-tuning after compilation. This is challenging, as fine-tuning may invalidate their guarantees, and ensuring that the expected behaviour is preserved is left to future work.
%
%


\bibliography{ref/explanatory-supervision, ref/references}
\bibliographystyle{unsrtnat}

\clearpage
\appendix

\begin{figure}[t]
    \centering
    \includegraphics[width=0.7\linewidth]{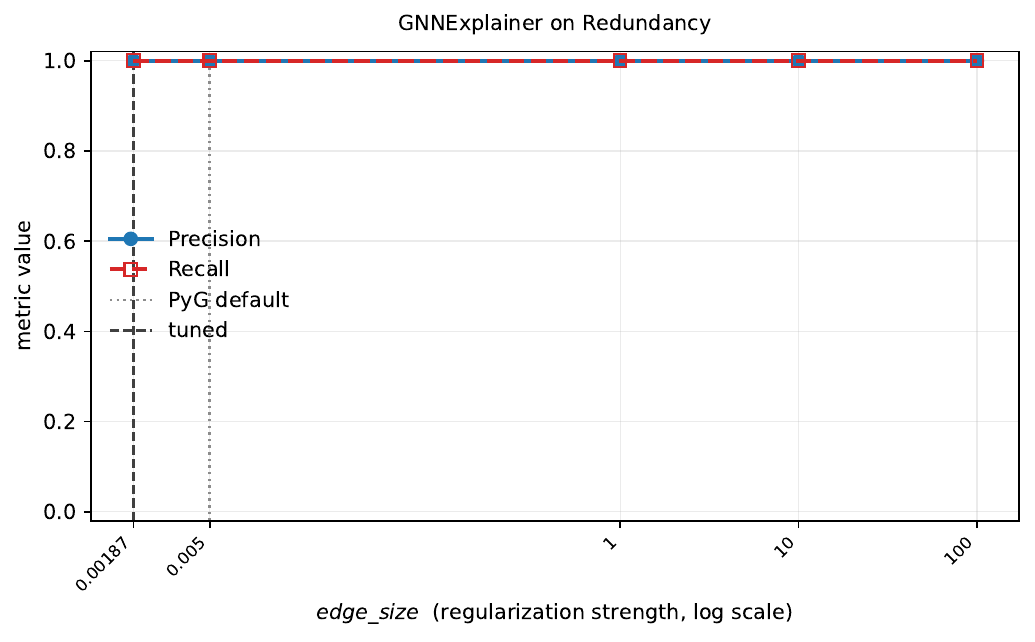}
    \includegraphics[width=0.7\linewidth]{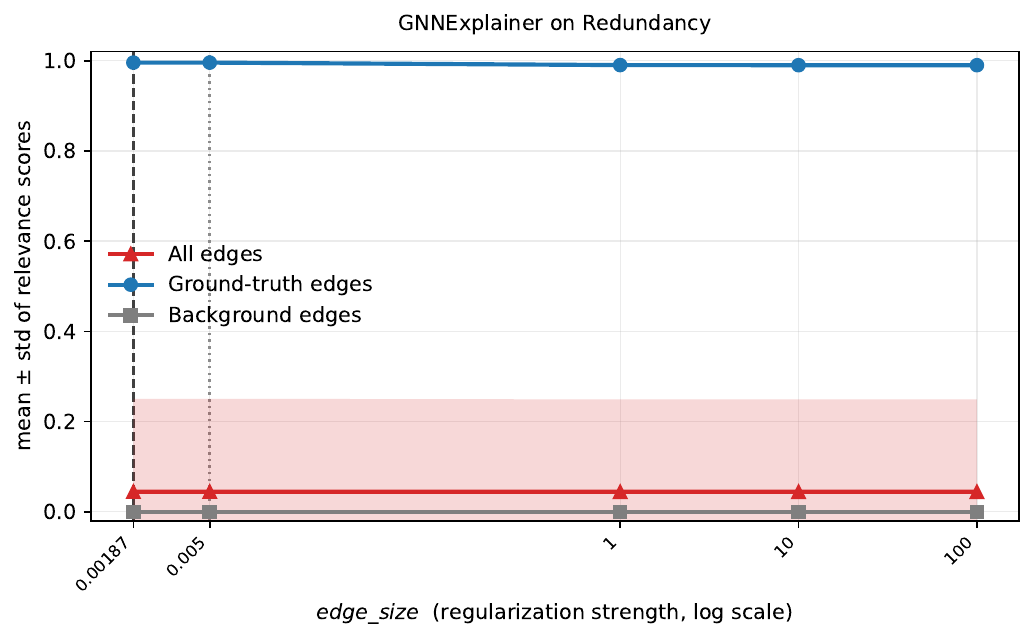}
    \caption{Ablation study on \gnnexplainer over \REDUNDANCY. The plots report: \textbf{Top:} metric values for different values of the $\mathrm{L}_1$ penalty regularization term controlling the sparsity strength; across all strength values, \gnnexplainer maintains perfect precision and recall. \textbf{Bottom:} average edge-wise relevance scores for different values of the $\mathrm{L}_1$ penalty regularization term, confirming that despite the penalty strength getting stronger, \gnnexplainer does not sparsify redundant edges.}
    \label{fig:gnnexplainer-sweep}
\end{figure}

\begin{figure}[t]
    \centering
    \includegraphics[width=0.99\linewidth]{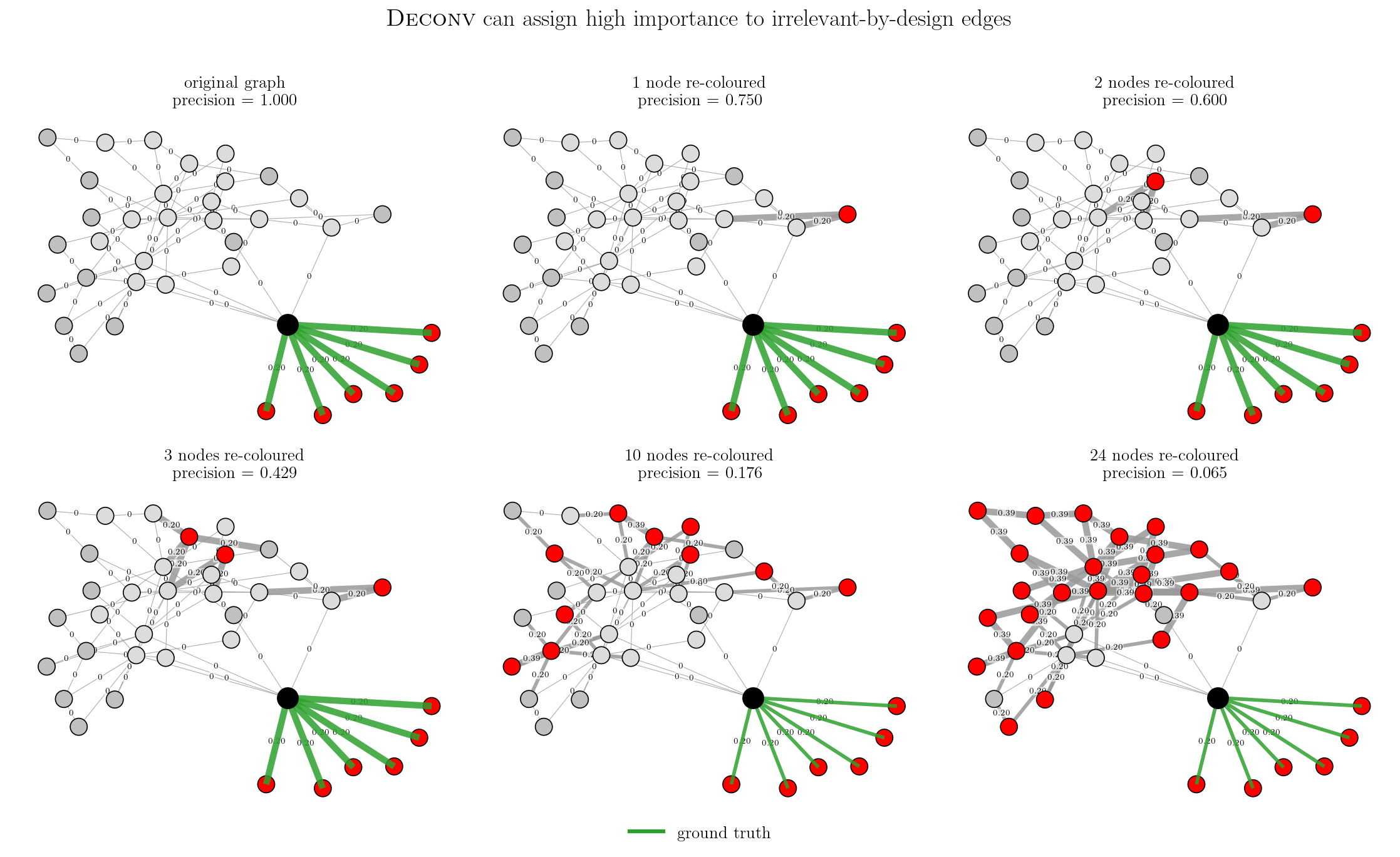}
    \caption{\textbf{\deconv can assign high importance to irrelevant-by-design edges}. To show this, we take a graph from \REDUNDANCY and progressively re-colour background nodes into red. Recall the underlying formula is $\exists u. \mathrm{black}(u) \wedge \Diamond_{\geq 1}\,\mathrm{red}$. In the original graph, grey-like colours play no role in the formula, and are correctly assigned zero importance. Hence, \deconv achieves perfect precision, as indicated also in \cref{tab:explainers}. 
    However, as red nodes are added to the graph, precision steadily drops. This is because red nodes, although they do not contribute to the formula when present alone, do appear in the formula, which is enough for \deconv to highlight them as important.
    This is the same mechanisms that brings \deconv to perform at chance-level on \REDBLUELEAF, as the background contains nodes firing intermediate sub-formulas (and intermediate sub-formulas only).
    }
    \label{fig:deconv-recolour}
\end{figure}

\begin{figure}
    \centering
    \includegraphics[width=\linewidth]{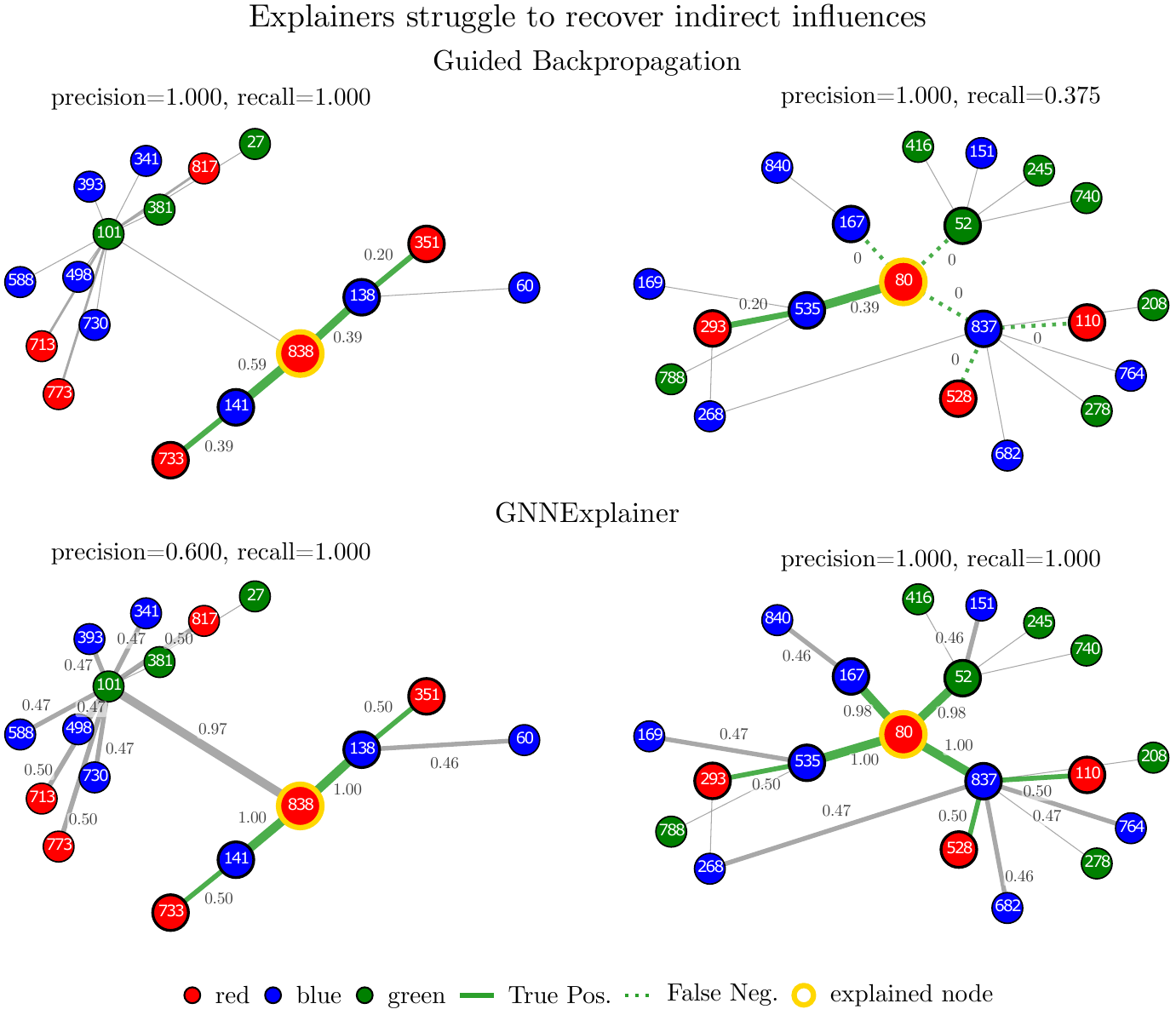}
    \caption{
    \textbf{Explainers struggle to recover indirect influences.} We illustrate this limitation using two representative examples from \MAJTWOHOP evaluated across \guidedbp and \gnnexplainer. 
    \MAJTWOHOP requires the target node to have strictly more 1-hop neighbors with a majority of red neighbors than neighbors with a majority of blue neighbors, with node colors reflecting their original features.
    On the left example graph,  \gnnexplainer assigns high importance ($\ge 0.5$) to edges $(101,713)$, $(101,817)$, and $(101,773)$; however, because node $101$ has a blue-majority neighborhood, it does not contribute to the positive prediction of the target. 
    On the right example graph, instead, \guidedbp assigns zero importance to edges $(80,167)$, $(80,52)$, $(80,837)$, $(110,837)$, and $(528,837)$. Yet, these edges ensure that nodes $167$, $52$, and $837$ maintain an equal balance of red and blue neighbors, which prevents a higher number of blue-ish nodes in the neighborhood of node $80$: if they were removed, the prediction at target node $80$ would flip.
    In both scenarios, the misattributed edges correspond to messages aggregated two hops away from the target. Notably, the true importance of $(80,52)$, $(80,837)$, and $(80,167)$ is directional (from node $80$ outward): while baseline explainers and our \UPI algorithm evaluate directed edges, we display undirected attributions here for visual clarity.}
    \label{fig:ind-influence}
\end{figure}

\clearpage

\section{Extended related work}
\label[appendix]{appx:extended-rel-work}


\textbf{From Logic to Neural Networks.} Integrating logical rules into machine learning models has been a long-standing line of research \citep[e.g.,][]{pazzani1992utility, TOWELL1994119, avila1999connectionist}. In this domain, researchers typically investigated methods to translate input propositional logic rules into neural networks, which could subsequently be refined through backpropagation. More recently, starting with the work of \citet{barcelo2020logical}, the equivalence between formal logic and GNNs has been deeply analysed to characterize the expressivity of the latter. Specifically, \citet{barcelo2020logical} established a fundamental correspondence between Aggregate-Combine GNNs and graded modal logic. \citet{Nunn2024logic} extended these results by proposing the logic $K^\#$, an expansion of graded modal logic that accounts for linear combinations of counting modalities, enabling bidirectional translation between specific GNNs and logical formulas. These results were later expanded by \citet{grau2026correspondence} and \citet{benedikt2026decidability}: the former systematically maps different classes of bounded GNNs to corresponding fragments of modal and first-order logic, whereas the latter broadens the class of GNNs translatable into logical formulas to analyse their expressiveness and decidability. Closest to our work are \citet{monti2024a} and \citet{nerini2025true}, which likewise replace trained black boxes with white-box GNNs of known logic. However, unlike our approach, they do not support general graded modal logic formulas. \citet{monti2024a} evaluates only node features, leaving edge importance entirely undefined. While \citet{nerini2025true} does target edges using a fixed label-propagation GNN (whose tasks we reuse as \MAJONEHOP and \MAJTWOHOP), their manual derivation of the ground-truth edges is flawed, as it crucially misses edges responsible for "indirect influences" in the classification. In contrast, Gracr compiles \emph{any} graded modal logic formula and automatically extracts the exact, complete \UPI (\cref{alg:main}) for both node and graph tasks.

\textbf{From GNN to Logic.}
A growing line of work explains GNNs post hoc with logic formulas rather than subgraphs.
Global explainers fit Boolean formulas over learned or mined concepts to the predictions of a trained model \citep{azzolin2022global,armgaan2024graphtrail,geng2026logicxgnn,ragno2025faithful}, or learn description logic class expressions for heterogeneous graphs \citep{kohler2024utilizing}.
Similarly, \citet{pluska2024logical} distil GNNs into decision trees, and \citet{morris2026sound} extract rules that are provably sound, at the price of restricting the GNN to non-negative weights.
A complementary line of work designs self-explainable GNN architectures that expose logic rules by construction rather than post hoc \citep{cranmer2020discovering,azzolin2025formal,ragno2026logic}.
Rather than extracting a logic formula out of a GNN, we go in the opposite direction: the formula is fixed first and compiled into the GNN, so the two coincide by construction.
Hence, \BENCHNAME offers a reliable testbed for testing also logic-based explainability methods, which we plan to investigate in future work.

\section{Analyses of popular datasets for evaluating plausibility}
\label[appendix]{appx:plaus-data-analyses}

We provide an extended analysis of \cref{sec:issues}. In short, for the dataset below, we investigate whether alternative solutions exist, or whether training a GNN on the dataset may yield a model that does not reliably implement the intended ground truth.\footnote{By ground truth here we mean the expected ground truth that the authors of these datasets use to evaluate plausibility in their setting.}
Since all of these datasets require identifying the correct motifs, the alignment between models and the intended ground truth is only possible if the models achieve nearly perfect accuracy, as discussed in \citet{faber2021comparing}.
For datasets already covered in \cref{sec:issues}, we only provide the experimental details and additional results.
In total, our analysis covers $74$ out of $94$ of the plausibility-related papers we considered.

\subsection{Synthetic datasets}

We start our analysis by looking at synthetic motif datasets. For all of them, we explicitly check the presence of a "shortcut", i.e., if there exists a valid alternative explanation $\bar{\mu}^*$ to the assumed ground truth $\mu^*$, invalidating \cref{ass:shortcut}. On these datasets, we use GIN \citep{xu2018powerful}, as a provably powerful GNN.

We observe that some of these "shortcuts" are already known in the literature: \citet{fontanesi2024xai} noticed that the datasets \BATWOMOTIF \citep{luo2020parameterized}, \BATWOGRID \citep{longa2024explaining}, and \GRIDHOUSE \citep{longa2024explaining} all contain alternative ground truth explanations that can be used to solve the task. We do not further comment on these datasets, and focus on novel shortcuts instead.

\textbf{\BASHAPES \citep{ying2019gnnexplainer}.}  As described in the main text, it is possible to obtain an accuracy of $100\%$ from a 2-layer GIN \citep{xu2018powerful} with $64$ hidden dimensions, trained for $1000$ epochs with an Adam optimizer (learning rate of $0.001$). Each GIN layer has an MLP of 2 layers, and between layers we use ReLU activations.
This suggests that an alternative ground truth explanation is present in the data, as the original implanted house-like motif requires $3$ hops in order to cover all of its edges.
In fact, the shortcut can be identified by plotting the degree distribution of each node (\cref{fig:bashapes-degree-distribution}, left panel) and of its neighbours (\cref{fig:bashapes-degree-distribution}, right panel), showing that all classes are perfectly separable by node degrees without the need to learn the house-like motif.

\begin{figure}[h!]
    \centering
    \includegraphics[width=0.45\linewidth]{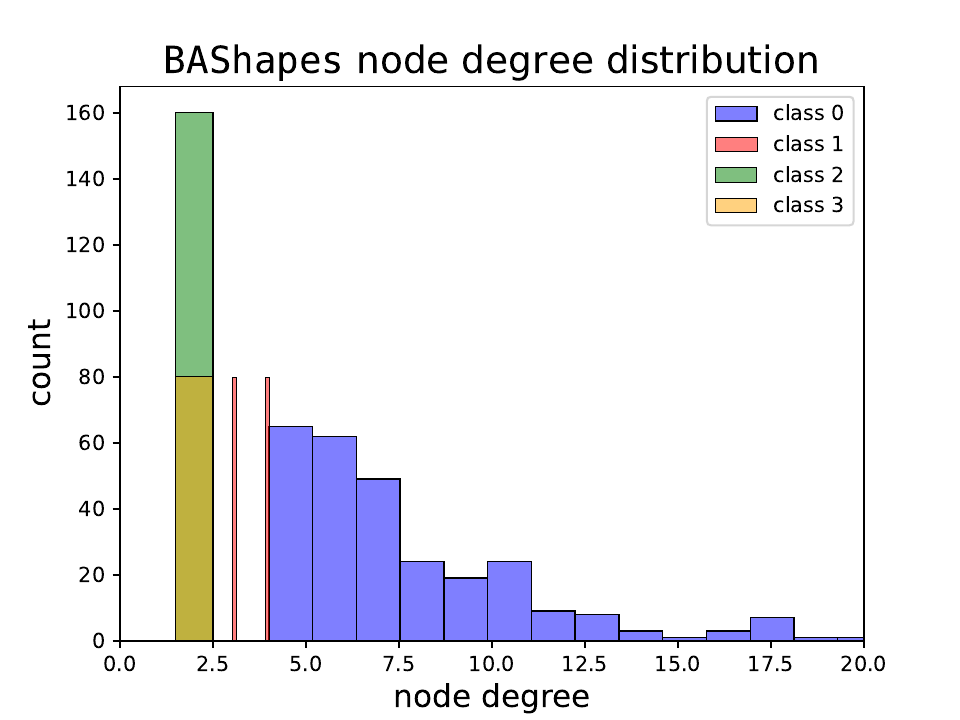}
    \includegraphics[width=0.45\linewidth]{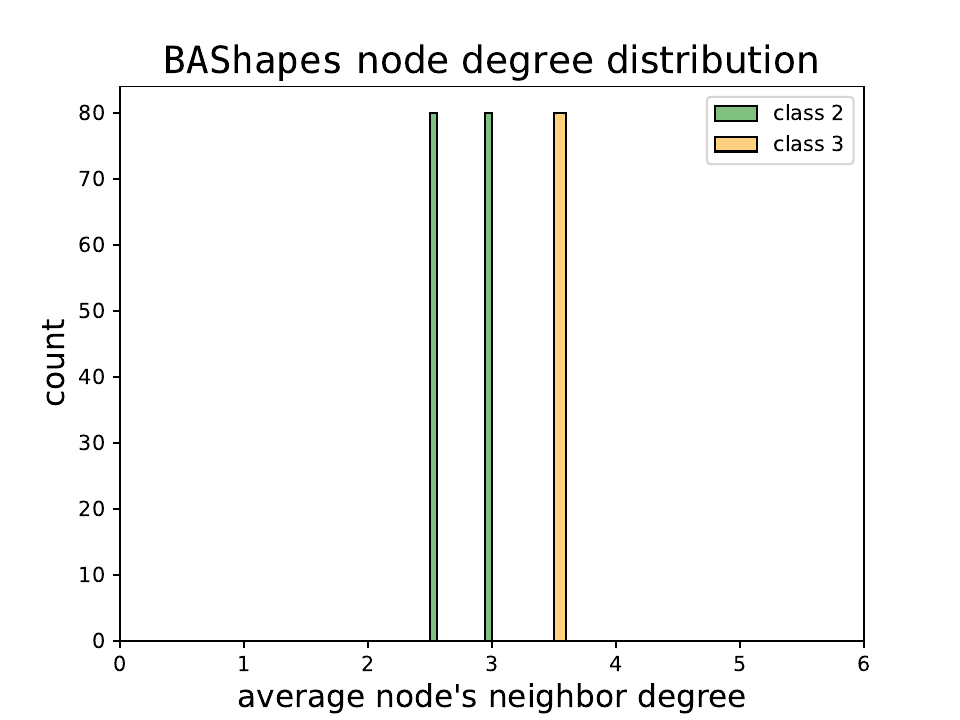}
    \caption{
            \textbf{Left:} Degree distribution for each node class in \BASHAPES. Classes $0$ and $1$ are fully distinguishable, whereas classes $2$ and $3$ overlap. \textbf{Right:} Distribution of average neighbour degrees for classes $2$ and $3$, showing that 2-hop degree statistics cleanly separate the remaining two classes.
        }
    \label{fig:bashapes-degree-distribution}
\end{figure}

\begin{figure}[h!]
    \centering
    \includegraphics[width=0.45\linewidth]{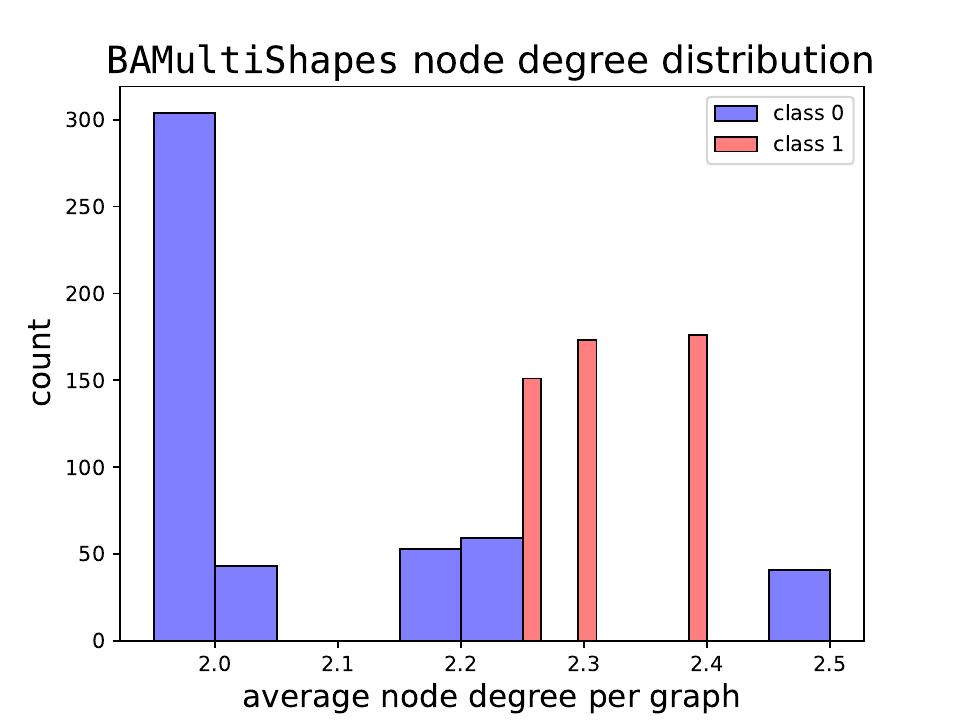}
    \caption{Distribution of average node degree in \BAMULTISHAPES, showing a clear separation between graphs in class $0$ and class $1$.}
    \label{fig:bamultishapes-degree-distribution}
\end{figure}

\textbf{\BAMULTISHAPES \citep{azzolin2022global}.}
Similarly to \BASHAPES, we notice that it is possible to separate the two classes perfectly just by computing the average node degree of graphs, plotted in \cref{fig:bamultishapes-degree-distribution}.
In fact, training a $1$-layer GIN with a 2-layer MLP for $200$ epochs can reach $100\%$ test accuracy. The GIN model was trained with a learning rate of $ 0.001$, mean global readout, $128$ hidden size, and an MLP classifier head.
This, again, indicates that the task can be solved by learning an alternative ground truth explanation to the one originally intended, providing a counterexample to \cref{ass:shortcut}.

\textbf{\TREECYCLES \citep{ying2019gnnexplainer}} is a node classification dataset where the task is to distinguish between nodes belonging to a $6$-cycle (the ground truth explanation) and those in the background tree.
To verify whether GNNs must learn the intended ground truth explanation, we trained a GIN \citep{xu2018powerful} for a different number of layers. Each model is trained with a $2$-layer MLP, a ReLU activation function, and a linear classifier. Also, each model was trained over $5$ random seeds for $1500$ epochs under identical conditions, using the Adam optimizer with a learning rate of $0.005$.
As shown in \cref{tab:TreeCycle-table}, however, no configuration could reach a test accuracy of $100\%$ robustly across random seeds, indicating that it may be hard for GINs to reliably learn the intended motif \citep{faber2021comparing}.
%
%
%
We then analyze the only model to achieve a test accuracy of $100\%$, although exclusively on few random seeds: a 4-layer GIN.
Supposedly, this is the only candidate GIN that may have learned the intended ground truth explanation.
Nonetheless, by performing local perturbations in the background tree, we observe a prediction change for a node \textit{inside} the cycle; see \cref{fig:treecycles-exampleanalysis} for more details.
This indicates that, despite the perfect accuracy, the model is still attending to unexpected information outside of the intended ground truth explanation.

\begin{figure}[h!]
    \centering
    \includegraphics[width=0.45\linewidth]{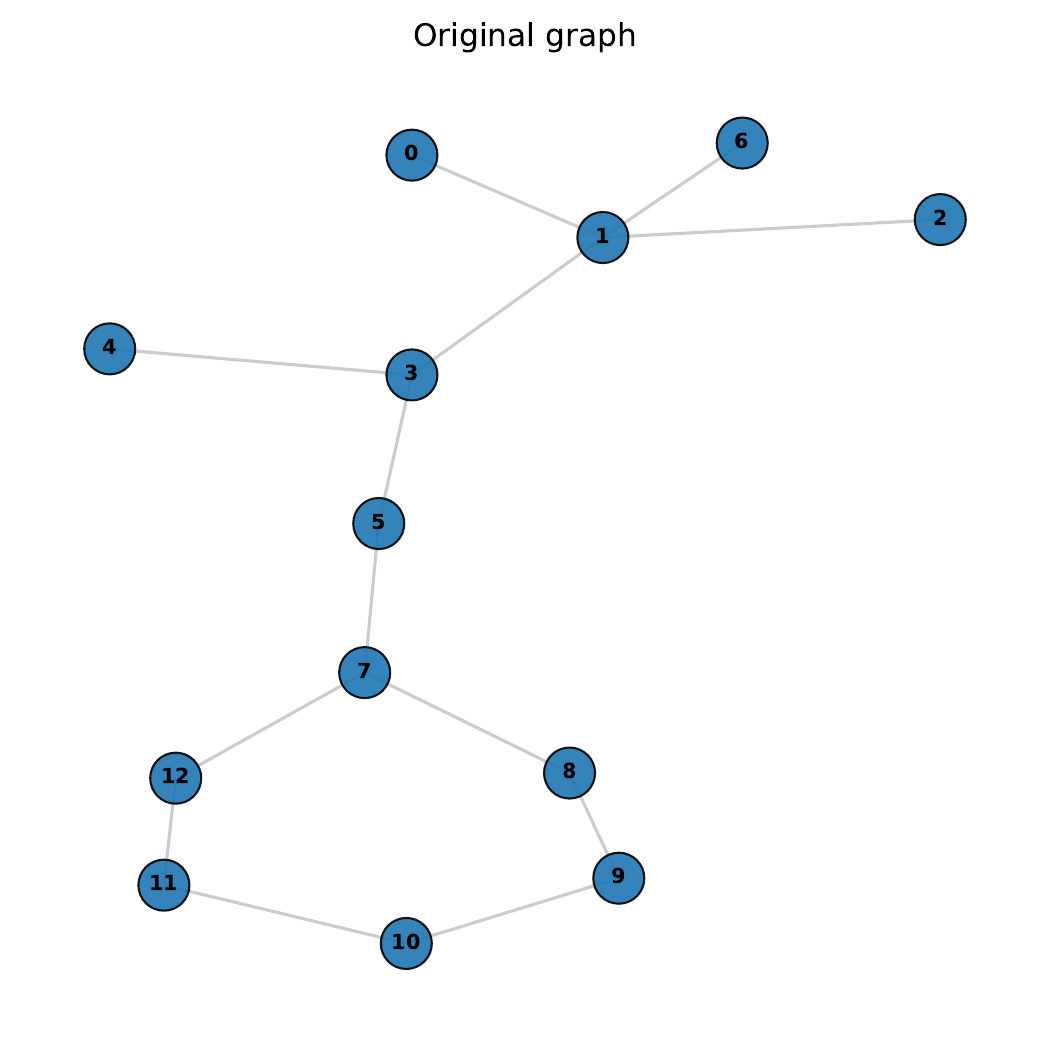}
    \includegraphics[width=0.45\linewidth]{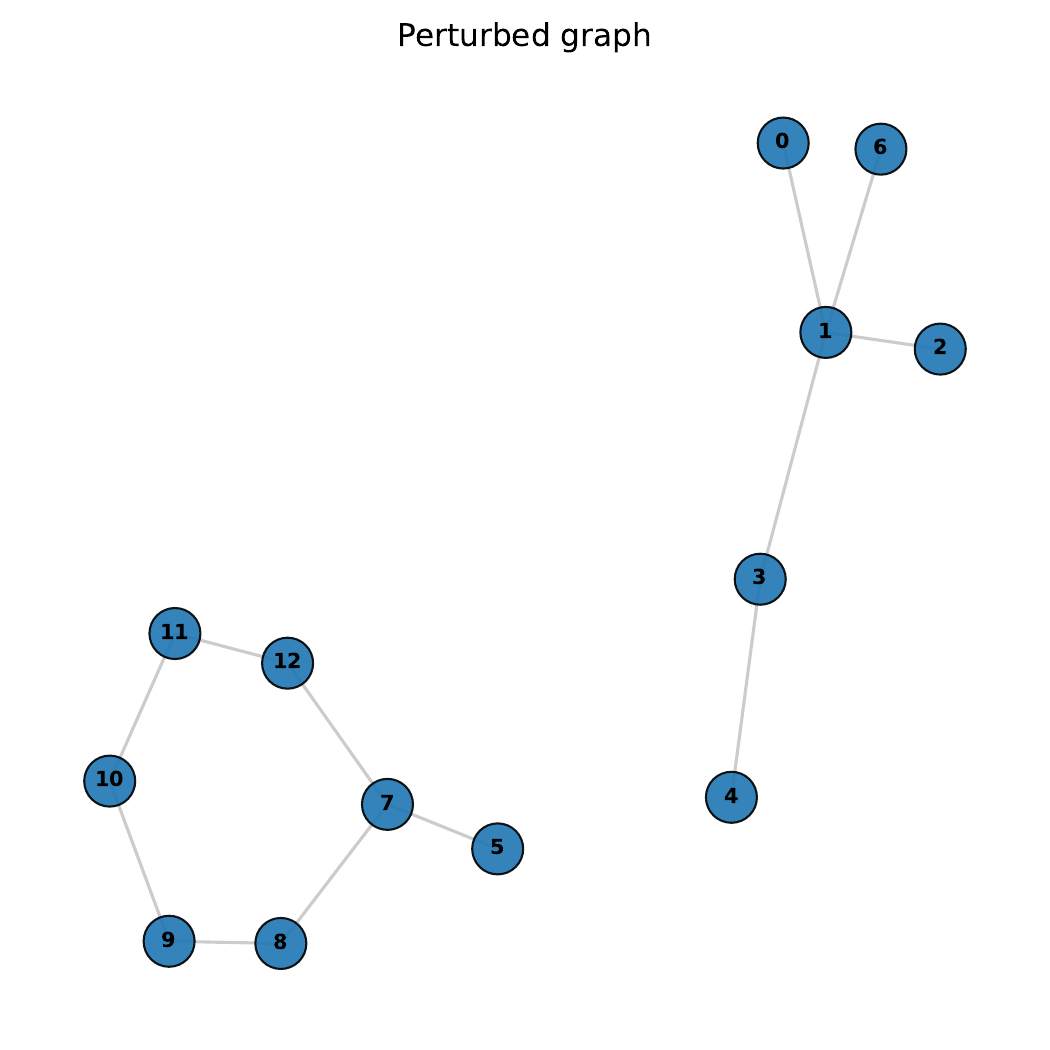}
    \caption{$4$-hop neighbourhood of node $7$ in \TREECYCLES. Original graph has all nodes predicted correctly. However, removing an edge outside of the ground truth explanation yields a prediction change for node $7$.}
    \label{fig:treecycles-exampleanalysis}
\end{figure}

\begin{table}[h!]
    \centering
    \caption{Test and validation accuracy averaged across five random seeds for \TREECYCLES.}
    \label{tab:TreeCycle-table}
    \begin{tabular}{ccc}
        \toprule
        Layers & Test Accuracy & Validation Accuracy \\
        \midrule
        $2$ & $0.9720 \pm 0.0045$ & $0.9818 \pm 0.0045$ \\
        $3$ & $0.9460 \pm 0.0141$ & $0.9939 \pm 0.0045$ \\
        $4$ & $0.9600 \pm 0.0245$ & $0.9859 \pm 0.0090$ \\
        \bottomrule
    \end{tabular}
\end{table}

\paragraph{\TREEGRID \citep{ying2019gnnexplainer}.} Analogous to \TREECYCLES, the task requires classifying nodes as belonging either to the background tree or to a $3\times3$ grid, which serves as the ground-truth motif.
Interestingly, $3$-layer GINs achieve the best accuracy on this task despite having a receptive field too small to cover the entire ground truth (\cref{tab:TreeGrid-table}), suggesting that the model is learning some shortcut in the node classification. The model was configured identically to \TREECYCLES ($64$ hidden channels, 2-layer MLPs, ReLU activations, linear classifier) and trained across $5$ seeds for $1500$ epochs with Adam (learning rate$=0.005$).
Crucially, removing edges strictly outside the ground-truth motif causes the model to flip its predictions on some target nodes, providing further evidence of a possible shortcut; see \cref{fig:treegrid-exampleanalysis} for more details.

\begin{figure}[t]
    \centering
    \includegraphics[width=0.45\linewidth]{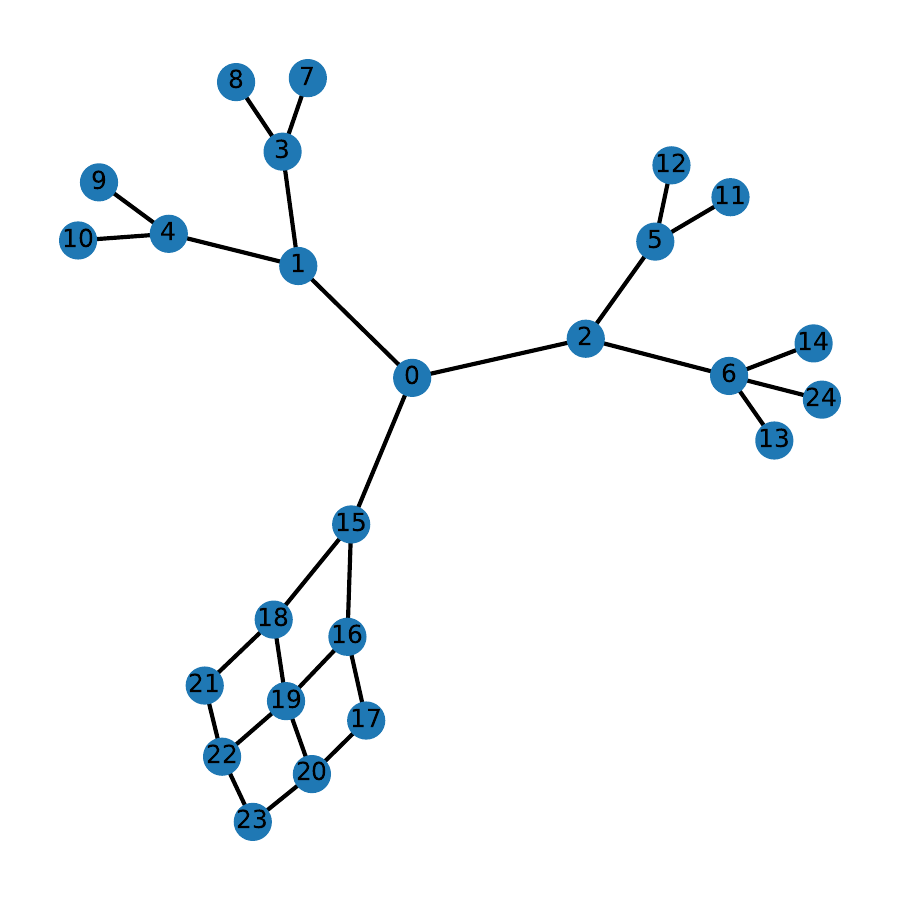}
    \includegraphics[width=0.45\linewidth]{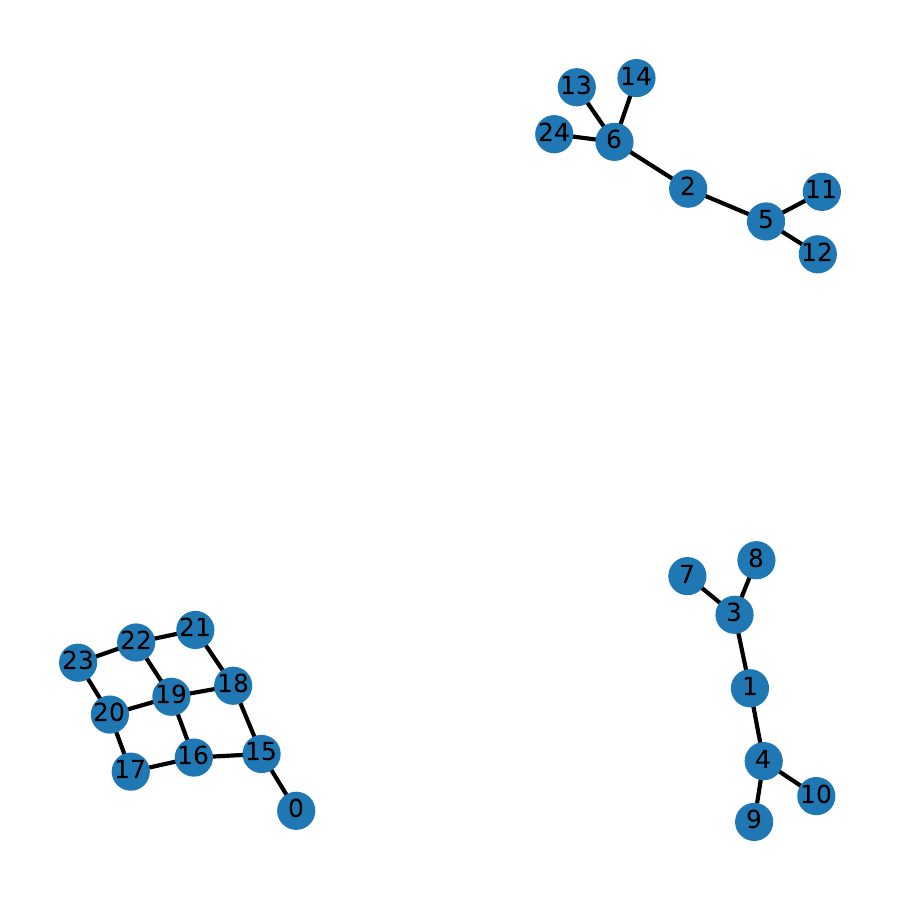}
    \caption{$4$-hop neighbourhood of node $15$ on \TREEGRID. 
    Given a high-accuracy model, all nodes are predicted correctly in the original graph. Yet, removing edges outside of the ground truth explanation (right) yields a prediction change for node $15$.}
    \label{fig:treegrid-exampleanalysis}
\end{figure}

\begin{table}[h!]
    \centering
    \caption{Test and validation accuracy averaged across five random seeds for \TREEGRID.}
    \label{tab:TreeGrid-table}
    \begin{tabular}{ccc}
        \toprule
        Layers & Test Accuracy & Validation Accuracy \\
        \midrule
        2 & $0.8774 \pm 0.0060$ & $0.9073 \pm 0.0040$ \\
        3 & $0.9774 \pm 0.0094$ & $0.9870 \pm 0.0040$ \\
        4 & $0.9694 \pm 0.0236$ & $0.9740 \pm 0.0130$ \\
        5 & $0.9581 \pm 0.0139$ & $0.9707 \pm 0.0151$ \\
        \bottomrule
    \end{tabular}
\end{table}

\subsection{Molecular datasets}
\label[appendix]{appx:chemical-datasets}

\begin{table*}[t]
    \centering
    \caption{Mean 5-fold cross-validation test performance (accuracy for \Mutagenicity and \Benzene; $F_1$ score for \AlkaneCarbonyl and \FluorideCarbonyl) for the best hyperparameter configuration across each architecture and dataset for the additional analysis provided in \cref{appx:chemical-datasets}.}
    \scalebox{0.9}{
    \begin{tabular}{lcccc}
    \toprule
     & \multicolumn{2}{c}{Accuracy} & \multicolumn{2}{c}{$F_1$ score} \\
    \cmidrule(lr){2-3} \cmidrule(lr){4-5}
     & \Mutagenicity & \Benzene & \AlkaneCarbonyl & \FluorideCarbonyl \\
    \midrule
    GCN       & 0.77 $\pm$ 0.05 & 0.85 $\pm$ 0.07 & 0.42 $\pm$ 0.31 & 0.61 $\pm$ 0.14 \\
    GIN       & 0.80 $\pm$ 0.02 & 0.91 $\pm$ 0.03 & 0.70 $\pm$ 0.17 & 0.71 $\pm$ 0.10 \\
    GraphSAGE & 0.77 $\pm$ 0.06 & 0.85 $\pm$ 0.04 & 0.43 $\pm$ 0.28 & 0.63 $\pm$ 0.12 \\
    PNA       & 0.80 $\pm$ 0.02 & 0.99 $\pm$ 0.02 & 0.71 $\pm$ 0.18 & 0.77 $\pm$ 0.11 \\
    GAT       & 0.77 $\pm$ 0.04 & 0.93 $\pm$ 0.06 & 0.49 $\pm$ 0.27 & 0.71 $\pm$ 0.09 \\
    \bottomrule
    \end{tabular}
    }
    \label{tab:all-datasets}
\end{table*}

We consider four widely adopted molecular datasets: \Mutagenicity, \Benzene, \AlkaneCarbonyl, and \FluorideCarbonyl \citep{kazius2005derivation, sanchez2020evaluating}\footnote{For our analysis, we used the dataset implementations provided by \textsc{GraphXAI} \citep{agarwal2023evaluating}, \url{https://github.com/mims-harvard/GraphXAI}. Note that in the public repository, the folder names for \FluorideCarbonyl and \AlkaneCarbonyl are inverted; we corrected this mismatch in our pipeline.}. \Mutagenicity is a graph-classification dataset annotated according to the mutagenic effect of several organic molecules. Previous studies have identified the $\mathrm{NO}_2$ and $\mathrm{NH}_2$ functional groups as the ground truth explanation $\mu^*$ \citep{ying2019gnnexplainer, luo2020parameterized}\footnote{In the literature, two separate datasets are referred to by this name although they originate from different studies, \MUTAG \citep{debnath1991structure} and \Mutagenicity \citep{kazius2005derivation}. Their primary difference lies in dataset size: \MUTAG consists of $188$ graphs, whereas \Mutagenicity contains $4337$. Since these names are frequently used interchangeably, we specify that all our reported results correspond to the larger dataset.}.
\Benzene, \FluorideCarbonyl, and \AlkaneCarbonyl, instead, are datasets where the task is to classify graphs based on the presence of specific chemical substructures that serve as the ground-truth motifs defining the positive class: a six-carbon aromatic ring for \Benzene; a fluoro group ($\mathrm{F}$) paired with a carbonyl group ($\mathrm{C{=}O}$) for \FluorideCarbonyl; and a carbonyl group alongside a three-carbon single-bonded chain for \AlkaneCarbonyl.

For each dataset, we test whether it is possible to train a model completely aligned with the intended ground truth. If that was the case, the model would never fail in the task of recognising the motif. Therefore, we check that it obtains near-perfect accuracy ($\ge0.99)$ on the task at hand, a test previously justified in \citet{faber2021comparing}. For all datasets, we performed a grid search over five backbones (GCN \citep{kipf2016semi}, GIN \citep{xu2018powerful}, GraphSAGE \citep{hamilton2017inductive}, PNA \citep{corso2020principal}, and GAT \citep{velivckovic2018graph}) across $1$ to $4$ layers and hidden dimensions in $\{16, 32, 64, 128, 256\}$. Every model employed global add pooling and was optimized with Adam (learning rate $=10^{-3}$, weight decay $=10^{-5}$, batch size $=128$) for up to $500$ epochs with cross-entropy loss. We evaluated each configuration using $5$-fold cross-validation, selecting the epoch with the lowest validation loss in each fold. Each dataset had both node features (atom type) and edge features (chemical bond type). We used the latter whenever the architecture could use them. We report the average test accuracy of the best configurations for all datasets in \cref{tab:all-datasets}.

Almost none of the model-dataset pairs achieve near-perfect accuracy. This suggests that these models are often unable to reliably recognise the ground-truth motifs, indicating that \cref{ass:shortcut} may not hold; otherwise, the models would be expected to achieve near-perfect discriminative performance.
We provide additional discussions on why this does not happen for each of the datasets in the following paragraphs.

\begin{wraptable}{r}{0.45\textwidth}
    \centering
    \caption{Percentage of prediction changes on \Mutagenicity across perturbed graphs for positive, mutagenic graphs containing $\mathrm{NO}_2$ or $\mathrm{NH}_2$ groups.}
    \label{tab:MUTAG-exp}
    \scalebox{0.90}{%
    \begin{tabular}{@{}lcc@{}}
    \toprule
    \textbf{Model} & \textbf{\makecell{Pred. change (\%)}} \\ \midrule
        GCN      & 39\% \\
        GIN      & 43\% \\
        GraphSAGE  & 46\% \\
        PNA        & 37\% \\
        GAT      & 41\% \\
    \bottomrule
    \end{tabular}
    }
\end{wraptable}

\textbf{\Mutagenicity.} We verified the model behaviour through an additional experiment. We first retrained the best-performing model configurations from scratch under identical settings on a $70\%/15\%/15\%$ train/val/test split; then, we selected all test mutagenic graphs with a $\mathrm{NO}_2$ or $\mathrm{NH}_2$ group and substituted them with a single hydrogen $\mathrm{H}$. This targeted perturbation is specifically designed to generate realistic organic molecules, avoiding out-of-distribution graphs \citep{zheng2023robust}. 
If the mutagenicity of these molecules is exclusively linked to the presence of the $\mathrm{NO}_2$ or $\mathrm{NH}_2$ groups, we expect most perturbed graphs to have a different prediction than the original one.
However, as shown in \cref{tab:MUTAG-exp}, more than \textit{half} of the graphs did not yield a prediction change, meaning that the $\mathrm{NO}_2$ or $\mathrm{NH}_2$ groups are not the exclusive factor influencing predictions.
Reinforcing this result, \citet{Juntao2022CF2} points out that the original \MUTAG paper \citep{debnath1991structure} explicitly excluded the $\mathrm{NH}_2$ functional group as a cause of mutagenicity, as it is active only under specific conditions. Thus, $\mathrm{NH}_2$ is likely a spurious signal, whereas $\mathrm{NO}_2$ is only one of multiple possible drivers of mutagenicity in the dataset, as shown by our analysis.

\textbf{\Benzene.} Our experiments reported in \cref{tab:all-datasets} show that the GNNs achieving perfect classification is PNA \citep{corso2020principal}, which utilizes edge features. However, if all atoms in the datasets were explicitly represented as nodes, aromatic bonds such as those in the benzene rings could be identified purely from node degrees, as aromatic carbons connect to fewer neighbour atoms. Unfortunately, because hydrogen atoms are not explicitly represented in this dataset, graph connectivity alone cannot capture these differences, and aromaticity is encoded via edge features. This directly implies that GNNs that do not process edge features, like GCN \citep{kipf2016semi}, GIN \citep{xu2018powerful}, and GraphSAGE \citep{hamilton2017inductive}, need to rely on alternative signals outside the true ground-truth six-carbon aromatic ring subgraph, increasing their reliance on spurious correlations. Consequently, even an accuracy $\ge 0.90$ does not indicate a model aligned with the intended ground truth \citep{faber2021comparing}. Models without edge features were used with this dataset in the past \citep{sanchez2020evaluating,muschalik2025exact, pmlr-v235-bui24b, agarwal2023evaluating}, but they could not achieve perfect accuracy.

\textbf{\FluorideCarbonyl and \AlkaneCarbonyl.} 
Although the ground truth motifs are fully discriminative and the datasets contain no label noise, no GNN could reach perfect performance despite hyperparameter optimization (see \cref{tab:all-datasets}). 

\section{Additional details on compiling GNNs from logic}
\label[appendix]{appx:compiler-details}
\subsection{Practical example of a compiled GNN}
\label[appendix]{appx:example-compiled}



In this section, we provide a practical example of how a graded modal logic sentence can be compiled into a GNN, described next.

\begin{example}
\label{ex:compile}
    Let $\Phi(u)=\mathrm{red}(u)\wedge\Diamond_{\ge 2}\,\mathrm{blue}$, \ie \textit{a red node with at least two blue neighbours}.
    Its subformulas are $\mathrm{sub}(\Phi)=[\,\mathrm{red},\; \mathrm{blue},\; \Diamond_{\ge 2}\,\mathrm{blue},\; \Phi\,]$, of depths $0,0,1,2$, so the compiled network has width $d=4$ and $L=2$ layers.
    Initialisation copies the two colours into coordinates $1$ and $2$.
    Layer $1$ carries them forward ($C^{(1)}_{11}=C^{(1)}_{22}=1$) and evaluates the diamond ($A^{(1)}_{23}=1$, $b^{(1)}_3=-1$), while layer $2$ carries all three forward and evaluates the conjunction ($C^{(2)}_{14}=C^{(2)}_{34}=1$, $b^{(2)}_4=-1$), all remaining entries being zero.
    Unrolling \cref{eq:acgnn} at coordinate $4$ gives
    \[
        \big(\vh^{(2)}_u\big)_4 \;=\; \sigma\Big(\underbrace{\big(\vh^{(0)}_u\big)_1}_{\mathrm{red}(u)}
        \;+\; \sigma\Big(\sum_{v \in \calN(u)}\underbrace{\big(\vh^{(0)}_v\big)_2}_{\mathrm{blue}(v)} \;-\; 1\Big) \;-\; 1\Big) ,
    \]
    which equals $1$ exactly when $u$ is red and at least two of its neighbours are blue.
\end{example}

The same classifier is obtained in our library by parsing its string representation and compiling it:
\begin{lstlisting}
from compiler import parse, compile_formula_heterogeneous as compile
from compiler.pyg_module import from_compiled_heterogeneous_pyg as to_pyg
matrices = compile(parse("Col(red) & <2> Col(blue)")) # $\smash{\{C^{(\ell)}\}_L,\{A^{(\ell)}\}_L,\{b^{(\ell)}\}}_L$
gnn = to_pyg(matrices)                                # torch.nn.Module
\end{lstlisting}

\subsection{Desugaring the derived connectives}
\label[appendix]{appx:desugaring}

\cref{eq:gml} lists only the core operators the compiler writes into weights.
Every other connective we use is rewritten into that core before compilation.
Propositional connectives are standard, $\varphi\vee\psi\equiv\neg(\neg\varphi\wedge\neg\psi)$, $\varphi\rightarrow\psi\equiv\neg(\varphi\wedge\neg\psi)$ and $\varphi\leftrightarrow\psi\equiv(\varphi\rightarrow\psi)\wedge(\psi\rightarrow\varphi)$, and so is the dual modality $\Box\varphi\equiv\neg\Diamond_{\ge 1}\neg\varphi$.
The counting modalities are obtained from the graded diamond alone:
%
\begin{align*}
    \Diamond_{\le n}\varphi &\;\equiv\; \neg\,\Diamond_{\ge n+1}\varphi , \\
    \Diamond_{=n}\varphi    &\;\equiv\; \Diamond_{\ge n}\varphi \wedge \neg\,\Diamond_{\ge n+1}\varphi
        \quad\text{(if } n>0\text{)}, \\
    \Diamond_{=0}\varphi    &\;\equiv\; \neg\,\Diamond_{\ge 1}\varphi ,\\
    \Diamond_{[n,m]}\varphi &\;\equiv\; \Diamond_{\ge n}\varphi \wedge \neg\,\Diamond_{\ge m+1}\varphi .
\end{align*}
The comparative operator $\Diamond_{>}(\varphi,\psi)$, reading \textit{``more neighbours satisfy $\varphi$ than $\psi$''}, is the one construct that is not a simple rewriting, since a comparison between two unbounded counts is not expressible in graded modal logic, the fragment of $\text{FOC}_2$ used in \citet{barcelo2020logical}.
We therefore fix a bound $n$ on the counts, which in our datasets is the maximum degree, and write
\[
\label{eq:comparative}
    \Diamond_{>}(\varphi,\psi) \;\equiv\; \bigvee_{m=1}^{n}\Big(\Diamond_{\ge m}\varphi \wedge \neg\,\Diamond_{\ge m}\psi\Big) ,
\]
\ie there is a threshold $m$ met by the $\varphi$-neighbours and missed by the $\psi$-neighbours. 
The variant $\Diamond_{=}(\varphi,\psi)$ replaces the clause in parenthesis with $\Diamond_{=m}\varphi\wedge\Diamond_{=m}\psi$ and starts the disjuction from $m=0$.
Written with binary connectives, each disjunction costs three layers, since $\varphi\vee\psi\equiv\neg(\neg\varphi\wedge\neg\psi)$, so \cref{eq:comparative} would compile to a GNN of depth $3n$.
We avoid this by compiling flat $n$-ary conjunction and disjunction gates directly, in a single column each,
\[
\label{eq:flat-gates}
    \bigwedge_{k=1}^{n}\varphi_k : \; C^{(\ell)}_{ki}=1 \;\;\forall k, \;\; b^{(\ell)}_i = -(n-1) ,
    \qquad
    \bigvee_{k=1}^{n}\varphi_k : \; C^{(\ell)}_{ki}=1 \;\;\forall k, \;\; b^{(\ell)}_i = 0 ,
\]
which are correct because the incoming coordinates are Boolean and $\sigma$ saturates at $1$.
The comparative operator then compiles to a GNN of depth $4$ regardless of $n$: one layer aggregates all the thresholds $\Diamond_{\ge m}\varphi$ and $\Diamond_{\ge m}\psi$ in parallel, one negates, one forms the clauses, and one disjoins them with the flat gate of \cref{eq:flat-gates}.
%
The $n$ thresholds are thus absorbed into the width of a layer rather than the depth of the compiled model.
Note that the thresholds can share a single layer because each occupies its own column of $A^{(\ell)}$: these columns all read the same neighbour sum and differ only in their bias entry $b^{(\ell)}_i = 1-m$.

\subsection{Ablation on comparative operator implementation}
\label{appx:ablation-comparative}

The comparative operator, as previously mentioned, is not expressible in the original fragment of graded modal logic used in \citet{barcelo2020logical}. 
However, during the writing of this manuscript, we found out that \citet{Nunn2024logic} extended the results of \citet{barcelo2020logical} to express comparisons and, more generally, linear combinations of counting modalities. 
Therefore, we ablated our original implementation (which is exact only when the bound $n$ is equal to or higher than the maximum degree of the graph) with one derived directly from \citet{Nunn2024logic}:
\[
\label{eq:alt-comparative}
    \Diamond_{>}(\varphi_j,\psi_k) : \; A^{(\ell)}_{ji}=1 ,\;\; A^{(\ell)}_{ki}=-1,  \;\; b^{(\ell)}_i = 0.
\]
We report the ablation results in \cref{tab:alt-comparative}.

\textbf{Results \& Discussion.}
Most explainers in \cref{tab:alt-comparative} are unaffected by the change. \gnnexplainer shows minor variations on \MAJTWOHOP, and while \graphmask improves in recall, its overall $F_1$ score remains within the margin of noise.
These fluctuations are due to the fact that explainers rely on the underlying model: \graphmask initialization depends on the input/output dimensions of each layer; \gnnexplainer, instead, is affected by the internal gradients of the model and the output of the layers. \saliency and \lrp, instead, are noticeably affected by the change because the alternative implementation introduces negative gradients.
Because \saliency returns absolute gradient values, edges with large negative gradients (e.g., from a blue node to the target in \MAJONEHOP) are incorrectly assigned high importance, reducing precision in both tasks.
For \lrp, the final relevance of an edge is computed as the product of its subsequent edges' relevancies.
An edge from a red node to a blue-ish intermediary has a negative gradient for that intermediary; however, because the intermediary itself has a negative gradient with respect to the final target, their product results in a false positive relevance. This compounding effect explains why the performance drops slightly on \MAJTWOHOP but remains identical on \MAJONEHOP. 

\begin{table}[t]
    \centering
    \small
    \setlength{\tabcolsep}{4pt}
    \scalebox{0.90}{
    \begin{tabular}{l ccc ccc}
        \toprule
         & \multicolumn{3}{c}{Original ($\Diamond_{>}$ expanded)} & \multicolumn{3}{c}{Alternative ($\Diamond_{>}$ direct)} \\
        \cmidrule(lr){2-4}\cmidrule(lr){5-7}
        Explainer & precision & recall & $F_1$ & precision & recall & $F_1$ \\
        \midrule
        \multicolumn{7}{c}{\MAJONEHOP} \\
        \midrule
        \saliency                & $\nentry{1.00}{0.00}$ & $\nentry{1.00}{0.00}$ & $\nentry{1.00}{0.00}$ & $\nentry{0.90}{0.01}$ & $\nentry{1.00}{0.00}$ & $\nentry{0.94}{0.00}$ \\
        \intgrad                 & $\nentry{1.00}{0.00}$ & $\nentry{1.00}{0.00}$ & $\nentry{1.00}{0.00}$ & $\nentry{1.00}{0.00}$ & $\nentry{1.00}{0.00}$ & $\nentry{1.00}{0.00}$ \\
        \deconv                  & $\nentry{1.00}{0.00}$ & $\nentry{1.00}{0.00}$ & $\nentry{1.00}{0.00}$ & $\nentry{1.00}{0.00}$ & $\nentry{1.00}{0.00}$ & $\nentry{1.00}{0.00}$ \\
        \guidedbp                & $\nentry{1.00}{0.00}$ & $\nentry{1.00}{0.00}$ & $\nentry{1.00}{0.00}$ & $\nentry{1.00}{0.00}$ & $\nentry{1.00}{0.00}$ & $\nentry{1.00}{0.00}$ \\
        \gnnexplainer            & $\nentry{1.00}{0.00}$ & $\nentry{1.00}{0.00}$ & $\nentry{1.00}{0.00}$ & $\nentry{1.00}{0.00}$ & $\nentry{1.00}{0.00}$ & $\nentry{1.00}{0.00}$ \\
        \graphmask               & $\nentry{0.00}{0.00}$ & $\nentry{1.00}{0.00}$ & $\nentry{0.00}{0.00}$ & $\nentry{0.00}{0.00}$ & $\nentry{0.99}{0.01}$ & $\nentry{0.00}{0.00}$ \\
        \subgraphx               & $\nentry{0.36}{0.00}$ & $\nentry{0.88}{0.01}$ & $\nentry{0.49}{0.01}$ & $\nentry{0.36}{0.00}$ & $\nentry{0.88}{0.01}$ & $\nentry{0.49}{0.01}$ \\
        \pgmexplainer            & $\nentry{0.43}{0.01}$ & $\nentry{0.99}{0.00}$ & $\nentry{0.59}{0.01}$ & $\nentry{0.43}{0.01}$ & $\nentry{0.99}{0.00}$ & $\nentry{0.59}{0.01}$ \\
        \graphsvx                & $\nentry{0.50}{0.00}$ & $\nentry{1.00}{0.00}$ & $\nentry{0.66}{0.00}$ & $\nentry{0.50}{0.00}$ & $\nentry{1.00}{0.00}$ & $\nentry{0.66}{0.00}$ \\
        \flowx                   & $\nentry{1.00}{0.00}$ & $\nentry{1.00}{0.00}$ & $\nentry{1.00}{0.00}$ & $\nentry{1.00}{0.00}$ & $\nentry{1.00}{0.00}$ & $\nentry{1.00}{0.00}$ \\
        \lrp                     & $\nentry{1.00}{0.00}$ & $\nentry{1.00}{0.00}$ & $\nentry{1.00}{0.00}$ & $\nentry{1.00}{0.00}$ & $\nentry{1.00}{0.00}$ & $\nentry{1.00}{0.00}$ \\
        \cmidrule(lr){1-7}
        \randombaseline          & $\nentry{0.33}{0.17}$ & $\nentry{0.81}{0.06}$ & $\nentry{0.41}{0.18}$ & $\nentry{0.33}{0.17}$ & $\nentry{0.81}{0.06}$ & $\nentry{0.41}{0.18}$ \\
        \midrule
        \multicolumn{7}{c}{\MAJTWOHOP} \\
        \midrule
        \saliency                & $\nentry{1.00}{0.00}$ & $\nentry{0.87}{0.02}$ & $\nentry{0.92}{0.02}$ & $\nentry{0.67}{0.01}$ & $\nentry{0.87}{0.02}$ & $\nentry{0.73}{0.01}$ \\
        \intgrad                 & $\nentry{0.96}{0.01}$ & $\nentry{1.00}{0.00}$ & $\nentry{0.97}{0.01}$ & $\nentry{0.96}{0.01}$ & $\nentry{1.00}{0.00}$ & $\nentry{0.97}{0.01}$ \\
        \deconv                  & $\nentry{0.94}{0.02}$ & $\nentry{1.00}{0.00}$ & $\nentry{0.96}{0.01}$ & $\nentry{0.94}{0.02}$ & $\nentry{1.00}{0.00}$ & $\nentry{0.96}{0.01}$ \\
        \guidedbp                & $\nentry{1.00}{0.00}$ & $\nentry{0.87}{0.02}$ & $\nentry{0.92}{0.02}$ & $\nentry{1.00}{0.00}$ & $\nentry{0.87}{0.02}$ & $\nentry{0.92}{0.02}$ \\
        \gnnexplainer            & $\nentry{0.96}{0.01}$ & $\nentry{0.99}{0.00}$ & $\nentry{0.97}{0.01}$ & $\nentry{0.94}{0.02}$ & $\nentry{1.00}{0.00}$ & $\nentry{0.96}{0.01}$ \\
        \graphmask               & $\nentry{0.02}{0.01}$ & $\nentry{0.78}{0.12}$ & $\nentry{0.03}{0.01}$ & $\nentry{0.01}{0.00}$ & $\nentry{0.96}{0.06}$ & $\nentry{0.02}{0.00}$ \\
        \subgraphx               & \multicolumn{3}{c}{\OOT} & \multicolumn{3}{c}{\OOT} \\
        \pgmexplainer            & $\nentry{0.30}{0.02}$ & $\nentry{0.83}{0.01}$ & $\nentry{0.41}{0.02}$ & $\nentry{0.30}{0.02}$ & $\nentry{0.83}{0.01}$ & $\nentry{0.41}{0.02}$ \\
        \graphsvx                & $\nentry{0.56}{0.02}$ & $\nentry{0.74}{0.02}$ & $\nentry{0.61}{0.02}$ & $\nentry{0.56}{0.02}$ & $\nentry{0.74}{0.02}$ & $\nentry{0.61}{0.02}$ \\
        \flowx                   & $\nentry{0.91}{0.01}$ & $\nentry{0.95}{0.01}$ & $\nentry{0.92}{0.01}$ & $\nentry{0.91}{0.01}$ & $\nentry{0.95}{0.01}$ & $\nentry{0.92}{0.01}$ \\
        \lrp                     &  $\nentry{1.00}{0.01}$ & $\nentry{0.89}{0.02}$ & $\nentry{0.93}{0.01}$ & $\nentry{0.96}{0.01}$ & $\nentry{0.87}{0.02}$ & $\nentry{0.90}{0.02}$ \\
        \cmidrule(lr){1-7}
        \randombaseline          & $\nentry{0.22}{0.01}$ & $\nentry{0.51}{0.04}$ & $\nentry{0.29}{0.01}$ & $\nentry{0.22}{0.01}$ & $\nentry{0.51}{0.04}$ & $\nentry{0.29}{0.01}$ \\
        \bottomrule
    \end{tabular}}
    \caption{
        \textbf{Plausibility under the ablation of the comparative operator}, averaged over $5$ seeds.
        The original columns compile $\Diamond_{>}(\varphi,\psi)$ as the capped disjunction of \cref{eq:comparative}, the alternative ones as a single aggregation layer defined in \cref{eq:alt-comparative}.
        The graphs, the labels, and \UPI are identical in both cases and equal those of \cref{tab:explainers-nc-majority}; only the compiled GNN changes, from $4$ to $1$ layers on \MAJONEHOP and from $8$ to $2$ on \MAJTWOHOP, and from $203$ to $3$ and $505$ to $5$ hidden dimensions respectively.
        %
        A `\OOT' marks an explainer that exceeded the per-explanation time budget on every seed.
    }
    \label{tab:alt-comparative}
\end{table}

\subsection{Keeping the backward pass of a compiled model informative}
\label[appendix]{appx:compiled-backward}

A compiled network computes with integers, as by \cref{eq:invariant} every coordinate is Boolean, and every weight in \cref{tab:weights} is an integer.
Consequently $\sigma$ is always evaluated on the boundaries of the activations ($z\in\{0,1\}$) or on a saturated flat ($z\le-1$ or $z\ge 2$), where the gradient $\sigma'(z)$ is $0$, and this may create problems for gradient-based explainers.
To avoid issues with meaningless gradients, we therefore replace the derivative of $\sigma$ with
\[
\label{eq:ste}
    \widetilde{\sigma}'(z) \;\defeq\; \Ind{z>0} ,
\]
and leave the forward pass untouched.
This is applied for every explainer, except for \intgrad for the reasons described below.
Note that \cref{eq:ste} is the derivative of a \textsc{ReLU} rather than of a truncated one, and provides the backward gradient that a conventional architecture would provide at that unit.
\cref{eq:ste} is also semantically meaningful, since for every pattern in \cref{tab:weights} the condition $z>0$ coincides with the truth of the subformula the unit computes: $z=1-\alpha_k$ for $\neg\varphi_k$, $z=\alpha_j+\alpha_k-1$ for $\varphi_j\wedge\varphi_k$, $z=\nu-N+1$ for $\Diamond_{\ge N}\varphi_k$, and $z=\alpha_k$ for a carried-forward unit, where $\alpha_k$ is the incoming value of coordinate $k$ and $\nu$ the number of neighbours satisfying $\varphi_k$.
Gradient therefore flows along the subformulas that are satisfied.

\textbf{\intgrad is evaluated with the exact gradient.}
\intgrad is the only explainer for which we keep the unmodified derivative of $\sigma$.
This is because, otherwise, the change may violate the Completeness axiom \citep{sundararajan2017axiomatic}, as attributions may not sum up to the difference in the output between the original samples and the baseline of \intgrad.
Our ablation study in \cref{tab:ig-backward} confirms that \intgrad is nonetheless robust to this choice: the two backward passes yield identical scores on five tasks out of six, and on \MAJTWOHOP the surrogate gradient gains $0.04$ precision in exchange for $0.13$ in recall, the final $F_1$ being $0.92$ instead of the original $0.97$ reported in \cref{tab:explainers}.
%
%

\begin{table}[h!]
    \centering
    \small
    \setlength{\tabcolsep}{4pt}
    \scalebox{0.90}{
    \begin{tabular}{l ccc}
        \toprule
        Task & precision & recall & $F_1$ \\
        \midrule
        \REDBLUELEAF     & $\nentry{1.00}{0.00}$ & $\nentry{1.00}{0.00}$ & $\nentry{1.00}{0.00}$ \\
        \REDUNDANCY      & $\nentry{1.00}{0.00}$ & $\nentry{1.00}{0.00}$ & $\nentry{1.00}{0.00}$ \\
        \COMPREDUNDANCY  & $\nentry{1.00}{0.00}$ & $\nentry{0.50}{0.00}$ & $\nentry{0.67}{0.00}$ \\
        \midrule
        \REDBLUELEAFNC   & $\nentry{1.00}{0.00}$ & $\nentry{1.00}{0.00}$ & $\nentry{1.00}{0.00}$ \\
        \MAJONEHOP       & $\nentry{1.00}{0.00}$ & $\nentry{1.00}{0.00}$ & $\nentry{1.00}{0.00}$ \\
        \MAJTWOHOP       & $\nentry{1.00}{0.00}$ & $\nentry{0.87}{0.02}$ & $\nentry{0.92}{0.02}$ \\
        \bottomrule
    \end{tabular}}
    \caption{
        \textbf{\intgrad under the surrogate backward pass} of \cref{eq:ste}, averaged over $5$ seeds.
        The graphs, the labels, \UPI, and the forward pass are identical to those in \cref{tab:explainers} and \cref{tab:explainers-nc-majority}.
    }
    \label{tab:ig-backward}
\end{table}

\subsection{Ablating how the formula is written}
\label[appendix]{appx:compilation-choices}

In standard training, we cannot control how a model internally parametrizes a learned function.
Therefore, when functionally equivalent models exist for a given task, a reliable explainer should ideally perform consistently regardless of the underlying architecture \citep{sundararajan2017axiomatic, nauta2023anecdotal}. 
Because \METHODNAME models are compiled rather than trained, we have full control over the underlying implementation. 
This allows us to turn conditions that are usually inextricably confounded in trained models -- such as vanishing gradients, layer saturation, or specific weight configurations -- into controlled experiments. 
In this section, we test the robustness of explainers by observing how they react when they are evaluated on different but logically equivalent compiled GNNs. 
In the following, we test an alternative choice to compile the disjunction operator for \COMPREDUNDANCY, and an alternative logic formula for the \REDBLUELEAF task.

\textbf{Alternative disjunction operator for \COMPREDUNDANCY.}
Our original implementation of \COMPREDUNDANCY encodes the ``$\lor$'' operator via De Morgan's laws ($\varphi\vee\psi\equiv\neg(\neg\varphi\wedge\neg\psi)$, see \cref{appx:desugaring}), which requires intermediate subformulas with nested ``$\land$'' and ``$\neg$'' operations. 
To evaluate the sensitivity of explainers to alternative compilation choices, we compare the original implementation reported in the main text against an alternative formulation using the $n$-ary, flat definition of ``$\lor$'', described in \cref{eq:flat-gates}. 
While input graphs, labels, and \UPI remain identical to those in \cref{tab:explainers}, this alternative compilation shrinks the GNN from $8$ to $3$ layers and from $16$ to $9$ hidden dimensions, as it does not require nested ``$\land$'' and ``$\neg$'' operations. 

As \cref{tab:compilations-disjunction} shows, the two implementations yield drastically different results: five explainers (\saliency, \deconv, \guidedbp, \gnnexplainer, and \lrp) jump from random-level to perfect performance.
%
%
This is because, by \cref{eq:ste}, the gradient can only travel along satisfied subformulas, while a negation is satisfied exactly when its argument is not, and the two are never satisfied together. 
Crucially, De Morgan's disjunction puts a negation on every path from the output to its branches, so no gradient ever reaches them.\footnote{The same absence of signal would arise with the exact derivative of $\sigma$ in \cref{eq:acgnn}, but for both compilations alike, since for positive instances of \COMPREDUNDANCY all three colour-group clauses hold and therefore gates saturate, yielding zero gradient.} Conversely, the flat gate disjunction of \cref{eq:flat-gates} contains no negation at all, and explainers may benefit from this.
Only \intgrad, \flowx, and \graphsvx maintain consistent results across both formulations, indicating a more robust behaviour. 

\begin{table}[t]
    \centering
    \small
    \setlength{\tabcolsep}{4pt}
    \caption{
        \textbf{Plausibility on \COMPREDUNDANCY over $5$ seeds, for two compilations of the same disjunction}, see \cref{appx:compilation-choices}.
        The original columns report the implementation of the disjunction ``$\varphi\vee\psi$'' with De Morgan's law ``$\neg(\neg\varphi\wedge\neg\psi)$''.
        The alternative ones use the flat $n$-ary gate (\cref{eq:flat-gates}), which puts no negation on any path.
    }
    \scalebox{1.0}{
    \begin{tabular}{l ccc ccc}
        \toprule
         & \multicolumn{3}{c}{Original ($\vee$ via De Morgan)} & \multicolumn{3}{c}{Alternative ($\vee$ flat gate)} \\
        \cmidrule(lr){2-4}\cmidrule(lr){5-7}
        Explainer & precision & recall & $F_1$ & precision & recall & $F_1$ \\
        \midrule
        \saliency                & $\nentry{0.04}{0.00}$ & $\nentry{1.00}{0.00}$ & $\nentry{0.09}{0.00}$ & $\nentry{1.00}{0.00}$ & $\nentry{1.00}{0.00}$ & $\nentry{1.00}{0.00}$ \\
        \intgrad                 & $\nentry{1.00}{0.00}$ & $\nentry{0.50}{0.00}$ & $\nentry{0.67}{0.00}$ & $\nentry{1.00}{0.00}$ & $\nentry{0.50}{0.00}$ & $\nentry{0.67}{0.00}$ \\
        \deconv                  & $\nentry{0.04}{0.00}$ & $\nentry{1.00}{0.00}$ & $\nentry{0.09}{0.00}$ & $\nentry{1.00}{0.00}$ & $\nentry{1.00}{0.00}$ & $\nentry{1.00}{0.00}$ \\
        \guidedbp                & $\nentry{0.04}{0.00}$ & $\nentry{1.00}{0.00}$ & $\nentry{0.09}{0.00}$ & $\nentry{1.00}{0.00}$ & $\nentry{1.00}{0.00}$ & $\nentry{1.00}{0.00}$ \\
        \gnnexplainer            & $\nentry{0.04}{0.00}$ & $\nentry{1.00}{0.00}$ & $\nentry{0.09}{0.00}$ & $\nentry{1.00}{0.00}$ & $\nentry{1.00}{0.00}$ & $\nentry{1.00}{0.00}$ \\
        \graphmask               & $\nentry{0.04}{0.00}$ & $\nentry{1.00}{0.00}$ & $\nentry{0.09}{0.00}$ & $\nentry{0.05}{0.00}$ & $\nentry{1.00}{0.00}$ & $\nentry{0.09}{0.01}$ \\
        \subgraphx               & $\nentry{0.19}{0.01}$ & $\nentry{0.69}{0.01}$ & $\nentry{0.15}{0.00}$ & $\nentry{0.18}{0.01}$ & $\nentry{0.72}{0.02}$ & $\nentry{0.14}{0.00}$ \\
        \pgmexplainer            & $\nentry{0.35}{0.16}$ & $\nentry{0.56}{0.08}$ & $\nentry{0.39}{0.15}$ & $\nentry{0.25}{0.18}$ & $\nentry{0.49}{0.35}$ & $\nentry{0.25}{0.18}$ \\
        \graphsvx                & $\nentry{0.47}{0.03}$ & $\nentry{0.74}{0.16}$ & $\nentry{0.56}{0.06}$ & $\nentry{0.47}{0.03}$ & $\nentry{0.74}{0.16}$ & $\nentry{0.56}{0.06}$ \\
        \flowx                   & $\nentry{1.00}{0.00}$ & $\nentry{0.97}{0.07}$ & $\nentry{0.98}{0.04}$ & $\nentry{1.00}{0.00}$ & $\nentry{0.97}{0.07}$ & $\nentry{0.98}{0.04}$ \\
        \lrp                     & $\nentry{0.04}{0.00}$ & $\nentry{1.00}{0.00}$ & $\nentry{0.09}{0.00}$ & $\nentry{1.00}{0.00}$ & $\nentry{1.00}{0.00}$ & $\nentry{1.00}{0.00}$ \\
        \midrule
        \randombaseline          & $\nentry{0.06}{0.01}$ & $\nentry{0.73}{0.15}$ & $\nentry{0.10}{0.01}$ & $\nentry{0.06}{0.01}$ & $\nentry{0.73}{0.15}$ & $\nentry{0.10}{0.01}$ \\
        \bottomrule
    \end{tabular}}
    \label{tab:compilations-disjunction}
\end{table}


\textbf{Alternative formula for \REDBLUELEAF.}
To further investigate how the syntactical structure of a formula impacts explanations, we compare two distinct compiled models for the \REDBLUELEAF task (see \cref{appx:datasets}). In the original formulation reported in the main text, the node predicate is defined simply as:
\[
    \varphi_\wedge(u) \;=\; \mathrm{midend}(u) \wedge\Diamond_{\ge 1}\,\mathrm{midend} ,
    \qquad \Phi \;=\; \exists u.\,\varphi_\wedge(u) ,
\]
which fires on the two interior nodes of the chain; see \cref{appx:datasets} for the full formula. We compare it against a second, more complex predicate for the same motif, built from the same vocabulary:
\begin{align*}
    \mathrm{leaf}(u)   &\;=\; \mathrm{blue}(u) \wedge\Diamond_{=1}\top
        & &\text{\footnotesize a blue node of degree one,} \\
    \mathrm{core}(u)   &\;=\; \mathrm{red}(u) \wedge\Diamond_{=2}\,\mathrm{leaf}
        & &\text{\footnotesize a red node with exactly two blue leaves,} \\
    \mathrm{end}(u)    &\;=\; \mathrm{core}(u) \wedge\Diamond_{=1}\,\mathrm{core}
        & &\text{\footnotesize endpoint of the chain,} \\
    \mathrm{mid}(u)    &\;=\; \mathrm{core}(u) \wedge\Diamond_{=2}\,\mathrm{core}
        & &\text{\footnotesize interior node of the chain,} \\
    \mathrm{midend}(u) &\;=\; \mathrm{mid}(u) \wedge\Diamond_{\ge 1}\,\mathrm{end}
        & &\text{\footnotesize interior node touching an endpoint,} \\
    \mathrm{midmid}(u) &\;=\; \mathrm{mid}(u) \wedge\Diamond_{\ge 1}\,\mathrm{midend}
        & &\text{\footnotesize interior node beside such an interior node,} \\[2pt]
    \varphi_{\mathrm{end}}(u) &\;=\; \mathrm{end}(u) \wedge\Diamond_{\ge 1}\,\mathrm{midmid}
        & &\text{\footnotesize the two endpoints, along the whole chain,} \\
    \varphi_{\mathrm{int}}(u) &\;=\; \mathrm{mid}(u) \wedge\Diamond_{\ge 1}\,\mathrm{end} \wedge\Diamond_{\ge 1}\,\mathrm{midend}
        & &\text{\footnotesize the two interior nodes}\\
\end{align*}
and closed with the same graph-level quantifier,
\[
    \varphi_\vee(u) \;=\; \varphi_{\mathrm{end}}(u) \vee\varphi_{\mathrm{int}}(u) ,
    \qquad \Phi \;=\; \exists u.\,\varphi_\vee(u) ,
\]
which instead fires on all four nodes of the chain. Despite having different depths, the two models agree with the intended label on all graphs of the dataset, and re-deriving the ground truth from either of them with \cref{alg:main} returns the same \UPI on every graph. 

When evaluating this second ablation (\cref{tab:compilations}), we observe a phenomenon remarkably similar to the disjunction ablation above.
%
\saliency, \guidedbp, and \lrp drop to chance-level performance under the deeper $\varphi_\vee$ formulation, meaning they are not robust to different model implementations.
Conversely, \intgrad, \graphsvx, and \gnnexplainer remain unaffected. \flowx, instead, runs out of time due to the deeper model architecture.

\begin{table}[t]
    \centering
    \small
    \setlength{\tabcolsep}{4pt}
    \scalebox{0.99}{
    \begin{tabular}{l ccc ccc}
        \toprule
         & \multicolumn{3}{c}{Original formula ($\varphi_\wedge$)} & \multicolumn{3}{c}{Alternative formula ($\varphi_\vee$)} \\
        \cmidrule(lr){2-4}\cmidrule(lr){5-7}
        Explainer & precision & recall & $F_1$ & precision & recall & $F_1$ \\
        \midrule
        \saliency                & $\nentry{1.00}{0.00}$ & $\nentry{1.00}{0.00}$ & $\nentry{1.00}{0.00}$ & $\nentry{0.22}{0.00}$ & $\nentry{1.00}{0.00}$ & $\nentry{0.37}{0.00}$ \\
        \intgrad                 & $\nentry{1.00}{0.00}$ & $\nentry{1.00}{0.00}$ & $\nentry{1.00}{0.00}$ & $\nentry{1.00}{0.00}$ & $\nentry{1.00}{0.00}$ & $\nentry{1.00}{0.00}$ \\
        \deconv                  & $\nentry{0.22}{0.00}$ & $\nentry{1.00}{0.00}$ & $\nentry{0.37}{0.00}$ & $\nentry{0.22}{0.00}$ & $\nentry{1.00}{0.00}$ & $\nentry{0.37}{0.00}$ \\
        \guidedbp                & $\nentry{1.00}{0.00}$ & $\nentry{1.00}{0.00}$ & $\nentry{1.00}{0.00}$ & $\nentry{0.22}{0.00}$ & $\nentry{1.00}{0.00}$ & $\nentry{0.37}{0.00}$ \\
        \gnnexplainer            & $\nentry{1.00}{0.00}$ & $\nentry{1.00}{0.00}$ & $\nentry{1.00}{0.00}$ & $\nentry{1.00}{0.00}$ & $\nentry{1.00}{0.00}$ & $\nentry{1.00}{0.00}$ \\
        \graphmask               & $\nentry{0.25}{0.03}$ & $\nentry{0.93}{0.15}$ & $\nentry{0.38}{0.02}$ & $\nentry{0.25}{0.03}$ & $\nentry{0.89}{0.19}$ & $\nentry{0.38}{0.02}$ \\
        \subgraphx               & $\nentry{0.40}{0.01}$ & $\nentry{0.84}{0.01}$ & $\nentry{0.38}{0.00}$ & $\nentry{0.40}{0.01}$ & $\nentry{0.84}{0.01}$ & $\nentry{0.38}{0.00}$ \\
        \pgmexplainer            & $\nentry{0.33}{0.16}$ & $\nentry{0.89}{0.19}$ & $\nentry{0.45}{0.15}$ & $\nentry{0.33}{0.16}$ & $\nentry{0.89}{0.19}$ & $\nentry{0.45}{0.15}$ \\
        \graphsvx                & $\nentry{1.00}{0.00}$ & $\nentry{1.00}{0.00}$ & $\nentry{1.00}{0.00}$ & $\nentry{1.00}{0.00}$ & $\nentry{1.00}{0.00}$ & $\nentry{1.00}{0.00}$ \\
        \flowx                   & $\nentry{0.90}{0.03}$ & $\nentry{0.99}{0.01}$ & $\nentry{0.94}{0.01}$ & \multicolumn{3}{c}{\OOT} \\
        \lrp                     & $\nentry{1.00}{0.00}$ & $\nentry{1.00}{0.00}$ & $\nentry{1.00}{0.00}$ & $\nentry{0.22}{0.00}$ & $\nentry{1.00}{0.00}$ & $\nentry{0.37}{0.00}$ \\
        \midrule
        \randombaseline          & $\nentry{0.23}{0.02}$ & $\nentry{0.93}{0.16}$ & $\nentry{0.37}{0.00}$ & $\nentry{0.23}{0.02}$ & $\nentry{0.93}{0.16}$ & $\nentry{0.37}{0.00}$ \\
        \bottomrule
    \end{tabular}}
    \caption{\textbf{Plausibility on \REDBLUELEAF over $5$ seeds, for two different formulas}, see \cref{appx:compilation-choices}. The graphs, the labels, and \UPI are identical for both formulas and equal to those evaluated in \cref{tab:explainers}. Only the compiled GNN changes.     Given that the alternative, more complex formula yields a GNN of greater depth, execution is slower and \flowx exceeds the time budget.
    }
    \label{tab:compilations}
\end{table}

\textbf{Discussion.}
Taken together, these two ablations highlight a fundamental vulnerability in many popular explainers: simple gradient-based and \lrp are highly susceptible to different implementations of the model being explained, meaning they do not fulfill the implementation invariant \citep{sundararajan2017axiomatic, nauta2023anecdotal}. 
While they may achieve perfect scores on a specific formulation (e.g., the flat $\lor$ operator or the shallow $\varphi_\wedge$ predicate), they fail when the same logic is parametrized differently. 
In contrast, \intgrad and \graphsvx remain stable across implementations, indicating more robust behaviours.
%
Because our models are compiled rather than trained, we have full control over what is being explained.
This turns conditions -- like vanishing gradients -- that are usually inextricably confounded on a trained model into controlled experiments, and lets one state and test a specific hypothesis rigorously.
For instance, one can test how an explainer copes with saturated activations by testing equivalent models compiled at different depths, or with logically equivalent but syntactically different connectives, while ensuring the same ground truth holds.
We leave a systematic ablation over these experimental conditions to future work.

\section{Additional details about \ROBDDs and how to extract \UPI}
\label[appendix]{appx:robdd-details}

\subsection{Definition of \ROBDDs and a practical example}
\label[appendix]{appx:robdd-example}

In this section, we aim to provide a compact overview of how the \ROBDDs used in \cref{sec:upi} are defined, and then provide a practical example of an \ROBDD over graphs.
We start by reviewing what \ROBDDs are.

\begin{definition}[\OBDD and \ROBDD]
    \label{def:robdd}
    Let $X=\{x_1, \dots, x_n\}$ be a set of binary variables and $\pi$ an ordering of the elements of $X$.
    An Ordered Binary Decision Diagram (\OBDD) over $X$ with respect to $\pi$ is a rooted, directed acyclic graph, with two sinks called the $1$-sink and $0$-sink. 
    Every node $u$ except the sinks
    is labelled with a tuple $(\mathrm{var}(u), \mathrm{lo}(u), \mathrm{hi}(u))$, where $\mathrm{var}(u) \in X$ indicates the node's variable name, and $\mathrm{lo}(u)$ and $\mathrm{hi}(u)$ indicate the two successors of $u$ in the diagram.
    If there is an edge from a node labelled $x_i$ to a node labelled $x_j$, then $i <_\pi j$.
    The \OBDD is Reduced (\ROBDD) if in addition:
    
    \; (no irrelevant branches) $\mathrm{lo}(u)\neq\mathrm{hi}(u)$ for every internal node $u$, and

    \; (no duplicated sub-diagram) no distinct internal nodes carry the same triple            $\bigl(\mathrm{var}(u),\mathrm{lo}(u),\mathrm{hi}(u)\bigr)$.
\end{definition}

At a high level, an \ROBDD is a tractable representation of a Boolean function over a set of Boolean variables, allowing efficient operations and queries on the function.
In our setting, in particular, we use \ROBDDs to efficiently encode a Boolean function -- i.e., the graded modal logic formula we wish to convert into a GNN; see \cref{sec:compiled-gnn} -- over subgraphs of $G$, allowing us to extract the ground truth explanation \UPI efficiently.
Below, we report an example of an \ROBDD defined using the graph symbolic encodings described in \cref{sec:upi}.
For more on \ROBDDs and its applications, see \citet{lee1959representation, bryant1986graph, bryant1992symbolic, meinel1998obdds, wegener2000branching}.

\begin{figure}[t]
    \centering
    \begin{minipage}[c]{0.34\textwidth}
    \centering
    \begin{tikzpicture}
      \foreach \i/\x in {1/0, 2/1.05, 3/2.10, 4/3.15}
        \node[vtx] (v\i) at (\x,0) {$v_{\i}$};
      \draw (v1) -- node[above,font=\scriptsize] {$e_1$} (v2);
      \draw (v2) -- node[above,font=\scriptsize] {$e_2$} (v3);
      \draw (v3) -- node[above,font=\scriptsize] {$e_3$} (v4);
    \end{tikzpicture}
    \end{minipage}%
    \begin{minipage}[c]{0.44\textwidth}
    \centering
    \begin{tikzpicture}
      \node[bnode] (n1) at (0,0)         {$x_1$};
      \node[bnode] (n3) at (0.75,-1.10)  {$x_3$};
      \node[term]  (F)  at (-0.70,-2.20) {$0$};
      \node[term]  (T)  at (1.45,-2.20)  {$1$};
      \draw[lo] (n1) -- (F);
      \draw[hi] (n1) -- (n3);
      \draw[lo] (n3) -- (F);
      \draw[hi] (n3) -- (T);
      \node[font=\scriptsize,align=center] at (2.40,-1.10) {$x_2$ never\\tested};
    \end{tikzpicture}
    \end{minipage}
    \caption{
        The path graph of \cref{ex:path} (left) and the ROBDD of its
        edge-cover function $f=x_1\wedge x_3$ under the order $x_1<x_2<x_3$
        (right).  Dashed arcs are $\mathrm{lo}$ (the variable is $0$, the edge is
        deleted), solid arcs are $\mathrm{hi}$ (the variable is $1$, the edge is
        kept).
    }
\label{fig:robdd-path}
\end{figure}

\begin{example}[Edge cover on a path]
\label{ex:path}
    Let $G$ be the path on four nodes, $\calV=\{v_1,\dots,v_4\}$ and
    $\calE=\{e_1,e_2,e_3\}$ with $e_i=(v_i,v_{i+1})$, and let $f$ be the Boolean
    function holding on $R$ iff $R$ is an edge cover of $G$, i.e., every
    node is incident to at least one edge of $R$.  Writing one clause per node,
    \[
      f \;=\; \bigwedge_{v\in \calV}\ \bigvee_{e\ni v} x_e
      \;=\; \underbrace{x_1}_{v_1}\wedge\underbrace{(x_1\vee x_2)}_{v_2}
            \wedge\underbrace{(x_2\vee x_3)}_{v_3}\wedge\underbrace{x_3}_{v_4}
      \;=\; x_1\wedge x_3 ,
    \]
    because $v_1$ and $v_4$ have degree one, so their clauses are unit and need to be assigned $1$ for $f$ to be evaluated to $1$, resulting in the simplification above.
    The resulting \ROBDD for $f$ over $G$ encodes the following Boolean function, and is shown in \cref{fig:robdd-path}:
    \[
    \begin{array}{@{}l|cccccccc@{}}
      R & \emptyset & \{e_1\} & \{e_2\} & \{e_3\} & \{e_1,e_2\} & \{e_1,e_3\} & \{e_2,e_3\} & \calE \\
      \midrule
      f(R) & 0 & 0 & 0 & 0 & 0 & 1 & 0 & 1
    \end{array}
    \]
    The possible edge covers are two, but the minimal one is $\{e_1,e_3\}$.
    Hence, $\UPI(f, G) = \{e_1,e_3\}$.
\end{example}

To keep the example above compact, we considered the undirected edge $e=(u,v)$ as a single variable boolean $x_e$, but in our implementation we instantiate a single variable per directionality.

\subsection{The $\mathrm{BUILD}$ algorithm for \ROBDDs}
\label[appendix]{appx:build-alg}

In \cref{sec:upi}, we introduced the method $\mathrm{BUILD}$ for constructing an \ROBDD for each formula and graph. In this section, we discuss this procedure in more detail.
In practice, we rely on the $\mathrm{dd}$ Python package \citep{dd} for \ROBDD construction and inference.

Four fundamental operations are used to construct \ROBDDs \citep{bryant1986graph}: \emph{Reduce}, \emph{Apply}, \emph{Restrict}, and \emph{Composition}. Any other operation on \ROBDDs can be expressed as a combination of these procedures \citep{bryant1986graph}. \emph{Reduce} transforms an arbitrary Binary Decision Diagram (\BDD) into a canonical \ROBDD by eliminating duplicate subgraphs and redundant tests. Given two Boolean functions $f$ and $h$ implemented as \ROBDDs and a binary Boolean operator $\langle{op}\rangle$, \emph{Apply} generates a \BDD implementing $f\langle{op}\rangle h$ (and can also be used to implement logical negation and induction, as detailed in \citet{bryant1986graph}). \emph{Restrict} simplifies an input \BDD into another \BDD by fixing a specific variable $x$ to a constant value $b$. Finally, \emph{Composition} replaces a variable $x$ in a \ROBDD $f$ with another function implemented by a \ROBDD $h$. Because \emph{Apply}, \emph{Restrict}, and \emph{Composition} yield general \BDDs, \emph{Reduce} must always be applied post-operation. In our implementation, this workflow is handled automatically by the $\mathrm{dd}$ package \citep{dd}. In our paper, we treat Boolean operations directly as operations on the \ROBDDs themselves, using standard logical notation (e.g., $\land, \neg, \cof{}{}{1}$) to implicitly denote the execution of these underlying graph-construction algorithms. We denote the constant true and false functions by $1$ and $0$, respectively.


There are two variations of the $\mathrm{BUILD}$ algorithm: one for node tasks (\proc{BuildNode} in \cref{alg:buildnode}), and one for graph tasks (\proc{BuildGraph} in  \cref{alg:buildgraph}). In the latter case, we take as input only $\Phi$ and the graph $G$, while in the former we also need the target node $v$. Since \proc{BuildGraph} internally uses \proc{BuildNode}, we start our description from \proc{BuildNode}. 


\proc{BuildNode} (\cref{alg:buildnode}) is recursive: when a formula $\Phi$ is the conjunction of two subformulas $\varphi_j$ and $\varphi_k$, the procedure recursively constructs the \ROBDDs for $\varphi_j$ and $\varphi_k$ separately, then conjoins them. Analogous recursive steps handle disjunction and negation. For base cases, i.e., the constants $\top$, $\bot$, and the atomic predicates $\varphi(v) = \mathrm{Col}(v)$, the \ROBDD is initialized directly as $1$, $0$, or the constant $\mathrm{Col}(v)$. The most nuanced step is handling the counting quantifier $\Diamond_{\ge N}\,\psi$, which requires constructing a Boolean function $C_N$ that evaluates to true if and only if at least $N$ neighbours of $v$ satisfy $\psi$.
For each incident edge $e=(v,u)$, the algorithm computes $t = x_e \wedge \proc{BuildNode}(\psi, u, G)$, indicating edge existence and that $u$ satisfies $\psi$, and then iteratively updates the intermediate counting functions $C_k,\,k\in \{1,\dots,N\}$ (line \ref{line:edgecount}).
After evaluating all incident edges, it then returns the counting function $C_N$, which now indicates whether node $v$ has at least $N$ neighbours satisfying $\psi$.

\proc{BuildGraph} (\cref{alg:buildgraph}) builds directly upon \cref{alg:buildnode}. The graph-level classifier is framed as a counting quantifier $\exists^{[\mathrm{lo}, \mathrm{hi}]}$ that verifies whether a node-level formula $\varphi$ is satisfied by at least $\mathrm{lo}$ and at most $\mathrm{hi}$ nodes, i.e. it express the formula: $\Phi = \left(\exists^{\ge \mathrm{lo}}\,u.\,\varphi(u)\right)\wedge\neg\left(\exists^{\ge \mathrm{hi+1}}\,u.\,\varphi(u)\right)$.
The case where at least $N$ nodes are fired is then defined by setting $\mathrm{lo}=N$ and $\mathrm{hi}=\infty$ (symmetrically, the case where maximum $N$ nodes are fired is defined by setting $\mathrm{lo}=0$ and $\mathrm{hi}=N$).
Therefore, we iterate over each node $v\in\calV$, construct its corresponding \ROBDD for formula $\varphi$ via \proc{BuildNode}, and iteratively update the global threshold functions $C_1, \dots, C_{\vert{}\calV\vert{}}$ (\cref{line:nodecount}), analogously to the counting quantifier case of \cref{alg:buildnode} (which instead iterates over the edges incident to a node).
The graph-level \ROBDD is finally computed as $f \leftarrow C_{\mathrm{lo}} \wedge \neg C_{\mathrm{hi}+1}$, which returns true exactly if between $\mathrm{lo}$ and $\mathrm{hi}$ nodes in the input graph satisfy $\varphi$.

In both cases, to speed up the algorithms, we implement a lookup table, where we save each of the \ROBDD previously constructed, such that a given function is never implemented twice.

\begin{figure}[t]
    \centering
    \begin{minipage}[t]{0.45\textwidth}
    \begin{algorithm}[H]\footnotesize
    \caption{\proc{BuildNode}$(\varphi,v,G)$}
    \label{alg:buildnode}
    \begin{algorithmic}[1]
    \Require formula $\varphi$; node $v\in \calV$; graph $G=(\calV,\calE)$;
    \Require global variable $B$ indicating an \ROBDD object;
    \Ensure \ROBDD $f$ with $f(R)=\Ind{(R,v)\models\varphi}$ for any $R\subseteq G$
    \If{$\varphi = \top$}
      \State $f \gets 1$
    \ElsIf{$\varphi = \bot$}
      \State $f \gets 0$
    \ElsIf{$\varphi = \mathrm{Col}$, a colour}
      \State $f \gets \mathrm{Col}(v)$ 
    \ElsIf{$\varphi = \neg\psi$}
      \State $f \gets \neg\,\proc{BuildNode}(\psi,v,G)$
    \ElsIf{$\varphi = \bigwedge_{i}\psi_i$}
      \State $f \gets \bigwedge_{i} \proc{BuildNode}(\psi_i,v,G)$
    \ElsIf{$\varphi = \bigvee_{i}\psi_i$}
      \State $f \gets \bigvee_{i} \proc{BuildNode}(\psi_i,v,G)$
    \ElsIf{$\varphi = \Diamond_{\ge N}\,\psi$}
      \State $C_0 \gets 1$
      \State $C_k \gets 0$ for $k=1,\dots,N$
      \For{$u \in \mathcal{N}(v)$}
        \State $e \gets (v,u)$
        \State $t \gets x_e \wedge \proc{BuildNode}(\psi,u,G)$
        \For{$k = N,\dots,1$}
          \State $C_k \gets C_k \vee (C_{k-1} \wedge t)$ \label{line:edgecount}
        \EndFor
      \EndFor
      \State $f \gets C_N$
    \EndIf
    \State \Return $f$
    \end{algorithmic}
    \end{algorithm}
    \end{minipage}\hfill
    \begin{minipage}[t]{0.45\textwidth}
    \begin{algorithm}[H]\footnotesize
    \caption{\proc{BuildGraph}$(\Phi,G)$}
    \label{alg:buildgraph}
    \begin{algorithmic}[1]
    \Require formula $\Phi = \exists^{[\mathrm{lo}, \mathrm{hi}]}u.\,\varphi(u)$, with the counting quantifier
             standing for the readout bounds $\mathrm{lo} \le \mathrm{hi}$, so that
             $G \models \Phi$ iff $\mathrm{lo} \le |\{u : (G,u) \models \varphi\}| \le \mathrm{hi}$;
             graph $G=(\calV,\calE)$,
    \Ensure \ROBDD $f$ with $f(R)=\Ind{R\models\Phi}$ for any $R\subseteq G$
    \State $C_0 \gets 1$
    \State $C_k \gets 0$ for $k=1,\dots,|\calV|$
    \For{$v \in \calV$}
      \State $f_v \gets \proc{BuildNode}(\varphi,v,G)$ \label{line:reuse}
      \For{$k = |\calV|,\dots,1$}
        \State $C_k \gets C_k \vee (C_{k-1} \wedge f_v)$ \label{line:nodecount}
      \EndFor
    \EndFor
    \State $f \gets C_{\mathrm{lo}}$
    \If{$\mathrm{hi}+1 \le |\calV|$}
      \State $f \gets f \wedge \neg\,C_{\mathrm{hi}+1}$
    \EndIf
    \State \Return $f$
    \end{algorithmic}
    \end{algorithm}
    \end{minipage}
\end{figure}

\subsection{Unrolling the recursive rule in \cref{alg:Extend}}
\label[appendix]{appx:example-unrolling}

Consider a generic \ROBDD over a graph $G$, \ie a Boolean function mapping each subgraph $R$ of $G$ to a truth value, as described in \cref{sec:upi}.
\cref{alg:Extend} initialises $\fext$ to this function and then updates it in place, executing one conjunction per edge, each time against a copy of $\fext$ shifted by that edge.
Writing $\fext$ for the accumulated function and $f$ for the one the \ROBDD encodes before the first iteration, unfolding the assignments in order gives, at an arbitrary row $R$:

\begin{align*}
  \text{start}   &: & \fext(R) \;&=\; f(R) \\[2pt]
  \text{add }e_1 &: & \fext(R) \;&\leftarrow\; \fext(R)\wedge \fext(R\cup \{e_1\})
                          \;=\;  f(R)\wedge f(R\cup \{e_1\}) \\[2pt]
  \text{add }e_2 &: & \fext(R) \;&\leftarrow\; \fext(R)\wedge \fext(R\cup \{e_2\}) \\
                 & & &\phantom{\leftarrow}\;=\;
                     \bigl[ f(R)\wedge f(R\cup \{e_1\})\bigr]\wedge
                     \bigl[ f(R\cup \{e_2\})\wedge f(R\cup \{e_1\}\cup \{e_2\})\bigr] \\[2pt]
  \text{add }e_3 &: & \fext(R) \;&\leftarrow\; \fext(R)\wedge \fext(R\cup \{e_3\}) \\
                 & & &\phantom{\leftarrow}\;=\;
                     \bigl[ f(R)\wedge f(R\cup \{e_1\})\wedge f(R\cup \{e_2\})\wedge f(R\cup \{e_1\}\cup \{e_2\})\bigr] \\
                 & & &\phantom{\leftarrow\;=\;}\wedge\;
                     \bigl[ f(R\cup \{e_3\})\wedge f(R\cup \{e_1\}\cup \{e_3\}) \\
                 & & &\phantom{{\leftarrow\;=\;}\wedge\;\bigl[}
                     \wedge f(R\cup \{e_2\}\cup \{e_3\})\wedge f(R\cup \{e_1\}\cup \{e_2\}\cup \{e_3\})\bigr] \\[2pt]
  \dots
\end{align*}

Each step conjoins the \emph{accumulated} $\fext$ with a shifted copy of itself, so every term already present is duplicated with the new edge added and the clause doubles.
After all $|\calE|$ steps, the accumulated function is $\fext(R)=\bigwedge_{T\subseteq\calE}f(R\cup T)$; as $T$ ranges over the subsets of $\calE$, the set $R\cup T$ ranges over every superset of $R$, so $\fext(R)=1$ exactly when $R$ is sufficient in the sense of \cref{def:piexpl}.

\subsection{Computational complexity of \cref{alg:main}}
\label[appendix]{appx:complexity}

We proceed to analyse the time complexity of \cref{alg:main} in terms of the number of symbolic operations on the ROBDD. Given a graph $G$ and a formula $\Phi$, we first perform the construction of the \ROBDD. Consider a node-level formula $\Phi$ and a target node $v$, for which we are building the corresponding ROBDD. While $\land$, $\lor$, and $\neg$ all require a single operation on the \ROBDD to be built (lines $7$-$12$ of \cref{alg:buildnode}), the existential quantifier $\Diamond_{\ge N}$ can require up to $N|\calE_T|$ operations, with $\calE_T$ the receptive field of target node $v$ for formula $\Phi$ (lines $13$-$23$). Calling $N_{max}$ the largest $N$ between counting quantifiers in $\Phi$, the upper bound for the computational cost for building a node-level formula is then $O(|\calE| \cdot |\Phi| \cdot N_{max})$ symbolic operations on the \ROBDD, where $|\Phi| \defeq |\mathrm{sub}(\Phi)|$ is the number of subformulas of $\Phi$, as defined in \cref{sec:compiled-gnn}. For graph-level formulas (\cref{sec:compiled-gnn}), the node counting quantifier $\exists^{[\mathrm{lo}, \mathrm{hi}]}$ additionally requires $O(|\calV|^2)$ symbolic operations (lines $3$-$8$ of \cref{alg:buildgraph}) to be constructed. 

Then, \cref{alg:Extend} requires $O(|\calE|)$ symbolic operations to construct the function $\fext$: as shown in \cref{appx:example-unrolling}, for each edge in $\calE$ we modify $\fext$ iteratively by performing a symbolic operation with the \ROBDD ($\fext \leftarrow \fext \wedge \cof{\fext}{e}{1}$). \cref{alg:extract} similarly performs $O(|\calE|)$ symbolic operations to examine if each $e \in \calE$ is also in $\UPI$ (for each edge, it always performs a fixed number of symbolic operations on the \ROBDD, and then checks its satisfiability, which is an $O(1)$ operation when using \ROBDD). The final time complexity, for a single target of a graph classifier, is then $O(| \calE | \cdot |\Phi|\cdot N_{max}+|\calV|^2)$ symbolic operations.

The real bottleneck of our approach is that the time complexity of a symbolic operation on a \ROBDD depends on its size, which is in the worst case exponential in the number of variables \citep{bryant1986graph} which, in our case, is the number of edges $|\calE|$ of the input graph. However, in practice, the tasks and datasets we have chosen rarely fall into this worst case, and the $\mathrm{dd}$ Python package \citep{dd} provides an efficient implementation of the ROBDD.

We complement the analysis with a measurement of the actual cost of \cref{alg:main} over random graphs at varying size.

\textbf{Setup.}
For a given number of nodes $N$, we take $G$ to be the complete directed graph on $N$ nodes, self-loops included, so that the number of candidate subgraphs is $2^{N^{2}}$.
The classifier is the directed edge-cover function $f$ of \cref{ex:path}, i.e.\ $f(R)=1$ iff every node of $G$ retains at least one incoming edge in $R$.
We pick this family for two main reasons: its ground truth is $\UPI(f,G)=\calE$, so no instance degenerates into a trivial \ROBDD; and both $f$ and the sufficiency-extended function of \cref{alg:Extend} keep growing with $|\calE|$.

We report the wall-clock time of \emph{one} complete run of \cref{alg:main} for three \ROBDD engines exposed by the $\mathrm{dd}$ package \citep{dd}: the Cython-accelerated \CUDD backend with a static variable order, the same backend with dynamic variable reordering enabled, and the pure-Python \texttt{autoref} backend (no Cython/CUDD acceleration).
Alongside each time, we report the number of nodes in the \ROBDD to rack memory consumption.
The last column is the brute-force baseline, namely enumerating all $2^{|\calE|}$ subgraphs of $G$, querying the compiled GNN on each of them, representing a naive implementation of \cref{def:piexpl}.
All measurements are taken on a single core of an AMD Ryzen 9 7950X.
An engine exceeding a budget of $1$ second on a given size is not run on the larger ones and is marked \OOT. 
All engines are also asserted to return the same edge mask.

\textbf{Results.}
As reported in \cref{tab:cost-bdd-vs-brute}, the brute-force baseline is already at $48$ seconds on $N=5$ nodes and is out of reach immediately after, whereas \cref{alg:main} stays in the millisecond range over the whole range we tested, up to $|\calE|=144$ edges, i.e.\ $2^{144}$ subgraphs, with a diagram of a few tens of thousands of nodes.
The comparison between the two \CUDD columns is also informative: dynamic reordering never pays off here, and costs up to two orders of magnitude at the larger sizes, which indicates that our default ordering $\pi$ based on the edge index is already good enough.
Given the results, we adopt \CUDD with a static variable order as the default implementation in all our experiments.

\begin{table}[h!]
    \centering
    \small
    \begin{tabular}{rr rr rr rr r}
        \toprule
         & & \multicolumn{2}{c}{\CUDD, static $\pi$} & \multicolumn{2}{c}{\CUDD, dynamic $\pi$} & \multicolumn{2}{c}{\texttt{autoref}} & brute force \\
        \cmidrule(lr){3-4}\cmidrule(lr){5-6}\cmidrule(lr){7-8}\cmidrule(lr){9-9}
        $N$ & $N^2$ & time (s) & nodes & time (s) & nodes & time (s) & nodes & time (s) \\
        \midrule
         3 &   9 & 0.006 & $<$1k & 0.006 & $<$1k & 0.001 & $<$1k & \phantom{0}0.004 \\
         4 &  16 & 0.006 & $<$1k & 0.006 & $<$1k & 0.001 & $<$1k & \phantom{0}0.050 \\
         5 &  25 & 0.007 &    1k & 0.007 &    1k & 0.003 &    1k & 48.041 \\
         6 &  36 & 0.002 &    2k & 0.002 &    2k & 0.006 &    2k & \OOT \\
         8 &  64 & 0.002 &    7k & 0.022 &    7k & 0.020 &    7k & \OOT \\
        10 & 100 & 0.003 &   18k & 0.198 &   18k & 0.044 &   18k & \OOT \\
        12 & 144 & 0.006 &   38k & 0.502 &   36k & 0.096 &   38k & \OOT \\
        \bottomrule
    \end{tabular}
    \caption{
        Cost of extracting \UPI\ with \cref{alg:main} on complete graphs on $N$ nodes for the edge-cover classifier of \cref{ex:path}, against a brute-force baseline implementing \cref{def:piexpl}.
        Times are for one full extraction, \emph{nodes} is the number of nodes in the \ROBDD, and \OOT\ marks a configuration that was dropped after exceeding the budget of $1$ second on a smaller size.
    }
    \label{tab:cost-bdd-vs-brute}
\end{table}

\subsection{Cost of the ground truth against the cost of the explainers}
\label[appendix]{appx:cost-vs-explainers}

In this section, we compare the execution time of the ground truth extraction in \cref{alg:main} with the cost of running explainers.
As the results shall show, the total runtime is largely dominated by the explainers.

\textbf{Setup.}
We break the wall clock into the three stages of \cref{alg:main}: the construction of the \ROBDD, the sufficiency extension of \cref{alg:Extend}, and the final extraction of \cref{alg:extract}.
We report average and standard deviations over the first $200$ instances of each dataset, on a single core of an AMD Ryzen 9 7950X.
For explainers, we report the time that each explainer takes on average to explain a single explanation. In this analysis, we only report the timing for the most expensive explainers, as the others run at negligible cost.

\textbf{Results.}
As reported in \cref{tab:cost-datasets}, one extraction takes up to $0.15$ seconds across the whole benchmark. Note that this cost is paid once and then the ground truth explanations are stored together with the dataset.
For motif-based tasks, such as \REDBLUELEAF, or for simple dataset designs like \REDUNDANCY and \COMPREDUNDANCY, the algorithm can, in principle, be replaced by manual annotation of the target motif, therefore further reducing the cost.
This is possible because the formula is designed to recognize the target motif exclusively, with every relevant edge belonging to that motif by construction.
\cref{tab:cost-explainers} reports the numbers for explanation extraction, showing how the overall runtime of our experiments is largely dominated by the explainers.

\begin{table}[h!]
    \centering
    \small
    \setlength{\tabcolsep}{4pt}
    \resizebox{\textwidth}{!}{%
    \begin{tabular}{l rrr r rrrr}
        \toprule
         & & & & & \multicolumn{4}{c}{time (s)} \\
        \cmidrule(lr){6-9}
        Dataset & $|\calV|$ & $|\calE|$ & $|\calE_T|$ & Num. nodes & $\mathrm{BUILD}$ & \cref{alg:Extend} & \cref{alg:extract} & total \\
        \midrule
        \multicolumn{9}{l}{\textit{graph classification}} \\
        \REDBLUELEAF      &   32 &   98 &   98.0 & 4.6k (155k)  & \nentry{0.0124}{0.0185} & \nentry{0.0002}{0.0000} & \nentry{0.0003}{0.0000} & \nentry{0.0130}{0.0185} \\
        \REDUNDANCY       &   37 &  134 &  134.0 & 338 (338)    & \nentry{0.0009}{0.0001} & \nentry{0.0004}{0.0000} & \nentry{0.0005}{0.0000} & \nentry{0.0018}{0.0002} \\
        \COMPREDUNDANCY   &   37 &  134 &  134.0 & 342 (342)    & \nentry{0.0022}{0.0047} & \nentry{0.0004}{0.0000} & \nentry{0.0005}{0.0000} & \nentry{0.0030}{0.0047} \\
        \midrule
        \multicolumn{9}{l}{\textit{node classification}} \\
        \REDBLUELEAFNC    &  750 & 2192 & \phantom{0}56.0 & 14k (1.2M)  & \nentry{0.0041}{0.0306} & \nentry{0.0002}{0.0003} & \nentry{0.0003}{0.0004} & \nentry{0.0045}{0.0311} \\
        \MAJONEHOP        &  500 & 1992 & \phantom{0}\phantom{0}6.0 & 620 (43k)   & \nentry{0.0057}{0.0099} & \nentry{0.0000}{0.0003} & \nentry{0.0001}{0.0004} & \nentry{0.0058}{0.0105} \\
        \MAJTWOHOP        &  500 & 1992 & \phantom{0}73.7 & 109k (2.9M) & \nentry{0.1374}{0.1625} & \nentry{0.0062}{0.0212} & \nentry{0.0082}{0.0235} & \nentry{0.1517}{0.2024} \\
        \bottomrule
    \end{tabular}}
    \caption{
        Cost of extracting \UPI\ with \cref{alg:main} on the benchmark datasets.
        Times are wall-clock seconds for one complete extraction on a single explained instance (one graph for graph classification, one target node for node classification), split into the three stages of the algorithm, as mean $\pm$ standard deviation over the first $200$ explained instances of each dataset.
        $|\calV|$ and $|\calE|$ are the size of the graph, and $|\calE_T|$ is the average size of the receptive field of the target.
        The `\textit{Num. nodes}' column indicates the mean number of nodes in the \ROBDD, with the maximum reported in parentheses.
    }
    \label{tab:cost-datasets}
\end{table}

\begin{table}[h!]
    \centering
    \small
    \setlength{\tabcolsep}{6pt}
    \begin{tabular}{l rrr}
        \toprule
        & \multicolumn{3}{c}{time (s)} \\
        \cmidrule(lr){2-4}
        Dataset & \gnnexplainer & \graphmask & \subgraphx \\
        \midrule
        \multicolumn{4}{l}{\textit{graph classification}} \\
        \REDBLUELEAF     & \nentry{0.554}{0.380} & \nentry{0.609}{0.292} & \nentry{1.523}{0.409} \\
        \REDUNDANCY      & \nentry{0.066}{0.002} & \nentry{0.019}{0.002} & \nentry{42.464}{1.846} \\
        \COMPREDUNDANCY  & \nentry{1.001}{0.029} & \nentry{0.023}{0.001} & \nentry{44.986}{1.890} \\
        \midrule
        \multicolumn{4}{l}{\textit{node classification}} \\
        \REDBLUELEAFNC   & \nentry{0.323}{0.070} & \nentry{0.109}{0.012} & \nentry{4.502}{12.769}$^\dagger$ \\
        \MAJONEHOP       & \nentry{0.380}{0.027} & \nentry{0.678}{0.037} & \nentry{0.104}{0.555} \\
        \MAJTWOHOP       & \nentry{1.203}{0.101} & \nentry{4.113}{0.206} & \nentry{1904.211}{1805.311}$^\dagger$ \\
        \bottomrule
    \end{tabular}
    \caption{
        Wall-clock seconds to explain \emph{one} instance, per explainer and per dataset, averaged over random seeds.
        A $\dagger$ marks an explainer that did not finish every instance within the per-explanation time-out, so that its mean is computed on the instances that did finish.
    }
    \label{tab:cost-explainers}
\end{table}

\section{Choosing the right evaluation metric}
\label[appendix]{appx:metrics}

In this section we motivate our choice to report precision and recall separately, and scoring against the \emph{union} of the PIs rather than against each PI individually. We do this considering previous approaches:

\textbf{Why not plain Accuracy \citep{ying2019gnnexplainer}.}
On the tasks we consider, \UPI covers a small minority of the edges of a graph (c.f.\  \cref{tab:datasets}).
Accuracy would therefore be dominated by the negative class: the constant explanation that declares every edge irrelevant would score very high, while carrying no information whatsoever.

\textbf{Why not \AUROC \citep{luo2020parameterized} and Precision-Recall Area Under the Curve (\PRAUC) \citep{nerini2025true}.}
These two metrics measure different things, but they share one key pitfall: each compresses the precision/recall trade-off into a single scalar, and because \UPI is a union of alternative, individually valid explanations, that trade-off is informative.
Consider an explainer that returns one single PI, correctly and in full.
This is alone a desirable, high-quality explanation \citep{darwiche2020reasons, marques2022delivering, bassan2025briefly, azzolin2026degenerate}, as everything it selects is relevant and minimal.
Its recall, however, is $|PI| / |\UPI|$, which can be arbitrarily small.
Hence, the \PRAUC will also be small.
Since such an explainer has settled on one explanation, it has no reason to rank the edges of the \emph{other} PIs any higher than the irrelevant ones, and therefore \AUROC also drops.
In every case, the fact that each and every edge the explainer selected is correct may get overwhelmed by the fact that alternative valid explanations exist when the explainer preferred to focus on just one.

\textbf{Why not a weighted intersection over union (WIoU) \citep{azzolin2025reconsidering}.}
A further alternative is to compare the two masks with a weighted intersection over union,
$$
\mathrm{WIoU}(\mu, \mu^*) = \frac{\sum_{e \in \calE} \min(\mu_e, \mu^*_e)}{\sum_{e \in \calE} \max(\mu_e, \mu^*_e)},
$$
which has the appeal of requiring no threshold.
The problem is that \textrm{WIoU} may charge the explanation for the total score mass placed on irrelevant edges, which however accumulates with the size of the graph even when the ranking of the edges is perfect.
To see this, consider a task whose ground truth is a single edge per graph, and an explainer that assigns importance $1.0$ to that edge and a negligible $0.001$ to all others, \ie a perfect ordering.
On a graph with $|\calE| = 10$ we get $\mathrm{WIoU} = 1 / (1 + 9\cdot 0.001) = 0.991$, whereas on a graph with $|\calE| = 5000$ we get $1 / (1 + 4999\cdot 0.001) = 0.167$.
This shows that the same explainer, with the same ranking, is scored an order of magnitude apart, meaning scores are not comparable across graph size.

\textbf{Why the union of the PIs, and not the disjoint union \citep{agarwal2023evaluating}.}
Evaluating against every single PI individually would require retaining one explanatory mask per PI and scoring the explanation against the best matching one, similarly to what is proposed in \citet{agarwal2023evaluating}.
We avoid this for two reasons:
First, the number of PIs can grow exponentially in $|\calE|$ so both storing the masks and matching against them may become intractable;
Second, a best-matching-PI score is easy to inflate, as the explainer returning a single PI is awarded a perfect score while telling the user nothing about how well the explainer covers all the other alternative explanations.

In summary, our design choice of tracking precision and recall separately allows us to measure: \textit{i)} how well the explainer recovers exactly the relevant elements, via the precision score; \textit{ii)} how well it can cover the range of all possible alternative explanations, via the recall score.

Two additional caveats are of interest: 
\textit{First,} we note that the $F_1$ reported in \cref{tab:explainers} and \cref{tab:explainers-nc-majority} does not match the precision and recall columns; this is because the $F_1$ column is computed as the average of sample-wise precisions and recalls, and the average of $F_1$ scores does not necessarily match the $F_1$ of the averages.
\textit{Second,} by sweeping over all possible thresholds, we are providing an \textit{optimistic} estimate of precision, recall, and $F_1$.

\section{Dataset details}
\label[appendix]{appx:datasets}

Throughout, $\Diamond_{\geq N}\,\varphi$ reads ``at least $N$ neighbours satisfy $\varphi$'' and $\Diamond_{=N}\,\varphi$ ``exactly $N$ of them'', $\top$ is the always-true predicate (so $\Diamond_{=1}\top$ is ``degree exactly one''), and $\mathrm{Col}(u)$ is true when node $u$ carries colour $\mathrm{Col}$.
For graph classification tasks, every task is specified by a node predicate $\varphi$ together with the graph-level quantifier $\Phi = \exists u.\,\varphi(u)$ that acts as the graph-level readout.
We call \emph{palette} the ordered list of colours $\calC$ a task uses.
It plays two roles at once: it is the vocabulary out of which the colour predicates $\mathrm{Col}(u)$ are drawn, and it is the node feature space, as a node carrying colour $\mathrm{Col}$ is encoded by the indicator vector $\vc_v$, here one-hot, selecting the position of $\mathrm{Col}$ in the palette.

\textbf{Common protocol.}
Every dataset contains $1000$ graphs with a class ratio of $0.5$, built on a Barabási--Albert background with attachment parameter $2$ \citep{barabasi1999emergence}, of $20$ nodes for \REDBLUELEAF and of $30$ nodes for \REDUNDANCY and \COMPREDUNDANCY. Every background node receives a colour drawn uniformly from the colours of the palette that are reserved for the background in the task.
At generation time, we apply rejection sampling against $\varphi$, so that a candidate graph is kept only when $\varphi$ fires exactly where the annotation says it should, i.e., only inside the motif for a positive graph, nowhere at all for a negative one. This guarantees that the annotation is unambiguous.

\textbf{\REDBLUELEAF.}
The palette is $[\mathrm{red}, \mathrm{blue}, \mathrm{green}]$ and the motif is a chain of four red nodes, each carrying exactly two blue leaves, for $12$ nodes and $11$ edges in total.
The motif is attached to the background by linking the two chain endpoints to two random background nodes, which are recoloured green first.
This is done because a green anchor is neither a red core nor a blue leaf, so the attachment cannot perturb the local counts of the motif.
The two attachment edges are not part of the ground truth, which is exactly the $11$ motif edges. If the motif is attached, the graph is annotated as of positive class; otherwise, of negative one.
The underlying formula is built bottom-up from the local structure of the motif,
\begin{align*}
    \mathrm{leaf}(u) &\;=\; \mathrm{blue}(u)\wedge\Diamond_{=1}\top
        & &\text{a blue node of degree one,} \\
    \mathrm{core}(u) &\;=\; \mathrm{red}(u)\wedge\Diamond_{=2}\,\mathrm{leaf}
        & &\text{a red node with exactly two blue leaves,} \\
    \mathrm{end}(u)  &\;=\; \mathrm{core}(u)\wedge\Diamond_{=1}\,\mathrm{core} & & \text{endpoint of the chain,}\\
    \mathrm{mid}(u)  &\;=\; \mathrm{core}(u) \wedge\Diamond_{=2}\,\mathrm{core}
        & &\text{interior node of the chain,}\\
    \mathrm{midend}(u)  &\;=\; \mathrm{mid}(u) \wedge\Diamond_{\geq 1}\,\mathrm{end}
        & &\text{interior node touching an end point,}
\end{align*}
and closed with the following node predicate and graph-level quantifier,
\[
    \varphi(u) \;=\; \mathrm{midend}(u) \land \Diamond_{\geq 1}\, \mathrm{midend} ,
    \qquad \Phi \;=\; \exists u.\,\varphi(u),
\]
which fires on the two interior nodes of the chain.
The reason for this construction is to avoid counts by backtracking: the naive ``four red cores next to each other'' is satisfied even by a chain of length $2$, since a walk can simply bounce between two adjacent red cores.
%
%
%
We further provide in \cref{appx:compilation-choices} an ablation study on the impact of how the formula above is written on the plausibility evaluation.

\textbf{\REDUNDANCY.}
The palette is $[\mathrm{black}, \mathrm{red}, \mathrm{grey}, \mathrm{silver}]$, where the last two shades are reserved for the background. The node predicate, and the graph-level quantifier closing it, are
\[
    \varphi(u) \;=\; \mathrm{black}(u) \wedge \Diamond_{\geq 1}\,\mathrm{red} ,
    \qquad \Phi \;=\; \exists u.\,\varphi(u) ,
\]
i.e., `a black node with at least one neighbour of colour $\mathrm{red}$'.
A positive graph carries exactly one black node, together with $6$ red neighbours and further connected to the background by $5$ edges to random background nodes, which are the irrelevant edges an explainer has to reject.
Since no background node carries the anchor colour, the planted anchor is the only node that can ever satisfy $\varphi$.
Note that any single one of the $6$ edges between black and red nodes constitutes a valid PI explanation. Hence, their union \UPI is composed of the whole group of $6$ edges.

\textbf{\COMPREDUNDANCY.}
The palette is $[\mathrm{black}, \mathrm{red}, \mathrm{blue}, \mathrm{green}, \mathrm{grey}, \mathrm{silver}]$ and the node predicate, with the graph-level quantifier closing it, is
\[
    \varphi(u) \;=\; \mathrm{black}(u) \wedge \big( \Diamond_{\geq 1}\,\mathrm{red} \;\vee\; \Diamond_{\geq 2}\,\mathrm{blue} \;\vee\; \Diamond_{\geq 3}\,\mathrm{green} \big) ,
    \qquad \Phi \;=\; \exists u.\,\varphi(u) .
\]
The construction follows the one above, except that the black node now contains three groups of candidate neighbours in three different colours, planted with $1$, $2$ and $3$ members respectively, and each group has to be complete for its disjunct to be true.
A negative graph starves every group in the same way as above, planting one member fewer than its threshold.
Differently from \REDUNDANCY, there are exactly $3$ alternative explanations, one per colour-group. \UPI is their union, meaning $6$ edges in total, the same as in \REDUNDANCY.

\textbf{Common protocol for node classification.}
A node-classification task has no graph-level quantifier: the formula $\Phi(u)$ itself labels the target node, so what the explainer scores is the receptive field of the target rather than the whole graph.
Each of these datasets is composed of several Barabási--Albert graphs with attachment parameter $2$ \citep{barabasi1999emergence} and palette $[\mathrm{red}, \mathrm{blue}, \mathrm{green}]$; we randomly generate and accumulate these graphs until we collect at least $500$ nodes on which $\varphi$ fires. We do so in order to limit the size and the density of the receptive field, and therefore the time required to extract the ground truth for each target node.
Below, $\Diamond_{>}(\varphi,\psi)$ is the comparative operator of \cref{appx:desugaring}, and it reads `strictly more neighbours satisfy $\varphi$ than $\psi$' (see \cref{appx:desugaring} for the details regarding its implementation).

\textbf{\MAJONEHOP.}
A single graph is $500$ nodes, each coloured uniformly at random from the palette $[\mathrm{red}, \mathrm{blue}, \mathrm{green}]$, and the node predicate is
\[
    \Phi(u) \;=\; \Diamond_{>}(\mathrm{red}, \mathrm{blue}) ,
\]
i.e., `strictly more red neighbours than blue ones'.
A node can be assigned no colour, in which case we refer to it as of grey colour.

\textbf{\MAJTWOHOP.}
Same graph and same construction as \MAJONEHOP, with the predicate applied twice \citep{nerini2025true},
\[
    \mathrm{red\text{-}ish}(u) \;=\; \Diamond_{>}(\mathrm{red}, \mathrm{blue}) ,
    \,\,
    \mathrm{blue\text{-}ish}(u) \;=\; \Diamond_{>}(\mathrm{blue}, \mathrm{red}) ,
    \,\,
    \Phi(u) \;=\; \Diamond_{>}(\mathrm{red\text{-}ish}, \mathrm{blue\text{-}ish}) .
\]
The inner predicates give every node a derived label, and the outer one compares those derived labels across the neighbourhood, which is exactly two rounds of label propagation.
The receptive field grows accordingly, from one hop to two, and with it the cost of extracting \UPI (\cref{tab:cost-datasets}).

\textbf{\REDBLUELEAFNC.}
This is the node-level counterpart of \REDBLUELEAF, built from the same vocabulary,
\begin{align*}
    \mathrm{leaf}(u) &\;=\; \mathrm{blue}(u) \wedge\Diamond_{=1}\top
        & &\text{a blue node of degree one,} \\
    \mathrm{core}(u) &\;=\; \mathrm{red}(u) \wedge\Diamond_{=2}\,\mathrm{leaf}
        & &\text{a red node with exactly two blue leaves,}
\end{align*}
and closed into the classifier
\[
    \Phi(u) \;=\; \mathrm{core}(u) \wedge \Diamond_{\geq 1}\,\mathrm{core} ,
\]
which fires on a red core adjacent to another red core, and therefore needs three hops to be decided: the neighbour's colour, that neighbour's blue leaves, and the degree of each leaf. This represents a minor modification with respect to \REDBLUELEAF: instead of distinguishing between $\mathrm{end}$, $\mathrm{mid}$ or $\mathrm{midend}$ nodes, we simplified the formula in order to have multiple nodes firing for each planted motif. 
On a background of $300$ nodes we plant $50$ units, each a path of three red cores carrying two blue leaves apiece, whose two endpoints are tied to two random background nodes recoloured green first, for $750$ nodes in total. Because the model uses a 3-hop receptive field, even the central core node integrates multiple background edges outside the motif due to the pair of anchor points.
As in \REDBLUELEAF, green is inert.
Note that the planted unit and the formula deliberately do not coincide: in a path of three cores, the middle core and the two endpoint cores fire for structurally different reasons, so targets inside the same unit have different explanations.

\begin{table}[h!]
    \centering
    \small
    \begin{tabular}{lccccc}
        \toprule
        Dataset & graphs & nodes & edges & \UPI\ edges & \UPI\ fraction \\
        \midrule
        \REDBLUELEAF    & 1000 & 32 & \phantom{0}98.0 & 22.0 & 0.22 \\
        \REDUNDANCY     & 1000 & 37 & 134.0 & \phantom{0}6.0 & 0.04 \\
        \COMPREDUNDANCY & 1000 & 37 & 134.0 & \phantom{0}6.0 & 0.04 \\
        \midrule
        \REDBLUELEAFNC  & \phantom{00}4 & 750 & 2192.0 & 10.7 & 0.29 \\
        \MAJONEHOP      & 3--4 & 500 & 1992.0 & \phantom{0}2.3 & 0.60 \\
        \MAJTWOHOP      & 3--4 & 500 & 1992.0 & 17.1 & 0.43 \\
        \bottomrule
    \end{tabular}
    \caption{
        Size of the datasets, averaged over the positive samples of the $5$ evaluation seeds.
        Nodes and edges are counted per graph, and edges are counted as directed, \ie an undirected edge contributes two.
        For node classification we keep sampling graphs until the dataset holds at least $500$ explainable targets, so the number of graphs is a range over the seeds rather than a fixed size.
        The last column is the fraction of edges that the explainers are asked to recover. For node classification, the denominator is the receptive field of the target.
    }
    \label{tab:datasets}
\end{table}

\section{Additional Details on the Survey}
\label[appendix]{app:survey}

In this section, we describe the procedure by which we collected the relevant information from previous literature summarized in \cref{fig:eval-papers}.
Our literature search was restricted to peer-reviewed venues, specifically CORE $A^*$ conferences, leading journals, and key domain-specific conferences. The complete set of targeted venues comprises:
\begin{itemize}[leftmargin=1.25em]
    \item \textit{Conferences}: AAAI, ACL, AISTATS, CVPR, FAccT, ICDM, ICDE, ICLR, ICML, IJCAI, KDD, LOG, NeurIPS, VLDB, and WWW.
    \item \textit{Journals}: ACM Computing Surveys, ACM Transactions on Intelligent Systems and Technology (ACM TIST), Proceedings of the ACM on Programming Languages (PACMPL), Nature Communications, Scientific Data, Transactions on Machine Learning Research (TMLR), and IEEE Transactions on Pattern Analysis and Machine Intelligence (TPAMI)
\end{itemize}
We intentionally included LOG and FAccT given their direct domain relevance to graph machine learning and algorithmic transparency.

%
We discarded works targeting non-standard explanation modalities (e.g., global decision boundaries \citep{wang2024gnnboundary}) where faithfulness and plausibility metrics cannot be meaningfully applied. 
%
%
We divided each entry according to the type of evaluation used in the paper: plausibility, faithfulness, both, or none of them (a minority of papers used ad-hoc metrics or entirely different objectives). For each paper, we annotate the following information: type of paper (whether it is a methodological paper, a survey, a benchmark, a critique, applied, or a mix thereof), whether the authors make sure the underlying model has learned the intended ground truth, the type of explanation provided (global vs local, post-hoc vs ante-hoc), the target tasks (node classification, graph classification, etc.), and the datasets used in the evaluation.

\section{Implementation details}
\label[appendix]{appx:impl-details}

\begin{table}[h!]
    \centering
    \small
    \caption{\textbf{Hyper-parameter search space of each explainer}, shared across all tasks. $[a,b]_{\log}$ denotes a log-uniform range, $[a,b]$ a uniform one, and $\{\cdot\}$ a categorical choice. The other explainers are run in their default configuration.}
    \label{tab:hparams}
    \begin{tabular}{@{}l p{0.76\linewidth}@{}}
        \toprule
        Explainer & Hyper-parameters and ranges \\
        \midrule
        \gnnexplainer & \texttt{epochs} $\in\{50, 100, 1000\}$; \texttt{lr} $\in[10^{-4}, 10^{-1}]_{\log}$; \texttt{edge\_size} $\in[10^{-4}, 10^{-1}]_{\log}$; \texttt{edge\_ent} $\in[10^{-4}, 10^{-1}]_{\log}$ \\
        \addlinespace
        \graphmask & \texttt{epochs} $\in\{5, 10, 25, 50\}$; \texttt{lr} $\in[10^{-5}, 10^{-2}]_{\log}$; \texttt{penalty\_scaling} $\in\{1,\dots,10\}$; \texttt{allowance} $\in[0, 0.3]$ \\
        \addlinespace
        \subgraphx & \texttt{max\_nodes} $\in\{5,\dots,20\}$; \texttt{min\_atoms} $\in\{3,\dots,\texttt{max\_nodes}\}$; \texttt{sample\_num} $\in\{10, 20, 50\}$; \texttt{rollout} $\in\{3, 5, 10, 20\}$; \texttt{expand\_atoms} $\in\{2,\dots,10\}$; \texttt{local\_radius} $\in\{1, 2, 3\}$ \\
        \addlinespace
        \pgmexplainer & \texttt{num\_samples} $\in\{50, 100, 200, 500\}$; \texttt{perturbation\_mode} $\in\{\texttt{randint}, \texttt{zero}, \texttt{mean}, \texttt{max}\}$; \texttt{significance\_threshold} $\in[10^{-3}, 10^{-1}]_{\log}$; \texttt{pred\_threshold} $\in[0, 0.5]$; \texttt{is\_perturbation\_scaled} $\in\{\texttt{false}, \texttt{true}\}$; \texttt{max\_subgraph\_size} $\in\{1,\dots,50\}$ (graph tasks only) \\
        \addlinespace
        \graphsvx & \texttt{num\_samples} $\in\{100, 200, 400, 800\}$; \texttt{S} $\in\{1,\dots,4\}$; \texttt{args\_coal} $\in\{\texttt{Smarter}, \texttt{Smart}, \texttt{Random}\}$; \texttt{args\_g} $\in\{\texttt{WLS}, \texttt{WLR\_sklearn}\}$; \texttt{use\_abs} $\in\{\texttt{true}, \texttt{false}\}$ \\
        \bottomrule
    \end{tabular}
\end{table}

\textbf{\textit{Hyperparameters tuning of explainers}}:
Hyperparameters are tuned for every task with Optuna \citep{akiba2019optuna} maximising the mean $F_1$ on a dataset generated with a held-out seed, so that no explainer is tuned on the instances it is later scored on. We run $50$ trials per explainer and task, each scored on $50$ instances, and report the searched ranges in \cref{tab:hparams}.

\textbf{\textit{Explainer time budget}}:
Given that some explainers are particularly computationally expensive, we set a time budget of $180$ seconds for explaining individual instances.
During the evaluation of explainers in \cref{sec:experiments}, whenever an instance takes more than $180$ seconds to be explained, the process is halted, and no further samples are explained for that specific seed.
When an explainer does not finish any of the random seeds, we report `\textit{OOT}' in the corresponding table entry.

\subsection{Implementation details of explainers}

For some explainers, we performed some modifications to their implementation in order to adapt them to our evaluation setting. We detail each of the modifications in the following paragraphs, along with the corresponding justification. In every case, these modifications are applied so as not to change the original scope or functionality of the explainer.
Implementations of explainers are taken from PyG \citep{fey2025pyg}, except when otherwise specified.

\textbf{SubgraphX.} \citep{yuan2021explainability} We adopt the implementation provided in \citet{agarwal2023evaluating}\footnote{\url{https://github.com/mims-harvard/GraphXAI}}. For node tasks, after computing the explanation, this implementation applies a filter and return only the attribution on the computational graph of the node. Since it is the only explainer to do so, we removed this filter and compared the explanation on the full graph instead, to make the evaluation identical for all explainers. Because underlying computation of the explainer is inherently restricted to the computation graph, edges falling outside of it are simply zero-padded. Note that this modification does not affect execution time, and the explainer infers the depth of the computation graph directly from the number of message-passing layers in the target model.

\textbf{GNNExplainer.} \citep{ying2019gnnexplainer} GNNExplainer scales every message between node $u$ and $v$ by $\sigma(m_{u,v})$ for a learned logit $m_{u,v}$, and the default initialisation sets $m_{u,v}\approx 0$, so that each message is halved.
We noticed that this can create instabilities for some datasets (like \REDBLUELEAF) in the training of the logits, as halving the count of incoming messages can greatly change the total count of the neighbours satisfying a given sub-formula, even at initialization time. To reduce this effect, we use the bias term introduced in the original implementation\footnote{We note that this bias term is only present in the original codebase available at \url{https://github.com/RexYing/gnn-model-explainer}, and not in the implementation provided in PyG \citep{fey2025pyg}.} provided by \citet{ying2019gnnexplainer}, such that logits are initialized at values around $\sigma(m_{u,v})\sim1$. In this way, the count of satisfying neighbours is roughly preserved at initialisation time. Contrary to the original implementation, this bias term is only used for initializing the logits, and is not injected at every forward step. Then, during training, GNNExplainer will attempt to sparsify the logits as much as possible, following the original implementation. 
We verified that the value of this bias does not drive our insight regarding the sparsification ability of \gnnexplainer described in \cref{sec:existing-insights} by sweeping the additional bias term over $\{0, 1, 2.5, 5\}$ on \REDUNDANCY, where $0$ recovers the unbiased initialisation.
Precision, recall and $F_1$ all remain equal to $1.00\pm0.00$ over the same $5$ seeds, exactly the value reported in \cref{tab:explainers}.

\textbf{\flowx.} \citep{gui2023flowx} We adopt the original \flowx implementation\footnote{\url{https://github.com/CM-BF/FlowX}}. \flowx is a decomposition explainer, which decomposes the contribution of each part of the graphs through \textit{flows}. Besides the standard method, which we adopt, the authors of the paper propose variations that have an additional training step to refine the original explanations. 
We observed that the original implementation of the baseline method had a training loop of a single iteration, inherited from its variations, which did not perform any actual training. Moreover, inside the loop, there were additional operations which produced OOM errors when running \flowx on models with more than three message-passing layers. As such, we modified these operations and removed the calculation of the gradients in order to avoid OOM issues. We compared the original and the new version of the explainer on a single seed of the tasks \MAJTWOHOP and \REDBLUELEAFNC in \cref{tab:flowx_comparison}, showing no distinguishable behaviour besides a small rounding error on the recall in \MAJTWOHOP.

\begin{table}[h!]
    \centering
    \small
    \caption{Comparison in performance of the new implementation of \flowx, reporting mean $\pm$ standard deviation for explanations across sample nodes, for a single seed.}
    \label{tab:flowx_comparison}
    \begin{tabular}{l rr rr}
        \toprule
         & \multicolumn{2}{c}{\MAJTWOHOP} & \multicolumn{2}{c}{\REDBLUELEAFNC} \\
        \cmidrule(lr){2-3}\cmidrule(lr){4-5}
        Explainer & precision & recall & precision & recall \\
        \midrule
        \flowx        & $\nentry{0.888}{0.150}$ & $\nentry{0.942}{0.103}$ & $\nentry{0.994}{0.028}$ & $\nentry{0.998}{0.014}$ \\
        \flowx (original) & $\nentry{0.888}{0.150}$ & $\nentry{0.943}{0.103}$ & $\nentry{0.994}{0.028}$ & $\nentry{0.998}{0.014}$ \\
        \bottomrule
    \end{tabular}
\end{table}

\textbf{\pgmexplainer.} \citep{vu2020pgm} We adapted the implementation of \pgmexplainer provided in \textsc{PyG} \citep{fey2025pyg}. Originally, \pgmexplainer provides as the explanation the $p$-value scores associated with each node for its statistical relevance, together with a binary mask on the $k$ nodes with \textit{lowest} $p$-value (where $k$ is a parameter of the explainer). Since in this work we evaluate exclusively edge-level explanations, we assigned to each edge $e=(u,v)$ an importance score $1 - \max(p_u, p_v)$: in this way, an edge is important only as much as its less important endpoint.

\textbf{\graphsvx.} \citep{duval2021graphsvx} We adopt the original \graphsvx implementation\footnote{\url{https://github.com/AlexDuvalinho/GraphSVX}}. \graphsvx is a perturbation-based explainer that performs attribution for each node in the $k$-hop neighbourhood of the explained node and for each of its features. Since in this work we evaluate exclusively edge-level explanations, we disable the feature players so that the whole coalition budget is spent on the graph structure, and, as for \pgmexplainer, we assign to each edge $e=(u,v)$ the score $\min(|\phi_u|, |\phi_v|)$, where $\phi$ are the estimated Shapley values; the explained node, which is not a player, receives the maximum score of its neighbourhood. Moreover, the original implementation assumes dense adjacency matrices for graph classification tasks, incompatible with the sparse encoding of \textsc{PyG}; we adapted the code to work with sparse matrices as well, keeping the original coalition sampling and regression. Because it assigns equal importance to both edge directions, its maximum achievable precision is bounded at $50\%$ on \MAJONEHOP, \REDUNDANCY, and \COMPREDUNDANCY, as relevant edges in these datasets contribute exclusively in one direction. Nonetheless, across all these tasks, the explainer reaches a precision of $\approx50\%$, implying that it successfully localizes relevant nodes, but its inherent design limitations prevent it from distinguishing fine-grained edge directionality. Similar reasoning applies to \MAJTWOHOP, where the explainer reaches $56\%$ precision: although both directions of an edge can be important, typically only one contributes, meaning its theoretical maximum is only marginally higher. The same theoretical cap applies to \pgmexplainer and \subgraphx as well, though both achieve significantly lower precision than \graphsvx in practice.

\textbf{\lrp.} \citep{bach2015pixel} We adopt the \lrp implementation provided in \textsc{Captum} \citep{kokhlikyan2020captum}, through the \textsc{Captum} interface of \textsc{PyG}. \lrp propagates the model output backwards through the network according to layer-specific rules.
We adopted the baseline epsilon rule for all the layers of our compiled models, the same rule \textsc{Captum} applies by default.

\section{Proof of \cref{thm:extract}}
\label[appendix]{appx:proof-extract}

\algupi*
\begin{proof}
    We will analyze the two directions separately:

    \textbf{($\leftarrow$)}
    
    We aim to show that if $e \in \UPI$, then $\fext \wedge x_e \wedge \neg\cof{\fext}{e}{0} \; \ne 0$.
    Then, by assumption, $e$ belongs to a PI, say $M$.
    We note that $M$ is by itself a witness of $\fext \wedge x_e \wedge \neg\cof{\fext}{e}{0} \; \ne 0$:
    \begin{itemize}
        \item $M$ is sufficient by definition, hence $\fext(M) = 1$
        \item $e \in M$ by assumption, hence $x_e=1$
        \item $M \setminus \{e\}$ is not sufficient, since by definition of PI there exists no subset of $M$ that is also sufficient, hence $\cof{\fext}{e}{0}$ evaluates to $0$ at $M$, and $\neg\cof{\fext}{e}{0}$ to $1$
    \end{itemize}
    Hence, $\fext \wedge x_e \wedge \neg\cof{\fext}{e}{0} \; \ne 0$ since $M$ is a valid witness.

    \textbf{($\rightarrow$)}

    We aim to show that if $\kappa \defeq (\fext \wedge x_e \wedge \neg\cof{\fext}{e}{0}) \; \ne 0$, then $e \in \UPI$.
    The fact that $\kappa \ne 0$ means there must exist a subgraph $R$ such that the following hold together:
    \begin{itemize}
        \item[(H1)] $R$ is sufficient,
        \item[(H2)] edge $e$ is part of $R$,
        \item[(H3)] $R \setminus \{e\}$ is not sufficient.
    \end{itemize}
    Let us define $D \defeq \{S: S \subseteq R \land \fext(S)=1\}$. Note that $D \ne \emptyset$ as $R \in D$.
    Since $D$ is finite, there must be an element $R'$ with the smallest cardinality. Since $R'$ has the smallest cardinality, it is also the smallest sufficient subgraph in $D$, hence $R'$ is a PI.

    We now want to show that $e$ must be in $R'$, meaning $e \in \UPI$.
    Let us assume to the contrary that $e \not \in R'$. Then, $R' \subseteq R \setminus \{e\}$. However, $R'$ is sufficient since it is a PI, and therefore any of its supergraphs must also be sufficient, including $R\setminus \{e\}$.
    This contradicts the hypothesis (H3), meaning it was absurd to assume $e \not \in R'$. Hence, $e \in \UPI$.
\end{proof}

Note that the restriction to positive graphs is without loss of generality.
For a negative graph, \proc{UnionPI} runs \proc{Extend} on $\neg f$ (line~\ref{line:ok} of \cref{alg:main}), for which $G$ is positive; since \cref{def:piexpl} compares $g(R')$ with $g(G)$, a subgraph is sufficient for $\neg f$ iff it is for $f$, hence $\UPI(\neg f, G)=\UPI(f, G)$.
An interesting edge case of \cref{thm:extract} is whenever \UPI is empty. Note that this can only happen when the prediction on graph $G$ is invariant to the removal of any subset of edges, meaning $\emptyset$ is a PI. Cases where this happens are negative instances of \REDUNDANCY, or tasks like \textit{the graph has an isolated node}. The following Corollary shows that \cref{alg:extract} will return precisely an empty mask in this case:

\begin{corollary}
\label{cor:well-behaved}
    Under the same assumptions of Theorem~\ref{thm:extract}:
    \begin{equation}
    \label{eq:corollary}
        \big ( \forall e\in \mathcal{E}.\, \fext \wedge x_e \wedge \neg\cof{\fext}{e}{0} \; = 0 \big ) \iff \UPI(f, G) = \emptyset.
    \end{equation}
\end{corollary}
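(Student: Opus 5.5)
The corollary follows from \cref{thm:extract} applied edge by edge, together with the observation that $\UPI(f,G)$ is a union of PIs and so is empty exactly when no edge lies in any PI. The plan is to prove the biconditional through its contrapositive form, since \cref{thm:extract} gives, for each fixed $e\in\calE$, the equivalence $\kappa_e \ne 0 \iff e\in\UPI(f,G)$, where $\kappa_e \defeq \fext\wedge x_e\wedge\neg\cof{\fext}{e}{0}$.

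\textbf{Step 1 ($\Rightarrow$).} Assume $\kappa_e=0$ for every $e\in\calE$. Suppose, for contradiction, that $\UPI(f,G)\neq\emptyset$, and pick some $e\in\UPI(f,G)$. Since $\UPI(f,G)\subseteq\calE$, this $e$ is an edge of $G$. The ($\leftarrow$) direction of \cref{thm:extract} then gives $\kappa_e\neq0$, which is a contradiction.

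\textbf{Step 2 ($\Leftarrow$).} Assume $\UPI(f,G)=\emptyset$. For any $e\in\calE$ we have $e\notin\UPI(f,G)$. The ($\rightarrow$) direction of \cref{thm:extract}, taken in contrapositive form, yields $\kappa_e=0$. Since $e$ was arbitrary, the left-hand side of \cref{eq:corollary} holds.

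I expect no real obstacle; the only care needed is to confirm that the hypotheses of \cref{thm:extract} are exactly the ones assumed here, in particular $G\models\Phi$ and that $\fext$ characterizes sufficiency. For negative instances, the same argument goes through after replacing $f$ by $\neg f$, as the remark following the theorem notes, because sufficiency, and hence $\UPI$, is unchanged by that substitution.

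It is also worth a brief sanity remark on the degenerate case $\UPI=\emptyset$: this happens exactly when $\emptyset$ is itself a PI. It can arise only because $\emptyset$ is sufficient, in which case every $R$ is sufficient and $\fext\equiv1$. Then $\neg\cof{\fext}{e}{0}\equiv0$, so $\kappa_e=0$ directly, which agrees with Step 2 and gives an independent check.
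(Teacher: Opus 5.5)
Your proposal is correct and follows essentially the same route as the paper: both directions come from applying Theorem~\ref{thm:extract} edge by edge. The paper phrases each direction as a proof by contradiction, whereas you use contradiction for one direction and the contrapositive for the other; your closing sanity check (an empty union of PIs forces the empty subgraph to be sufficient, so the extended function is constantly $1$ and every witness formula vanishes) is also correct but not needed.
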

\begin{proof}
    Both directions are immediately proved by contradiction with Theorem~\ref{thm:extract}.
    On one hand, let us assume $\UPI(f, G) = \emptyset$ to be true.
    Assume for the sake of contradiction that \cref{eq:corollary} is false, meaning that, $ \exists e \in \calE . \fext \wedge x_e \wedge \neg\cof{\fext}{e}{0} \; \ne 0$, then by \cref{thm:extract} $\UPI(f, G)$ must be non empty, leading to a contradiction.
    On the other hand, let us assume the left side of \cref{eq:corollary} to be true.
    Assume for the sake of contradiction that the right side is false, meaning that $\UPI(f, G) \ne \emptyset$, then there must exist an $e \in \UPI(f, G)$: but by \cref{thm:extract}, this implies $\fext \wedge x_e \wedge \neg\cof{\fext}{e}{0} \; \ne 0$, which represents a contradiction.
\end{proof}

\end{document}